%% file: SGDInitialRegularization.tex
\newcommand{\titt}{Stochastic gradient descent with initial regularization}

\newcommand{\commentt}[2]{#1}
\newcommand{\comment}[1]{}

\newcommand{\var}{{\rm Var}}
\newcommand{\tr}{{\rm tr}}

\commentt{
\newcommand{\citet}{\citeasnoun}
\documentclass[a4paper,11pt]{article}
\usepackage{graphicx,amsthm}
\usepackage[full]{harvard}

\usepackage[ps,dvips]{xy}
\title{\titt}
\author{Nabil Kahal\'e
\thanks{\emph{ESCP Business School, Paris, France; {e-mail: }{nkahale@escp.eu}.}}}
\date{}
\usepackage{amstext}
\usepackage{amssymb}
\usepackage{amsmath}
\setcounter{equation}{0}
\usepackage{setspace}
\usepackage[margin=1in]{geometry}
\onehalfspacing
\usepackage[english]{babel}
\bibliographystyle{dcu}
\usepackage{color}
\usepackage{textcomp}
\usepackage{varioref}
\usepackage{algpseudocode}
\usepackage{algorithm}

\usepackage{xspace}
\usepackage{pgfplots}
\usepackage{caption,subcaption}

\begin{document}

\newtheorem{theorem}{Theorem}
\newtheorem{remark}{Remark}
\newtheorem{lemma}{Lemma}
\newtheorem{proposition}{Proposition}
\newtheorem{corollary}{Corollary}
\maketitle
\newcommand{\citep}{\cite}
}
{
%
%


\RequirePackage{bm}
\RequirePackage{endnotes}

\OneAndAHalfSpacedXI



\usepackage{algorithm}
\usepackage{algpseudocode}
\usepackage{tikz}
\usetikzlibrary{matrix}

\usepackage[sort&compress]{natbib}
 \bibpunct[, ]{[}{]}{,}{n}{}{,}%
 \def\bibfont{\small}%
 \def\bibsep{\smallskipamount}%
 \def\bibhang{24pt}%
 \def\newblock{\ }%
 \def\BIBand{and}%

\EquationsNumberedThrough    

\TheoremsNumberedThrough     
\ECRepeatTheorems  %

\MANUSCRIPTNO{}


\usepackage{pgfplots}
\usepackage{caption,subcaption}
\begin{document}
\RUNAUTHOR{Kahal\'e}
\RUNTITLE{}
\TITLE{\titt}
\ARTICLEAUTHORS{%
\AUTHOR{Nabil Kahal\'e}

\AFF{ESCP Business School, Paris, France, \EMAIL{nkahale@escp.eu} \URL{}}
} 
\KEYWORDS{\keyy}
}
\begin{abstract}
We analyze a variant of stochastic gradient descent with initial regularization (SGDIR) and derive dimension-free upper bounds on its expected excess risk for the squared loss. In the noiseless case, we obtain new bounds  for both averaged and non-averaged SGDIR  under   moment, source, and capacity assumptions. For a particular value of the source parameter, these bounds are of order \(m^{-2}\log^{2}m\), where the number of training samples is of order \(m\). For  another value of the source parameter, we obtain, for any \(\epsilon>0\),   bounds  of order  \(m^{-3+\epsilon}\), provided that the capacity parameter exceeds \(\epsilon^{-1}\). \comment{ In the noiseless case, our bounds for both averaged and non-averaged SGDIR are, in certain regimes, either strictly sharper than existing bounds under standard moment, source, and capacity assumptions or valid under weaker assumptions.}We also establish a lower bound that matches our upper bounds in certain regimes up to a polylogarithmic factor. In the noisy case, we provide an instance-based comparison between SGDIR and ridge regression. Under general assumptions and a mild lower bound on the regularization parameter, we show that the expected excess risk of SGDIR is no larger than that of ridge regression, up to a polylogarithmic factor. Numerical experiments on synthetic and real data are consistent with  our theoretical findings.
\end{abstract}
Keywords: stochastic gradient descent, noiseless model, ridge regression, least-squares
\section{Introduction}
A central question in machine learning is how to fit a model from copies of a feature-response pair \((x,y)\)  so that it generalizes well to unseen data. Beyond generalization performance, the running time, space complexity, and degree of parallelism of the learning algorithm are also important considerations in practice.
  In this paper, we assume that  \((x,y)\) is a square-integrable random vector in  \(\mathcal{H}\times\mathbb{R}\), where  \(\mathcal{H}\) is a separable Hilbert space over \(\mathbb{R}\), equipped with the inner product \(\langle \cdot, \cdot \rangle\).   Define the  loss function \(L(\theta):=\frac{1}{2}E[(y-\langle x,\theta\rangle)^{2}]\)  for  any \(\theta\in\mathcal{H}\), and let  \((x_{t},y_{t})_{t\geq0}\) be  independent copies of \((x,y)\). Our objective is to approximately minimize \(L\) using the training sequence  \((x_{i},y_{i})\), \(0\leq i\leq n-1\), where \(n\) is the sample size. This problem has been extensively studied using two main approaches: stochastic gradient descent (SGD)  and ridge (or least-squares) regression.

The asymptotic properties of averaged SGD with constant step size were first studied by \citet{ruppert1988efficient} and \citet{polyak1992acceleration}. In the  non-strongly convex setting, \citet{bachMoulines2013non} show that the expected excess risk of averaged SGD is \(O(1/n)\). Under strong convexity of \(L\), \citet{jain2018parallelizing} analyze tail-averaged and mini-batch SGD in Euclidean spaces and derive asymptotically optimal bounds. More recently, \citet{kahale2026UnbiasedLeastSquares} introduces an unbiased SGD-like estimator of the minimizer of \(L\) and derives convergence bounds that match those of \citet{bachMoulines2013non} up to a polylogarithmic factor.   
 
The aforementioned studies concern finite-dimensional settings. In practice,  the number of features can be very large, or even infinite, as in reproducing kernel Hilbert spaces (RKHSs). We next review work  establishing dimension-independent bounds on the expected excess risk.  In the  RKHS framework, \citet{vito2007optimal} and \citet{BachDieuleveut2016} derive convergence bounds for kernel ridge regression (KRR) and averaged SGD, respectively, that are asymptotically optimal under suitable assumptions.     \citet{BachLeastSquaresJMLR2017} derive similar bounds for regularized averaged SGD in Hilbert spaces. Improved bounds on the expected excess risk in hard learning problems have been established by  \citet{BachPillaudNIPS2018}   using SGD with multiple passes and by \citet{jun2019kernel}   using a variant of KRR.  Likewise, \citet{mucke2019beating} derive improved bounds for tail-averaged SGD in Hilbert spaces under certain regularity assumptions and also study minibatching.
Finally, under suitable moment assumptions, 
\citet{kakade2023benign}  derive
 sharp excess risk bounds for  constant-step-size SGD with iterate averaging or tail averaging  in terms of the full eigenspectrum of the covariance operator.

Motivated by the double-descent phenomenon and the near-zero training loss achieved by over-parameterized neural networks, a growing literature has analyzed SGD, linear regression, and ridge regression in over-parameterized settings. In particular, the last iterate of constant-step-size SGD has received considerable attention in this context, given the success of SGD in training over-parameterized models.
For instance, \citet{bartlett2020benign} and \citet{hastie2022surprises} analyze the prediction accuracy of the minimum-norm interpolator in over-parameterized linear regression, while \citet{tsigler2023benign} provide analogous results for ridge regression. \citet{zdeborova2022generalization} analyze the performance of KRR under a low-noise Gaussian design. \citet{ma2018power} establish exponential convergence bounds for the last iterate of SGD  under noiselessness and strong convexity. In a noiseless Hilbert space setting, \citet{bach2020tight} and \citet{Flammarion2021last} derive tight convergence bounds for non-averaged SGD.  \citet{Kakade2024scaling} provide  sharp convergence bounds for the  test error of SGD in terms of the number of parameters and training samples in an infinite dimensional linear regression
setup. In zero- or low-noise settings, \citet{attia2026fast} establish bounds on the expected excess risk of the last iterate of constant-step-size SGD for smooth convex objectives.

\subsection{Contributions}
Dimension-free convergence rates for ridge regression and SGD are often established under \emph{capacity} and \emph{source} conditions, which are formally defined in Section~\ref{sub:Assumptions}. In this paper, we study SGDIR, an      SGD method with initial regularization.
Specifically, given a regularization parameter \(\Lambda > 0\), a constant step size \(\gamma > 0\), and an integer \(m \geq 3\), we generate the sequence \((\theta_{t})_{t \geq 0}\) by the recursion
\begin{equation}\label{eq:recTheta}
\theta_{t+1} = (1 - \gamma \lambda_{t}) \theta_{t} - \gamma \big( \langle x_{t}, \theta_{t} \rangle - y_{t} \big) x_{t},
\end{equation}
where \(\lambda_{t} = \mathbf{1}(t < m)\Lambda\) and the initial vector is \(\theta_{0} = 0\). 
We  use the tail-averaged estimator  \(\bar\theta_{2m,3m}\) to approximately minimize \(L\), where  \(\bar u_{i:j}:=(j-i)^{-1}\sum^{j-1}_{t=i}u_{t}\) for  \(0\leq i< j\) and any sequence \((u_{t})_{t\geq0}\)  in \(\mathcal{H}\).   In the particular case where  \(\sigma=0\), we will also use \(\theta_{N}\) to approximately minimize \(L\), where \(N\) is drawn uniformly at random in \(\{2m,\dots,3m-1\}\).   

Our main contributions are as  follows. Let  \(\lambda\) denote the regularization parameter in   ridge regression. \begin{enumerate}
\item In the noiseless case, under appropriate capacity and source conditions, we establish dimension-free upper bounds on the expected excess risk of   \(\bar\theta_{2m,3m}\) and  \(\theta_{Q}\)   that, up to a \(\log^{2}m\)  factor, are no worse than existing bounds and are strictly sharper in certain regimes. In particular, for a certain value of the source parameter, we obtain bounds of order \(m^{-2}\log^{2}m\). Moreover, for any \(\epsilon>0\),  we obtain bounds of order  \(m^{-3+\epsilon}\) for  another value of the source parameter, provided that the capacity parameter is of order \(\epsilon^{-1}\) or larger. To the best of our knowledge, no previous algorithm has been shown to achieve such a tradeoff between sample size and accuracy under comparable assumptions.\item 
Under general assumptions, we show that, for every problem instance with  \(n\)  training samples and every regularization parameter  \(\lambda\) of order \(1/n\) or larger, explicit choices  of \(\gamma\) and \(\Lambda\) ensure that the expected excess risk of SGDIR is at most a factor of \(\log^{2}n\) larger than that of ridge regression. This factor improves to    \(\log n\) when \(\lambda\) is of order \(\sqrt{\log n}/n\) or larger, and to a constant    in the noiseless case when  \(\lambda\) is of order   \((\log n)/n\) or larger. Such choices of  \(\lambda\) are common in the analysis of ridge regression (e.g.,   \citet{vito2007optimal} and  \citet[Section 7.6.6]{bach2024learning}). Our proof technique also yields a lower bound on the expected excess risk of ridge regression that is of independent interest. \item In the noiseless case, we derive a minimax lower bound on the expected excess risk of any estimator of a minimizer of \(L\) based on \(n\) training samples. Our lower bound is new, to the best of our knowledge. For a range of values of the source parameter, it matches our upper bound up to a polylogarithmic factor, and its exponent depends only on the source parameter.
\end{enumerate}

\subsection{Other related work}

\comment{When \(\mathcal{H}\) has finite dimension \(d\), standard implementations of  SGD and SGDIR based on \(n\) samples require \(O(nd)\) time and \(O(d)\) space, whereas a standard implementation of ridge regression takes \(O(nd^{2}+d^{3})\) time and \(O(d^{2})\) space.} In an RKHS, assuming that each kernel evaluation takes constant time,  SGD based on \(n\) samples can be implemented  using \(O(n^{2})\) time and \(O(n)\) space \cite[Section 7.4.5]{bach2024learning}, and a straightforward adaptation of the same  approach yields the same complexities for SGDIR. In contrast, the standard algorithm for kernel ridge regression   requires \(O(n^{3})\) time and \(O(n^{2})\) space \cite[Eq.~7.7]{bach2024learning}. More efficient random-feature implementations of KRR and SGD are analyzed for a class of kernels by \citet{rudi2017generalization} and \citet{rosasco2018learning}, respectively, and alternative implementations of KRR can be found  in \citet[Section 7.4]{bach2024learning}.

Recent work has compared the performance of ridge regression, SGD, and gradient descent. For example, \citet{kakade2021benefits} provide an instance-based comparison of the generalization error of tail-averaged SGD and ridge regression in several settings, including one-hot and Gaussian distributions.
In these settings, they show that tail-averaged SGD generalizes no worse than ridge regression, up to a logarithmic factor. In contrast, our comparison between SGD and ridge regression does not make distributional assumptions on  \(x\) beyond standard moment conditions, but requires  a lower bound on the regularization parameter.      \citet{Kakade2025risk} establish conditions under which gradient descent dominates ridge regression and is incomparable with SGD.
 Beyond machine learning, kernel methods have a range of applications, including optimization    \cite{bertsimas2022data} and numerical integration \cite{rosacso2025efficient}.
Our lower bounds for ridge regression are inspired from the financial engineering literature on model-free bounds, which has been the subject of numerous studies (e.g., \cite{BP02,kahale2017}). 

The rest of the paper is organized as follows. Section~\ref{se:Notation} introduces our notation and main assumptions. Section~\ref{se:ExpectedRisk} derives upper bounds on the expected excess risk of SGDIR. Section~\ref{se:SourceCapacity} establishes convergence bounds under source and capacity assumptions and provides a more technical comparison with the prior literature. Section~\ref{se:ridgeComparison} compares SGDIR with ridge regression. Section~\ref{se:lowerBound} derives a lower bound on the expected excess risk of any algorithm. Section~\ref{se:numer} presents numerical experiments, and Section~\ref{se:conclusion} concludes. Omitted proofs are provided in the appendix.
\section{Notation and main Assumptions}\label{se:Notation}
For \(u, v \in \mathcal{H}\),  denote by \(u \otimes v\) the linear operator on \(\mathcal{H}\) defined by
 \((u\otimes v)w=\langle v,w\rangle u\), for \(w\in\mathcal{H}\). Thus, when \(\mathcal{H}\) is equal to \(\mathbb{R}^{d}\) equipped with the usual scalar product \(\langle u, v \rangle=u^{T}v\), the matrix of  \(u \otimes v\)  is \(uv^{T}\).
If \(A\) and \(B\) are self-adjoint operators on \(\mathcal{H}\), we say that \(A\preccurlyeq B\) (resp.  \(A\succcurlyeq B\)) if  the operator \(B-A\) (resp. \(A-B\)) is positive semidefinite. For any operator $M$ on \(\mathcal{H}\),  we use the shorthand $M+\lambda$ to denote $M+\lambda I$.
Given a self-adjoint positive semidefinite operator \(M\) on \(\mathcal{H}\), let \(||v||_{M}:=\sqrt{\langle v, Mv\rangle}\) for \(v\in\mathcal{H}\). If \(\psi\) is a square-integrable random vector in  \(\mathcal{H}\), set \(\var_{M}(\psi)=E[||\psi||^{2}_{M}]-||E[\psi]||^{2}_{M}\).   Standard results show that  the noncentered covariance operator  \(\Sigma:=E[x\otimes x]\)  is   positive semidefinite, self-adjoint, and  trace-class,  with \(\tr(\Sigma )=E[||x||^{2}]\). Consequently, there exists an orthonormal basis \((e_i)_{i\in J}\) of \(\mathcal H\) consisting of eigenvectors of \(\Sigma\), where \(J=\{1,\dots,d\}\) if \(\mathcal{H}\) is \(d\)-dimensional and \(J=\mathbb{N}-\{0\}\) otherwise. Thus  \(\Sigma e_{i}=\mu_{i}e_{i}\)  for \(i\in J\), where \(\mu_{i}\geq0\), and \(\sum_{i\in J}\mu_{i}=\tr(\Sigma)\). We  suppose that the \(\mu_{i}\)'s are sorted in nonincreasing order.\comment{When \(J\) is infinite, \(\mu_{i}\) goes to \(0\) as \(i\) goes to infinity.}

For the remainder of this section and throughout Sections~\ref{se:ExpectedRisk} and \ref{se:SourceCapacity}, we assume the following.

    \begin{description}
\item[Assumption A1.] The function \(L\) attains its minimum at a  vector \(\theta^{*} \in \mathcal{H}\), and there are constants \(R > 0\) and \(\sigma\geq0\) such that\begin{equation} \label{eq:R2Assumption}
E\left[\|x\|^2 x\otimes x \right] \preccurlyeq\ R^2 \Sigma,
\end{equation}
\begin{equation}\label{eq:SigmaAssumption}
E[(y - \langle x, \theta^{*} \rangle)^{2} x\otimes x]\preccurlyeq\sigma^{2}\Sigma.
\end{equation}   
\end{description}
Because \(L\) attains its minimum at \(\theta^{*}\), a standard calculation shows the  \emph{normal equation} 
\begin{equation}\label{eq:NormalEq}
\Sigma\theta^{*}=E[yx].
\end{equation}Conditions    \eqref{eq:R2Assumption}   and   \eqref{eq:SigmaAssumption}  are taken from \citet{bachMoulines2013non}. It is easy to verify   \eqref{eq:R2Assumption}  if \(\|x\|\leq R\) with probability \(1\). It follows from   \eqref{eq:R2Assumption}  that \(\tr\left[\Sigma\right]\leq R^{2}\) \cite{BachLeastSquaresJMLR2017}. Hence\begin{equation}\label{eq:SigmaLeR^2}
0\preccurlyeq\Sigma\preccurlyeq R^{2}I.
\end{equation}Condition  \eqref{eq:SigmaAssumption} holds if \(|y - \langle x, \theta^{*} \rangle|\leq\sigma\) with probability \(1\). Moreover, if the error term \(y - \langle x, \theta^{*} \rangle\) is independent of \(x\), then  \eqref{eq:SigmaAssumption}   holds with \(\sigma^{2}=E[(y - \langle x, \theta^{*} \rangle)^{2} ]\).  

We further assume that  \(\gamma\) and \(\Lambda\) satisfy the bound\begin{equation}\label{eq:GammaUpBound}
\gamma(R^{2}+\Lambda)\leq1.
\end{equation}
Condition~\eqref{eq:GammaUpBound} is approximately a factor of 2 weaker than a corresponding condition in \citet[Theorem~2]{BachLeastSquaresJMLR2017} for regularized SGD.
 Since \(\gamma\Lambda\rightarrow0\) as \(m\rightarrow\infty\) in our main results,    \eqref{eq:GammaUpBound} essentially requires \(\gamma\) to be of order \(1/R^{2}\) or smaller.  
  Similar step-size conditions are commonly imposed in the analysis of SGD algorithms for least-squares regression \cite{bachMoulines2013non,BachDieuleveut2016,kahale2026UnbiasedLeastSquares}.

For \(\theta\in\mathcal{H}\), let \(\mathcal{R}(\theta):=L(\theta)-L(\theta^{*}) \) denote the excess risk of \(\theta\). Using   \eqref{eq:NormalEq},  a standard calculation shows that   \(\mathcal{R}(\theta)=\frac{1}{2}||\theta-\theta^{*}||_{\Sigma}^{2}\)   and that   the regularized loss function  \(L_{\lambda}(\theta):=L(\theta)\) + \(\lambda||\theta||^{2}/2\) minimized  at \(\theta_{\lambda}:= (\Sigma+\lambda )^{-1}\Sigma\theta^{*}\).
When \(t < m\) (resp. \(t \geq m\)), the recursion~\eqref{eq:recTheta} corresponds to the standard SGD  applied to \(L_{\Lambda}\) (resp. \(L\)). This observation provides  intuition behind our approach.  Assume for simplicity that \(\sigma=0\).  The smallest eigenvalue of the Hessian matrix  \(\Sigma+\Lambda\) of  \(L_{\Lambda}\)  is  at least \(\Lambda\). By  \cite[Theorem 1]{jain2018parallelizing}) and  \cite[Proposition SM.1]{kahale2026UnbiasedLeastSquares},  this suggests that \(E[||\theta_{m}-\theta_{\Lambda}||^{2}]\) is upper bounded by an exponentially decreasing function of \(m\), with a rate proportional to \(\Lambda\). Thus, if \(\Lambda\) is sufficiently large,  the last \(2m\) steps of our algorithm correspond to an unregularized SGD with an initial state ``close'' to  \(\theta_{\Lambda}\). Moreover, if \(\Lambda\) is sufficiently small,   \(\theta^{*}\)  is intuitively  ``closer'' to  \(\theta_{\Lambda}\) than to \(0\). This suggests that, for a suitable choice of \(\Lambda\), our algorithm forgets initial conditions faster  than an unregularized SGD with initial state  \(0\) and \(2m\) steps. We stress that the above argument is informal and will not be used in our formal derivations.

Let \(\psi\) be a (possibly biased)
square-integrable estimator of \(\theta^{*}\), and, for \(m'>0\), let  \(\psi^{(m')}\) be the average of \(m'\) independent copies of \(\psi\). We can bound the expected excess risk of \(\psi\)  by bounding separately each term of the standard bias-variance  decomposition \(2E[\mathcal{R}(\psi)]=||E[\psi]-\theta^{*}||_{\Sigma}^{2}+\var_{\Sigma}(\psi)\). We refer to the first term as the squared bias of \(\psi\) and to the second as its variance. Since \begin{equation}\label{eq:modelAvg}
2E[\mathcal{R}(\psi^{(m')})]=||E[\psi]-\theta^{*}||_{\Sigma}^{2}+\frac{\var_{\Sigma}(\psi)}{m'}, \end{equation}this approach  also provides a bound on the  expected excess risk of \(\psi^{(m')}\).  
\subsection{Reproducing kernel Hilbert space}\label{sub:kernel}
We assume in this subsection that \(\mathcal{H}\) is a separable  RKHS on \(\mathbb{R}\). Thus, the elements of  \(\mathcal{H}\) are functions from a set \(\mathcal{X}\) to \(\mathbb{R}\),  and there is a positive definite function \(K: \mathcal{X}\times \mathcal{X}\rightarrow\mathbb{R}\) such that \(K(a,.)\in\mathcal{H}\) and \(f(a)=\langle f, K(a,.)\rangle\) for  \(f\in\mathcal{H}\) and \(a\in \mathcal{X}\). 
Assume that \(K(a,a)\leq R^{2}\) for \(a\in\mathcal{X}\), where \(R>0\) is a constant, and that \(\mathcal{X}\) is endowed with a \(\sigma\)-algebra such that \(z\mapsto K(z,z)\) is a measurable function on \(\mathcal{X}\).
Let \((X,y)\) be a random pair taking values in \(\mathcal{X\times \mathbb{R}}\) such that \(y\) is square-integrable. Assume there is   \(f_{*}\in\mathcal{H}\) with \(E[y|X]=f_{*}(X)\) and\begin{equation}
E[(y-f_{*}(X))^{2}|X]\leq \sigma^{2},
\end{equation}  where  \(\sigma\geq 0\) is a constant. Finally, let  \(x=K(X,.)\) be the random function induced by \(X\).  A standard calculation shows that \((x,y)\) satisfies Assumption A1, with \(\theta^{*}=f_{*}\), and that   \(L(f)=\frac{1}{2}E[(y-f(X))^{2}]\) for \(f\in \mathcal{H}\).
Thus  approximately minimizing \(L\) amounts to approximating  \(y\) by the random variable   \(f(X)\), where \(f\in\mathcal{H}\). 
The assumption that \(\theta^{*}\in\mathcal{H}\) was made to avoid additional technicalities. Modeling frameworks that do not require  \(\theta^{*}\in\mathcal{H}\) are common in the  RKHS  literature (e.g., \citet{BachDieuleveut2016}).  The terms involving \(\theta^{*}\)  in the bounds of Lemma \ref{le:TotalBiasAtT}, Theorem~\ref{th:noiseless}, Proposition \ref{pr:SimpleBound}, Theorem~\ref{th:Main} and Theorem~\ref{th:Ridge} are upper bounded by \(\mathcal{R}(0)\), up to an absolute constant factor. These results, as well as Theorem~\ref{th:noiselessArAc}, can be extended to the case where  \(\theta^{*}\notin\mathcal{H}\) by adapting the approach of \citet{BachDieuleveut2016}.    
\section{Bounding the expected excess risk}\label{se:ExpectedRisk}
In this section, we first define a noiseless version  \(w_{t}\) of \(\theta_{t}\),  \(t\geq0\),    and establish three alternative bounds on the  expected excess risk of \(w_{N}\). We then provide a bound on the expected excess risk of \(\bar\theta_{2m:3m}\). \subsection{The noiseless process}Define the sequence \(w_{t}\), \(t\geq0\), by the recursion \begin{equation}\label{eq:RecW_t}
w_{t+1}=(1-\gamma\lambda_{t})w _{t}-\gamma \langle x_{t},w _{t}-\theta^{*}\rangle x_{t},
\end{equation} 
with initial condition \(w_{0}=0\). This recursion is obtained by replacing \(y_{t}\) with \(\langle x_{t},\theta^{*}\rangle\) in \eqref{eq:recTheta}.  Hence \(w_{t} = \theta_{t}\) almost surely for  \(t \ge 0\) when \(\sigma = 0\). Lemma~\ref{le:TotalBiasAtT} provides a simple bound on the expected excess risk of \(w_N\).

\begin{lemma}\label{le:TotalBiasAtT}
Suppose \(m \ge 3\), Assumption~A1 and \eqref{eq:GammaUpBound} hold, and \(\Lambda \ge \log(m)/(\gamma m)\). Then
\begin{equation}\label{eq:TotalBiasAtT}
 E[\mathcal{R}(w_N)]\leq8\Lambda^{2}||(\Sigma+\Lambda)^{-1}\theta^{*}||^{2}_{\Sigma}.
\end{equation}
\end{lemma}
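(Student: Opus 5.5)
The plan is to work with the error $e_t:=w_t-\theta^{*}$ and to treat the regularized phase ($t<m$) and the unregularized phase ($m\le t<3m$) separately. Set $v:=(\Sigma+\Lambda)^{-1}\theta^{*}$, so that $\theta^{*}-\theta_\Lambda=\Lambda v$ and the target is $B:=\Lambda^{2}\|v\|_\Sigma^{2}$. From \eqref{eq:RecW_t}, for $t<m$ the shifted error $u_t:=w_t-\theta_\Lambda$ satisfies
\[
u_{t+1}=(I-\gamma\Lambda-\gamma x_t\otimes x_t)u_t+\xi_t,\qquad
\xi_t:=\gamma\bigl[(x_t\otimes x_t)(\theta^{*}-\theta_\Lambda)-\Lambda\theta_\Lambda\bigr].
\]
The normal equation $\Sigma\theta^{*}=(\Sigma+\Lambda)\theta_\Lambda$ gives $E[\xi_t]=0$. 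By \eqref{eq:R2Assumption}, $E\|\xi_t\|^{2}\le\gamma^{2}R^{2}\|\theta^{*}-\theta_\Lambda\|_\Sigma^{2}=\gamma^{2}R^{2}B$. Hence the noise driving phase~1 is already measured by the target quantity $B$, and this fact is what I want to exploit.

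For phase 1, I will use \eqref{eq:GammaUpBound} together with \eqref{eq:R2Assumption} to show
\[
E\|(I-\gamma\Lambda-\gamma x\otimes x)u\|^{2}\le(1-\gamma\Lambda)\|u\|^{2}-c\gamma\|u\|_\Sigma^{2}.
\]
The cross term $E\langle (I-\gamma\Lambda-\gamma x\otimes x)u,\xi\rangle=-\gamma E\langle (x\otimes x)u,\xi\rangle$ does not vanish, since $\xi_t$ depends on $x_t$. I will absorb it with Young's inequality into the $\|u\|_\Sigma^{2}$ gain and into $\gamma^{2}R^{2}B$. Unrolling over $m$ steps and using $(1-\gamma\Lambda)^{m}\le e^{-\gamma\Lambda m}\le 1/m$ (this is exactly where $\Lambda\ge\log(m)/(\gamma m)$ enters) should give a bound on $E\|u_m\|^{2}$ of the form $m^{-1}\|\theta_\Lambda\|^{2}+O(\gamma R^{2}B/\Lambda)$. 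I want the stationary part in the weaker Σ-norm, so I will run the same recursion for the second moment operator $E[u_t\otimes u_t]$ and read off the $\Sigma$-weighted quantity. I will also check that $\|\theta_\Lambda\|^{2}/m$, once divided by $\gamma m$ in phase~2, is dominated by $B$; this comparison is needed in high-eigenvalue directions and must be verified rather than assumed.

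For phase 2, the noiseless recursion is linear: $e_t=\Pi_{m,t}e_m$ with random contractions independent of $e_m$. The standard potential computation gives $E\|e_{t+1}\|^{2}\le E\|e_t\|^{2}-\gamma(2-\gamma R^{2})E\|e_t\|_\Sigma^{2}$. Summing over $t\in[2m,3m)$ then yields
\[
E[\mathcal R(w_N)]\le \frac{E\|e_{m}\|^{2}}{2\gamma m}.
\]
I will decompose $e_m=u_m-\Lambda v$ and split $\Lambda v$ spectrally into the part on eigenvalues $\mu_i\ge\Lambda$ and the part on $\mu_i<\Lambda$. On the high part, $\|\cdot\|^{2}\le\Lambda^{-1}\|\cdot\|_\Sigma^{2}$. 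Combined with $\gamma m\Lambda\ge\log m\ge1$, the $1/(\gamma m)$ bound gives at most a constant times $B$.

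The main obstacle is the low-eigenvalue part of $\Lambda v$, and likewise the corresponding part of $u_m$. There the Euclidean norm is much larger than $\Lambda^{-1}$ times the Σ-norm, so the averaged $\|\cdot\|^{2}/(\gamma m)$ bound is useless. Instead I need a "no-blowup in Σ-norm" bound of the form $E\|\Pi_{m,t}d\|_\Sigma^{2}\lesssim\|d\|_\Sigma^{2}+\Lambda\|d_{\rm high}\|^{2}$, since in the deterministic gradient-descent analogue $(I-\gamma\Sigma)^{k}$ contracts the Σ-norm. My first attempt will be a mixed potential $\Phi(e)=\|e\|_\Sigma^{2}+\Lambda\|P_{\ge\Lambda}e\|^{2}$, or more simply $\langle e,\Sigma(\Sigma+\Lambda)^{-1}e\rangle$-type weights. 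I will control the fourth-moment term with \eqref{eq:R2Assumption} through $\|x\|_\Sigma^{2}\le R^{2}\|x\|^{2}$, and pay for the resulting $\gamma^{2}R^{4}$ growth with the $\gamma\|e\|_{\Sigma^{2}}^{2}$ decrease on high directions. Tracking all constants through the Young inequalities is what should produce the factor $8$.
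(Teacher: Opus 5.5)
Your phase-1 analysis is sound, and it gives more than you need. The cross term is absorbed exactly by \eqref{eq:GammaUpBound}, so $E\|u_{t+1}\|^2\le(1-\gamma\Lambda)^2E\|u_t\|^2+2\gamma^2R^2B$. Hence $E\|u_m\|^2\le m^{-2}\|\theta_\Lambda\|^2+2\gamma R^2B/\Lambda$, and both terms are $O(\gamma m B)$. So $u_m$ is not an obstacle: it can simply be pushed through phase~2 with the Euclidean potential.

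The genuine gap is the step you label the main obstacle: bounding the time average of $E\|\Pi_{m,t}\Lambda v_{\mathrm{low}}\|_\Sigma^2$ by $O(B)$. You leave this unproved, and the potential argument you sketch cannot close it. Using $\|x\|_\Sigma^2\le R^2\|x\|^2$, one step only gives $E\|(I-\gamma x\otimes x)e\|_\Sigma^2\le(1+\gamma^2R^4)\|e\|_\Sigma^2-2\gamma\|e\|_{\Sigma^2}^2$. For $e$ carried by eigenvalues below $\Lambda$, the decrease is at most $2\gamma\Lambda\|e\|_\Sigma^2$. This is negligible against the growth $\gamma^2R^4\|e\|_\Sigma^2$ when $\gamma R^2$ is of constant order and $\Lambda\approx\log m/(\gamma m)$. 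Iterating over $2m$ steps therefore yields only a factor like $(1+\gamma^2R^4)^{2m}$. The extra term $\Lambda\|P_{\ge\Lambda}e\|^2$ adds growth, while its decrease again lives only on high directions. The natural repair is to add $\gamma R^4\|e\|^2$ to get a non-increasing potential, but this reintroduces $\gamma R^4\Lambda^2\|v_{\mathrm{low}}\|^2$. That quantity is not controlled by $B=\Lambda^2\|v\|_\Sigma^2$; it is arbitrarily larger when $\theta^*$ has mass on tiny eigenvalues.

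The missing idea is to split deterministic from random parts, not high from low frequencies. The paper writes $w_t=E[w_t]+\epsilon_t$.
\begin{itemize}
\item \textbf{Mean.} It solves $E[w_{t+1}]-\theta^*=A_t(E[w_t]-\theta^*)-\gamma\lambda_t\theta^*$ in closed form, with $A_t=I-\gamma(\Sigma+\lambda_t)$. Since $\Lambda^{-1}A_0^m\Sigma\preccurlyeq\gamma R^2I\preccurlyeq I$ (this is where $\Lambda\ge\log m/(\gamma m)$ enters), one gets $\|E[w_t]-\theta^*\|_\Sigma\le2\Lambda\|A_m^{t-m}v\|_\Sigma$ for $t\ge m$. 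So the low-eigenvalue mass sits entirely in a deterministic vector whose $\Sigma$-norm is non-increasing under $A_m=I-\gamma\Sigma$.
\item \textbf{Fluctuation.} It satisfies $\epsilon_{t+1}=P_t\epsilon_t+(P_t-A_t)(E[w_t]-\theta^*)$. The injected term has second moment at most $\gamma^2R^2\|E[w_t]-\theta^*\|_\Sigma^2$. After the phase-1 damping of the early $\|\theta_\Lambda\|_\Sigma^2$ terms, this gives the Euclidean bound $E\|\epsilon_{2m}\|^2\le8\gamma^2R^2\Lambda^2m\|v\|_\Sigma^2$.
\item \textbf{Combination.} Lemma~\ref{le:sumDistWtTheta*} gives $\gamma\sum_{t=2m}^{3m-1}E\|w_t-\theta^*\|_\Sigma^2\le E\|w_{2m}-\theta^*\|^2_{I-A_m^{2m}}$. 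The deterministic weight satisfies $I-A_m^{2m}\preccurlyeq2\gamma m\Sigma$, used on the mean, and $I-A_m^{2m}\preccurlyeq I$, used on the fluctuation. Because the weight is deterministic, there is no cross term.
\end{itemize}
This yields $E\|w_N-\theta^*\|_\Sigma^2\le16\Lambda^2\|v\|_\Sigma^2$, which is exactly the constant $8$ in the statement. Your route, with its additional $\|a+b\|^2\le2\|a\|^2+2\|b\|^2$ splits, would not obviously recover that constant even once repaired.
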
 
Replacing \(\theta_{\Lambda}\) by its value shows that \(\Lambda^{2}||(\Sigma+\Lambda)^{-1}\theta^{*}||_{\Sigma}^{2}=2\mathcal{R}(\theta_{\Lambda})\).
Thus, \eqref{eq:TotalBiasAtT} implies that \(E[\mathcal{R}(w_N)]\leq16\mathcal{R}(\theta_{\Lambda})\). 
This is consistent with our informal  argument in Section \ref{se:Notation} that suggests that \(w_m\) is ``close'' to \(\theta_{\Lambda}\). Indeed, we expect in this case that the excess risk at subsequent time steps not to be much larger than that of \(\theta_{\Lambda}\). 

Set\begin{equation*}
\tilde\Lambda= \frac{1}{\gamma m}\tr\left[\Sigma^{2}\left(\Sigma+\frac{1}{4\gamma m}\right)^{-2}\right].
\end{equation*}
Because \(\Sigma^{2}\left(\Sigma+(4\gamma m)^{-1}\right)^{-2}\leq I\), we have \(\tilde\Lambda\leq d/(\gamma m)\) if \(\mathcal{H}\) has finite dimension \(d\). More generally, as shown in Section~\ref{se:SourceCapacity},  \(\tilde\Lambda\)  tends to be ``small'' when \(\mu_{i}\) decays ``rapidly'' as \(i\) increases.  Quantities related to   \(\tilde\Lambda\)   such as the effective dimension often appear in the analysis of SGD (e.g., \citet{kakade2023benign}).
Lemma~\ref{le:TotalBiasAtTBis} provides an alternative bound on  \(E[\mathcal{R}(w_N)]\) that depends on \(\tilde{\Lambda}\).        
\begin{lemma}\label{le:TotalBiasAtTBis} Under the assumptions of Lemma~\ref{le:TotalBiasAtT}, we have
\begin{equation}\label{eq:TotalBiasAtTBis}
 E[\mathcal{R}(w_N)]\leq 4||(I-\gamma \Sigma )^{m}\theta^{*}||_{\Sigma}^{2}+8\gamma  R^{2}\Lambda^{2}\tilde\Lambda||(\Sigma+\Lambda)^{-1}\theta^{*}||^{2}.
\end{equation}
\end{lemma}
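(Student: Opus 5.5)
Set \(\eta_t:=w_t-\theta^{*}\) and work with the recursion for \(\eta_t\) in two phases. For \(t\ge m\), \eqref{eq:RecW_t} gives \(\eta_{t+1}=(I-\gamma x_t\otimes x_t)\eta_t\). For \(t<m\) it gives \(\eta_{t+1}=(I-\gamma x_t\otimes x_t-\gamma\Lambda)\eta_t-\gamma\Lambda\theta^{*}\). The plan is to split \(w_m\) around its mean \(\bar w_m:=E[w_m]\). This mean satisfies the deterministic recursion \(\bar w_{t+1}=(I-\gamma(\Sigma+\Lambda))\bar w_t+\gamma\Sigma\theta^{*}\), so
\[
\bar w_m-\theta^{*}=-(I-\gamma(\Sigma+\Lambda))^{m}\theta_\Lambda+(\theta_\Lambda-\theta^{*}),
\qquad
\theta_\Lambda-\theta^{*}=-\Lambda(\Sigma+\Lambda)^{-1}\theta^{*}.
\]
For \(t\ge m\) the error propagates linearly. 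So \(\eta_N\) is the image of \(\eta_m\) under a random product of contractions that is independent of \(\eta_m\). Thus \(E\|\eta_N\|_\Sigma^2\) can be bounded by treating the deterministic part \(\bar\eta_m:=\bar w_m-\theta^*\) and the fluctuation \(\xi_m:=w_m-\bar w_m\) separately, using \((a+b)^2\le 2a^2+2b^2\). This accounts for the leading factors in \eqref{eq:TotalBiasAtTBis}.

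\textbf{Deterministic part.} For the deterministic starting point \(\bar\eta_m\) propagated by unregularized noiseless SGD for \(N-m\in[m,2m)\) steps, I would use the standard averaged-over-\(N\) bias bound (as in the proof of Lemma~\ref{le:TotalBiasAtT}, presumably). This bound controls \(E\|\cdot\|_\Sigma^2\) at a uniform random time by a quantity such as \(\|\bar\eta_m\|_\Sigma^2\). Then I bound \(\|\bar\eta_m\|_\Sigma\) by splitting the two terms. The term \(\|(I-\gamma\Sigma-\gamma\Lambda)^m\theta_\Lambda\|_\Sigma\) is at most \(e^{-\gamma\Lambda m}\le 1/m\) times something, and is also dominated by \(\|(I-\gamma\Sigma)^m\theta^*\|_\Sigma\) because \(\theta_\Lambda=(\Sigma+\Lambda)^{-1}\Sigma\theta^*\) with \((\Sigma+\Lambda)^{-1}\Sigma\preccurlyeq I\). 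This yields the \(4\|(I-\gamma\Sigma)^m\theta^*\|_\Sigma^2\) term. The term \(\Lambda(\Sigma+\Lambda)^{-1}\theta^*\) propagated for \(N-m\ge m\) steps should be bounded in expectation by \(\|(I-\gamma\Sigma)^{N-m}\Lambda(\Sigma+\Lambda)^{-1}\theta^*\|_\Sigma^2\) plus a variance term. This is where \(\tilde\Lambda\) enters. The variance generated by multiplicative noise \(\gamma(\Sigma-x_t\otimes x_t)\) is controlled by \eqref{eq:R2Assumption}. It sums to roughly \(\gamma^2R^2\sum_{s}\tr[\Sigma^2(I-\gamma\Sigma)^{2s}]\cdot E\|\eta_{\cdot}\|_\Sigma^2\). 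Averaging over \(s\sim m\) turns \(\tr[\Sigma^2(I-\gamma\Sigma)^{2s}]\) into a quantity bounded by \(\tr[\Sigma^2(\Sigma+1/(4\gamma m))^{-2}]\). This uses \((1-u)^{2s}\le (1+2su)^{-1}\)-type inequalities with \(s\) of order \(m\).

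\textbf{Fluctuation part.} For \(\xi_m\) I would bound \(E\|\xi_m\|^2\) (plain norm) via the strongly convex phase. The regularized recursion contracts at rate \(1-\gamma\Lambda\), and the multiplicative noise injects at most \(\gamma^2R^2\|\cdot\|_\Sigma^2\) per step. This should give \(E\|\xi_m\|^2\lesssim \gamma R^2\cdot\Lambda^2\|(\Sigma+\Lambda)^{-1}\theta^*\|^2\)-type quantities, after noting that the stationary error around \(\theta_\Lambda\) has \(\Sigma\)-norm comparable to \(\Lambda^2\|(\Sigma+\Lambda)^{-1}\theta^*\|^2_\Sigma\). Then I propagate the plain-norm fluctuation through the \(N-m\) unregularized steps. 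Starting from an arbitrary (random, independent) vector \(v\) with covariance \(C\), \(E\|\text{image}\|^2_\Sigma\) averaged over \(N\) is at most order \(\frac{1}{\gamma m}\tr[\Sigma^2(\Sigma+1/(4\gamma m))^{-2}]\cdot\|C\|_{op}\)-like. This is exactly \(\tilde\Lambda\) times a norm, producing \(8\gamma R^2\Lambda^2\tilde\Lambda\|(\Sigma+\Lambda)^{-1}\theta^*\|^2\).

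\textbf{Main obstacle.} The main obstacle is the last step. I must bound the \(\Sigma\)-norm of a propagated random vector using only a trace-type quantity \(\tilde\Lambda\) and a plain-norm bound on its second moment, while keeping the multiplicative SGD noise under control. I would first try to bound the second-moment operator \(E[\xi_m\otimes\xi_m]\preccurlyeq c\,I\) rather than only its trace. Then I would use the linear operator recursion \(M_{t+1}\preccurlyeq (I-\gamma\Sigma)M_t(I-\gamma\Sigma)+\gamma^2R^2\tr(\Sigma M_t)\Sigma\) to reduce everything to scalar sums of \(\tr[\Sigma^2(I-\gamma\Sigma)^{2s}]\).
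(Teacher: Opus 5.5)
There is a genuine gap in how you obtain \(\tilde\Lambda\). Your fluctuation step assumes that a centered vector with covariance \(C\), independent of the future and pushed through about \(m\) unregularized steps, has \(N\)-averaged \(\|\cdot\|_\Sigma^2\) of order \(\|C\|_{op}\tilde\Lambda\). Accordingly you only aim for \(E[\xi_m\otimes\xi_m]\preccurlyeq cI\). That claim fails for general \(C\). Since \(E[P_tMP_t]\succcurlyeq A_mMA_m\) for \(M\succcurlyeq0\), taking \(C=cI\) gives a propagated value of at least \(c\sum_i\mu_i(1-\gamma\mu_i)^{4m}\). Every direction with \(\mu_i\ll1/(\gamma m)\) is barely contracted and contributes about \(c\mu_i\). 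The same direction contributes at most \(16c\gamma m\mu_i^2\ll c\mu_i\) to \(c\tilde\Lambda\). Even the averaged potential bound of Lemma~\ref{le:sumDistWtTheta*}, whose proof applies to any such vector, only gives \((\gamma m)^{-1}c\,\tr[I-A_m^{2m}]\le 2(\gamma m)^{-1}c\,\tr[\Sigma(\Sigma+1/(2\gamma m))^{-1}]\). That is one factor \(\Sigma(\Sigma+\cdot)^{-1}\), not the two in \(\gamma m\tilde\Lambda=\tr[\Sigma^2(\Sigma+1/(4\gamma m))^{-2}]\).

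Separately, the recursion \(M_{t+1}\preccurlyeq(I-\gamma\Sigma)M_t(I-\gamma\Sigma)+\gamma^2R^2\tr(\Sigma M_t)\Sigma\) is a Kakade-type fourth-moment condition, not a consequence of \eqref{eq:R2Assumption}. For the one-hot design in the proof of Theorem~\ref{th:lower}, take \(M=e_k\otimes e_k\). Then \(E[\langle x,Mx\rangle\,x\otimes x]=\tr[\Sigma]\mu_k\,e_k\otimes e_k\), which is not dominated by \(c\,\tr(\Sigma M)\Sigma=c\,\mu_k\Sigma\) for any constant \(c\) as \(\mu_k\to0\). Under Assumption A1 the available operator bound on injected noise is \(\gamma^2R^2\|M\|_{op}\Sigma\). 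It involves the plain norm \(\|E[w_t]-\theta^*\|^2\), not a \(\Sigma\)-norm, which is why the statement carries \(\|(\Sigma+\Lambda)^{-1}\theta^*\|^2\).

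The missing ingredient is a second, anisotropic bound on the fluctuation covariance. Let \(\epsilon_t=w_t-E[w_t]\) and \(C_t=E[\epsilon_t\otimes\epsilon_t]\). The noise entering at step \(t\) has covariance \(\preccurlyeq\gamma^2R^2\|E[w_t]-\theta^*\|^2\Sigma\), and \eqref{eq:eta_tTheta*DiffSq} controls \(\|E[w_t]-\theta^*\|^2\). An induction based on \eqref{eq:EP_t^2Bound} and \eqref{eq:expectationP_tNP_t} then gives, for \(t\ge m\), both \(C_t\preccurlyeq\kappa I\) and \(C_t\preccurlyeq2\gamma\kappa t\,\Sigma\), with \(\kappa=4\gamma R^2\Lambda^2\|(\Sigma+\Lambda)^{-1}\theta^*\|^2\). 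The second bound holds because the noise lives along \(\Sigma\) and accumulates at most linearly in time. Lemma~\ref{le:CombiningOperatorInequalities} merges the two into \(C_t\preccurlyeq2\kappa\Sigma(\Sigma+1/(2\gamma t))^{-1}\), which is Lemma~\ref{le:EpsCovariance}.

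Second, do not split at time \(m\) and sum the multiplicative noise of the final stretch explicitly. Instead apply Lemma~\ref{le:sumDistWtTheta*} with \(j=2m\), \(k=m\), which gives \(\gamma m\,E\|w_N-\theta^*\|_\Sigma^2\le E\|w_{2m}-\theta^*\|^2_{I-A_m^{2m}}\). That noise is then absorbed through \(E[P_t^2]\preccurlyeq A_m\) and never propagated by hand. The mean part at time \(2m\) yields \(8\gamma m\|A_m^m\theta^*\|_\Sigma^2\), as you anticipated. For the fluctuation, \(I-A_m^{2m}\preccurlyeq2\Sigma(\Sigma+1/(2\gamma m))^{-1}\) supplies the second factor, so \(\tr[(I-A_m^{2m})C_{2m}]\le4\kappa\gamma m\tilde\Lambda\). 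Dividing by \(2\gamma m\) gives exactly \(8\gamma R^2\Lambda^2\tilde\Lambda\|(\Sigma+\Lambda)^{-1}\theta^*\|^2\).
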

Up to a constant factor, the first term in the righthand side of \eqref{eq:TotalBiasAtTBis} is equal to the excess risk of gradient descent at time step \(m\), with initial state \(0\) \cite[Section 5.2.1]{bach2024learning}. The bounds  \eqref{eq:TotalBiasAtT} and  \eqref{eq:TotalBiasAtTBis} are in general not directly comparable. 
Theorem~\ref{th:noiseless} provides a third bound on  \(E[\mathcal{R}(w_N)]\) that essentially combines  \eqref{eq:TotalBiasAtT} and \eqref{eq:TotalBiasAtTBis}.   
   
\begin{theorem}\label{th:noiseless}
Under the assumptions of Lemma~\ref{le:TotalBiasAtT}, we have
\begin{equation}\label{eq:TotalBiasAtTCombined}
  E[\mathcal{R}(w_N)]\leq4||(I-\gamma \Sigma )^{m}\theta^{*}||_{\Sigma}^{2}+16\gamma R^{2}\Lambda^{2}\tilde\Lambda ||(\Sigma+\Lambda)^{-1}(\Sigma+\tilde\Lambda)^{-1/2}\theta^*||^{2}_{\Sigma}.
\end{equation}
\end{theorem}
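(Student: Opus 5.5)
Our plan is to rerun the argument behind Lemma~\ref{le:TotalBiasAtTBis} and change only the step where the deviation of \(w_m\) from its deterministic counterpart is bounded. We would use \(\tilde\Lambda\) as an extra regularization level in the norms, rather than bounding crudely in the Euclidean norm.

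\textbf{Step 1: bias--variance split at time \(m\).} Write \(w_t-\theta^*\) as the sum of two parts. The first is the deterministic (gradient-descent-type) part, obtained by replacing \(x_t\otimes x_t\) by \(\Sigma\). The second is a martingale-type fluctuation driven by the multiplicative noise \((x_t\otimes x_t-\Sigma)(w_t-\theta^*)\).
\begin{itemize}
\item For \(t\ge m\) the deterministic part contracts like \((I-\gamma\Sigma)^{t-m}\). Averaging over \(N\in\{2m,\dots,3m-1\}\) then gives the first term \(4\|(I-\gamma\Sigma)^m\theta^*\|_\Sigma^2\), exactly as in Lemma~\ref{le:TotalBiasAtTBis}.
\item The remaining contribution is the propagated error \(E\|(I-\gamma\Sigma)^{N-m}(w_m-\bar w_m)\|_\Sigma^2\) together with the fluctuations generated after time \(m\). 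Here \(\bar w_m\) denotes the deterministic regularized iterate.
\end{itemize}
The standard estimate \(\frac1m\sum_{k=m}^{2m}\|(I-\gamma\Sigma)^{k}v\|_\Sigma^2\lesssim\frac{1}{\gamma m}\|\Sigma(\Sigma+\frac{1}{4\gamma m})^{-1}\ldots\|\) controls this propagation by a \(\Sigma^2(\Sigma+(4\gamma m)^{-1})^{-2}\)-weighted norm. The noise covariance is bounded via \eqref{eq:R2Assumption} by \(\gamma^2R^2\,E\|w_t-\theta^*\|_\Sigma^2\,\Sigma\) summed over time. Taking a trace then produces the factor \(\gamma R^2\tilde\Lambda\) that multiplies an accumulated quantity \(S:=\gamma\sum_t E\|w_t-\theta^*\|^2_{\Sigma}\)-like sums. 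This is essentially how \(\tilde\Lambda\) already entered Lemma~\ref{le:TotalBiasAtTBis}.

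\textbf{Step 2: sharpen the accumulated term.} In Lemma~\ref{le:TotalBiasAtTBis} the accumulated sum was bounded by \(\Lambda^2\|(\Sigma+\Lambda)^{-1}\theta^*\|^2\). We would instead split the noise operator and apply \eqref{eq:R2Assumption} in an \((\Sigma+\tilde\Lambda)\)-weighted fashion. The aim is a self-consistent inequality for
\[
\Phi:=\gamma\sum_{t<3m}E\|w_t-\theta^*\|_\Sigma^2 .
\]
Write the deterministic regularized trajectory as \(\theta^*-\bar w_t=\big[(I-\gamma(\Sigma+\Lambda))^t\Sigma+\Lambda\big](\Sigma+\Lambda)^{-1}\theta^*\). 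Its contribution to \(\gamma\sum_t\|\cdot\|_\Sigma^2\) is bounded coordinatewise by
\[
\gamma\sum_t \mu_i(\ldots)^2 \lesssim \frac{\Lambda^2}{\mu_i+\tilde\Lambda}\cdot\frac{\mu_i}{(\mu_i+\Lambda)^2}\,\theta_i^{*2}
\]
plus a gradient-descent part. The point is that the number of terms is at most \(3m\), while \(\gamma\sum_t(1-\gamma\mu)^{2t}\le\min(3\gamma m,1/\mu)\). We would use \(\tilde\Lambda\) as a proxy for \(1/(\gamma m)\) up to constants, which holds when \(\tilde\Lambda\ge(4\gamma m)^{-1}\cdot\)(trace\(\ge\)something). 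If that fails, we would split into the cases \(\tilde\Lambda\gtrless 1/(\gamma m)\). The fluctuation part feeds back with factor \(\gamma R^2\tilde\Lambda\cdot(\ldots)\le 1/2\), using \eqref{eq:GammaUpBound} and \(\Lambda\ge\log m/(\gamma m)\). The fixed point then doubles the constant \(8\) to \(16\).

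\textbf{Main obstacle.} The hardest step is Step 2: justifying the replacement of \(\|(\Sigma+\Lambda)^{-1}\theta^*\|^2\) by \(\|(\Sigma+\Lambda)^{-1}(\Sigma+\tilde\Lambda)^{-1/2}\theta^*\|_\Sigma^2\). This needs the time-summed excess risk up to \(3m\) to be bounded by a \(\min(\gamma m,\mu_i^{-1})\)-type weight, and it needs the multiplicative-noise feedback loop to close. We would first try the approach of Lemma~\ref{le:TotalBiasAtT}, applied per eigen-direction to \(\gamma\sum_t E\mathcal{R}(w_t)\), and then combine the result with Lemma~\ref{le:TotalBiasAtTBis}'s propagation estimate.
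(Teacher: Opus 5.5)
Your Step~1 matches the paper: the bias part, handled via Lemma~\ref{le:sumDistWtTheta*}, \eqref{eq:Eta_tTheta*Diff} and \(\Lambda\|A_m^m(\Sigma+\Lambda)^{-1}\theta^*\|_\Sigma\le\|A_m^m\theta^*\|_\Sigma\), gives \(4\|(I-\gamma\Sigma)^m\theta^*\|_\Sigma^2\). The gap is in Step~2, which carries all the content of the theorem.

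The second term of \eqref{eq:TotalBiasAtTCombined} is the variance \(E\|\epsilon_{2m}\|^2_{I-A_m^{2m}}\), where \(\epsilon_t=w_t-E[w_t]\). It is generated by the multiplicative noise \((x_t\otimes x_t-\Sigma)(E[w_t]-\theta^*)\). This noise mixes all eigendirections. Under \eqref{eq:R2Assumption}, its covariance can only be bounded either by \(\gamma^2R^2\|E[w_t]-\theta^*\|^2\,\Sigma\) or, in trace, by \(\gamma^2R^2\|E[w_t]-\theta^*\|^2_\Sigma\). A direct recursion or scalar fixed point for \(\Phi\) therefore yields one global norm of \(\theta^*\) times a trace. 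This is exactly how Lemma~\ref{le:SecondMomentEpsilon} (weight \(\Sigma(\Sigma+\Lambda)^{-2}\)) and Lemma~\ref{le:EpsilonSecondMomentBis} (weight \(\tilde\Lambda(\Sigma+\Lambda)^{-2}\)) arise. It never gives the per-direction weight \(\min(\mu_i,\tilde\Lambda)\,\Lambda^2(\mu_i+\Lambda)^{-2}\) that the theorem requires (up to a factor \(2\)). Your coordinatewise computation of the deterministic trajectory does not survive this mixing.

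Even for the deterministic part, the time sums give weights \(\min(\gamma m\mu_i,1)\), i.e.\ the scale \(1/(\gamma m)\). By contrast, \(\tilde\Lambda\) is \(1/(\gamma m)\) times an effective dimension that can be far from one. A case split on \(\tilde\Lambda\) versus \(1/(\gamma m)\) therefore does not convert one into the other. Nor do \eqref{eq:GammaUpBound} or \(\Lambda\ge\log(m)/(\gamma m)\) make a feedback factor involving \(\gamma R^2\tilde\Lambda\) at most \(1/2\). The paper needs no contraction: the noise recursion is closed by \(E[P_t^2]\preccurlyeq(1-\gamma\lambda_t)A_t\), and the constant \(16\) does not come from a fixed point.

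The missing idea is linearity in \(\theta^*\). Since \(w_0=0\) and \eqref{eq:RecW_t} is linear in \((w_t,\theta^*)\), one has \(w_t=M_t\theta^*\) with a random operator \(M_t\) that does not depend on \(\theta^*\). Hence \(E\|\epsilon_{2m}\|^2_{I-A_m^{2m}}=\langle\theta^*,B\theta^*\rangle\) for a fixed positive semidefinite \(B\). Lemmas~\ref{le:SecondMomentEpsilon} and~\ref{le:EpsilonSecondMomentBis} hold for every \(\theta^*\), so they become operator inequalities \(B\preccurlyeq C:=16\gamma^2R^2\Lambda^2m\,\Sigma(\Sigma+\Lambda)^{-2}\) and \(B\preccurlyeq D:=16\gamma^2R^2\Lambda^2\tilde\Lambda m\,(\Sigma+\Lambda)^{-2}\). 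As \(C\) and \(D\) commute, Lemma~\ref{le:CombiningOperatorInequalities} gives \(B\preccurlyeq2CD(C+D)^{-1}=32\gamma^2R^2\Lambda^2\tilde\Lambda m\,\Sigma(\Sigma+\Lambda)^{-2}(\Sigma+\tilde\Lambda)^{-1}\). Inserting this into \eqref{eq:TotalBiasAtTEndProof} proves the theorem. The factor \(2\) in this harmonic-mean lemma is what turns the \(8\) of Lemma~\ref{le:TotalBiasAtTBis} into \(16\). No new probabilistic estimate is needed. Splitting \(\theta^*\) spectrally at level \(\tilde\Lambda\) and applying the better lemma to each piece would also work, with a worse constant, but it too rests on the linearity of \(\epsilon_t\) in \(\theta^*\).
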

 \comment{In particular, under the conditions of Theorem \ref{th:noiseless}, the squared distance between  \(\theta_{\Lambda}\) and  the \(t\)-th iterate of  standard gradient descent with regularization parameter \(\Lambda\) and step size \(\gamma\) is of order \(m^{-2}\) or smaller  for \(t\geq m\).}  

Using \eqref{eq:GammaUpBound} and the inequality \(mz(1-z)^m \leq 1\) for \(0 \leq z \leq 1\), a standard diagonalization argument shows that, up to a constant factor, the righthand side of \eqref{eq:TotalBiasAtTCombined} is no larger than those of \eqref{eq:TotalBiasAtT} and \eqref{eq:TotalBiasAtTBis}. On the other hand, by convexity of the excess risk,\begin{equation}\label{eq:ConvExcessRisk}
E[\mathcal{R}(\bar w_{2m:3m})]\le E[\mathcal{R}(w_N)].
\end{equation} 
Thus, under the assumptions of Lemma~\ref{le:TotalBiasAtT}, \(E[\mathcal{R}(\bar w_{2m:3m})]\) is bounded above by each of the righthand sides of \eqref{eq:TotalBiasAtT}, \eqref{eq:TotalBiasAtTBis}, and \eqref{eq:TotalBiasAtTCombined}.
\subsection{Incorporating noise}
To account for noise, set   \(\delta_{t}:=\theta_{t}-w_{t}\) and \(v_{t}:=\gamma(y_{t}-\langle x_{t},\theta^{*}\rangle)x_{t}\) for  \(t\geq0\).  By \eqref{eq:NormalEq}, we have \(E[v_{t}]=0\) and, by \eqref{eq:recTheta} and \eqref{eq:RecW_t}, \begin{equation}\label{eq:RecDelta}
\delta_{t+1}=(I-\gamma(  x_{t}\otimes x_{t}+\lambda_{t} I))\delta_{t}+v_{t}.
\end{equation}
 As \(x_{t}\) and \(\delta_{t}\) are independent, it follows by induction that \(E[\delta_{t}]=0\) for \(t\geq0\). 
Lemma \ref{le:BiasVarDecomp} provides a bound on \(\var_{\Sigma}(\bar\delta_{2m:3m})\). 
      \begin{lemma}\label{le:BiasVarDecomp} Suppose \(m \ge 3\), Assumption~A1 and \eqref{eq:GammaUpBound} hold. Then
\(\var_{\Sigma}(\bar\delta_{2m:3m})\leq  8\gamma\sigma^{2}\tilde\Lambda\). \end{lemma}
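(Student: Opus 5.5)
Since \(E[\delta_t]=0\), the quantity to bound is \(E[\|\bar\delta_{2m:3m}\|_\Sigma^2]\). The plan is to follow the standard Bach--Moulines variance analysis of averaged constant-step SGD, adapted to the regularized warm-up phase. Unrolling \eqref{eq:RecDelta}, \(\delta_t\) is a sum of contributions of the noise terms \(v_s\), \(s<t\), each propagated through random operators \(\prod (I-\gamma(x_k\otimes x_k+\lambda_k))\). As in \citet{bachMoulines2013non}, I will first replace these random operators by their expectations. This gives a semi-stochastic process
\[
\delta^{(0)}_{t+1}=(I-\gamma(\Sigma+\lambda_t))\delta^{(0)}_t+v_t .
\]
The remainder \(\delta_t-\delta^{(0)}_t\) satisfies the same recursion as \(\delta_t\), but driven by the martingale-type noise \(\gamma(\Sigma-x_t\otimes x_t)\delta^{(0)}_t\). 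One iterates this expansion and sums a geometric series in the operator sense. The key moment inequality is
\[
E[(\Sigma-x\otimes x)A(\Sigma-x\otimes x)]\preccurlyeq R^2\,\tr(\Sigma A)\,\Sigma ,
\]
which comes from \eqref{eq:R2Assumption}. Combined with \(\gamma R^2\le 1\) from \eqref{eq:GammaUpBound}, this lets one reduce to covariance bounds in which each additional order contributes a factor at most of the form \(\gamma R^2\times\)(something of order \(1/2\)). This is where the constant \(8\) (rather than \(1\)) arises.

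For the semi-stochastic process, the covariance of \(v_t\) is \(\preccurlyeq\gamma^2\sigma^2\Sigma\) by \eqref{eq:SigmaAssumption}, and the \(v_t\) are independent and centered. Hence
\[
E\|\bar\delta^{(0)}_{2m:3m}\|_\Sigma^2=\frac{1}{m^2}\sum_s \tr\big(\Sigma B_s\,\mathrm{Cov}(v_s)\,B_s^*\big),
\]
where \(B_s=\sum_{t=\max(s+1,2m)}^{3m}\prod_{k=s+1}^{t-1}(I-\gamma(\Sigma+\lambda_k))\), an operator commuting with \(\Sigma\). I will diagonalize in the eigenbasis \((e_i)\). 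The regularization factors \((1-\gamma\Lambda)\le 1\) only shrink things, so they can be dropped (upper bound). Two cases arise. For \(s\ge 2m\), the contribution along \(\mu_i\) is at most \(\gamma^2\sigma^2\mu_i^2\min(m,1/(\gamma\mu_i))^2\). For \(s<2m\), the contribution is \(\gamma^2\sigma^2\mu_i^2\,(\sum_{t=2m}^{3m-1}(1-\gamma\mu_i)^{t-s-1})^2\). Summing over \(s\) gives at most \(\gamma^2\sigma^2\mu_i^2\min(m,1/(\gamma\mu_i))^2\cdot \min(m,1/(\gamma\mu_i))\)-type terms, with a total over \(s\) of order \(\min(m^3,\ldots)\). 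Dividing by \(m^2\), each eigendirection contributes at most \(c\,\sigma^2\gamma\cdot \gamma m\mu_i^2/(1+\gamma m\mu_i)^2\cdot(\ldots)\). Using the elementary bound \(\min(a,1/b)\le 2/(b+1/a)\), this becomes, up to a constant,
\[
\frac{\gamma\sigma^2}{\gamma m}\cdot\frac{\mu_i^2}{(\mu_i+1/(4\gamma m))^2},
\]
which sums exactly to \(\gamma\sigma^2\tilde\Lambda\).

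The main obstacle is the stochastic remainder. Its covariance, driven by \(\mathrm{Cov}(\delta^{(0)}_t)\preccurlyeq\gamma\sigma^2 I\)-type bounds, must also be controlled by \(\gamma\sigma^2\tilde\Lambda\) rather than by a dimension-dependent quantity. I would try the Bach--Moulines route: bound \(E[\delta_t\otimes\delta_t]\preccurlyeq\gamma\sigma^2 I\) directly from \eqref{eq:RecDelta}, using \eqref{eq:R2Assumption}, \eqref{eq:GammaUpBound}, and the fact that \(\lambda_t\ge 0\) only contracts. Then the perturbation noise at each step has covariance \(\preccurlyeq\gamma^2\cdot\gamma\sigma^2R^2\Sigma\preccurlyeq\gamma^2\sigma^2\Sigma\). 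This is the same form as \(\mathrm{Cov}(v_t)\), so the same deterministic averaging computation applies. Finally, combining the two pieces via \(\|a+b\|^2\le 2\|a\|^2+2\|b\|^2\) and the constant from the diagonal computation should give the factor \(8\).
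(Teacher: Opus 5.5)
\textbf{There is a gap as written, though it is repairable.} Your final paragraph is close to a workable alternative proof, but the argument does not go through for three reasons.

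First, you define the remainder \(\delta_t-\delta^{(0)}_t\) through the recursion with the \emph{random} operators \(P_t=I-\gamma(x_t\otimes x_t+\lambda_t I)\), driven by \(\gamma(\Sigma-x_t\otimes x_t)\delta^{(0)}_t\). That process is not a deterministic linear image of martingale increments. So the claim that ``the same deterministic averaging computation applies'' is unjustified for it.

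Second, the alternative you sketch, iterating the expansion and summing a geometric series as in Bach--Moulines, gives terms decaying like powers of \(\gamma R^2\). The summed constant then degenerates as \(\gamma R^2\to1\); in Bach--Moulines it is \((1-\sqrt{\gamma R^2})^{-2}\). The lemma assumes only \(\gamma(R^2+\Lambda)\le1\), with no lower bound on \(\Lambda\), so \(\gamma R^2\) may be arbitrarily close to \(1\). That route cannot give a universal constant, and the heuristic ``\(\gamma R^2\times\)(something of order \(1/2\))'' for the \(8\) has no basis.

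Third, your key moment inequality \(E[(\Sigma-x\otimes x)A(\Sigma-x\otimes x)]\preccurlyeq R^2\tr(\Sigma A)\Sigma\) does not follow from \eqref{eq:R2Assumption}. It is a strictly stronger fourth-moment condition. Take \(x=e_i\) with probability \(\mu_i\) in \(\mathbb{R}^2\), so \(R^2=1\) works, and take \(A=e_2\otimes e_2\). The left side is \(\mu_2(1-\mu_2)\,e_2\otimes e_2\), while the right side has \(e_2\)-entry \(\mu_2^2\), so the inequality fails whenever \(\mu_2<1/2\). What \eqref{eq:R2Assumption} actually gives is \(\preccurlyeq\|A\|R^2\Sigma\).

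\textbf{The repair.} Propagate deterministically and drive the recursion with the \emph{full} \(\delta_t\): write \(\delta_{t+1}=A_t\delta_t+\xi_t\), with \(A_t=I-\gamma(\Sigma+\lambda_t)\) and \(\xi_t=\gamma(\Sigma-x_t\otimes x_t)\delta_t+v_t\).
\begin{itemize}
\item The \(\xi_t\) are martingale increments.
\item The cross term in \(E[\xi_t\otimes\xi_t]\) vanishes, because \(\delta_t\) is centered and independent of \((x_t,y_t)\).
\item Your bound \(E[\delta_t\otimes\delta_t]\preccurlyeq\gamma\sigma^2 I\) is exactly the first half of the induction in Lemma~\ref{le:Delta_tDelta_tTBound}. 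With the operator-norm moment bound it gives \(E[\xi_t\otimes\xi_t]\preccurlyeq(1+\gamma R^2)\gamma^2\sigma^2\Sigma\preccurlyeq2\gamma^2\sigma^2\Sigma\).
\end{itemize}
Then \(\bar\delta_{2m:3m}=m^{-1}\sum_s B_s\xi_s\), with deterministic \(B_s\) commuting with \(\Sigma\), and your eigen-computation applies with no iteration.

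\textbf{The constant.} This needs more care than you indicate. Using \(\min(a,1/b)\le 2/(b+1/a)\) together with an \(O(m)\) count of indices \(s\) lands well above \(8\). Splitting via \(\|a+b\|^2\le2\|a\|^2+2\|b\|^2\) doubles the constant needlessly, because the \(v\)-driven and \(\delta\)-driven parts are orthogonal in \(L^2\). With \(z=\gamma\mu_i\) and \(q=1-z\), one has \(\sum_s b_{s,i}^2\le(1-q^m)^2z^{-2}\min(2m,1/z)+\sum_{j=1}^{m-1}\min(j,1/z)^2\). Checking the two cases \(mz\le1\) and \(mz\ge1\), this is at most \(\frac{175}{48}\,m\,(z+\frac{1}{4m})^{-2}\). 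That yields \(\var_\Sigma(\bar\delta_{2m:3m})\le\frac{175}{24}\gamma\sigma^2\tilde\Lambda\le 8\gamma\sigma^2\tilde\Lambda\).

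\textbf{Comparison with the paper.} The repaired route needs only the \(\gamma\sigma^2 I\) covariance bound plus one explicit spectral sum. The paper instead uses more machinery:
\begin{itemize}
\item it also proves \(E[\delta_t\otimes\delta_t]\preccurlyeq2\gamma^2\sigma^2t\Sigma\) and merges the two bounds with Lemma~\ref{le:CombiningOperatorInequalities};
\item it controls cross-time correlations through \(E[\delta_t\mid\delta_j]=A_m^{t-j}\delta_j\) (Lemma~\ref{le:sumCov}) and \(I-A_m^m\preccurlyeq2\Sigma(\Sigma+\frac{1}{\gamma m})^{-1}\).
\end{itemize}
In exchange, the paper avoids explicit unrolling, and the same template is reused for \(\epsilon_t\) in Theorem~\ref{th:Main}.
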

When the dimension of \(\mathcal{H}\) is finite and equal to \(d\), the bound is of order \(\sigma^{2}d/m\), that frequently appears in the analysis of SGD for least-squares problems \cite{bachMoulines2013non,jain2018parallelizing}. 

The preceding results lead to the following simple bound on \(E\left[\mathcal{R}(\bar\theta_{2m:3m})\right]\).
  \begin{proposition}\label{pr:SimpleBound}Under the assumptions of Lemma~\ref{le:TotalBiasAtT}, we have
\begin{displaymath}
E\left[\mathcal{R}(\bar\theta_{2m:3m})\right]\leq16\Lambda^{2}||(\Sigma+\Lambda)^{-1}\theta^{*}||^{2}_{\Sigma}+8\gamma\sigma^{2}\tilde\Lambda.
\end{displaymath}
\end{proposition}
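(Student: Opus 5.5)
We write \(\theta_t=w_t+\delta_t\), so that \(\bar\theta_{2m:3m}=\bar w_{2m:3m}+\bar\delta_{2m:3m}\), and combine the noiseless bound of Lemma~\ref{le:TotalBiasAtT} with the noise bound of Lemma~\ref{le:BiasVarDecomp}. The factor 16 (rather than 8) in the first term comes from one application of \((a+b)^2\le 2a^2+2b^2\) at the level of \(\|\cdot\|_\Sigma\). This requires no cross-term analysis.

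Concretely, \(2\mathcal{R}(\bar\theta_{2m:3m})=\|\bar w_{2m:3m}-\theta^*+\bar\delta_{2m:3m}\|_\Sigma^2\le 2\|\bar w_{2m:3m}-\theta^*\|_\Sigma^2+2\|\bar\delta_{2m:3m}\|_\Sigma^2\). Taking expectations gives
\begin{displaymath}
E[\mathcal{R}(\bar\theta_{2m:3m})]\le 2E[\mathcal{R}(\bar w_{2m:3m})]+E[\|\bar\delta_{2m:3m}\|_\Sigma^2].
\end{displaymath}
For the first term, \eqref{eq:ConvExcessRisk} and Lemma~\ref{le:TotalBiasAtT} give \(E[\mathcal{R}(\bar w_{2m:3m})]\le E[\mathcal{R}(w_N)]\le 8\Lambda^2\|(\Sigma+\Lambda)^{-1}\theta^*\|_\Sigma^2\), which doubles to the stated \(16\Lambda^2\|(\Sigma+\Lambda)^{-1}\theta^*\|_\Sigma^2\). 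For the second term, we showed that \(E[\delta_t]=0\) for all \(t\), hence \(E[\bar\delta_{2m:3m}]=0\) and \(E[\|\bar\delta_{2m:3m}\|_\Sigma^2]=\var_\Sigma(\bar\delta_{2m:3m})\le 8\gamma\sigma^2\tilde\Lambda\) by Lemma~\ref{le:BiasVarDecomp}. The hypotheses of that lemma are contained in those of Lemma~\ref{le:TotalBiasAtT}.

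Adding the two bounds yields exactly the claimed inequality. Nothing here is a real obstacle, since the work lies in Lemmas~\ref{le:TotalBiasAtT} and~\ref{le:BiasVarDecomp}. The only point to check is that the constants match. A sharper route would expand the square and show the cross term \(E\langle \bar w_{2m:3m}-\theta^*,\bar\delta_{2m:3m}\rangle_\Sigma\) vanishes, but \(w_t\) and \(\delta_t\) are driven by the same \(x_t\)'s, so they are not independent and that cross term need not be zero. The crude inequality is therefore the right choice, and it reproduces the stated constants.
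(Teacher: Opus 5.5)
Your proposal is correct and follows the paper's proof: the split \(\bar\theta_{2m:3m}=\bar w_{2m:3m}+\bar\delta_{2m:3m}\), the inequality \(\|u+v\|_\Sigma^2\le 2\|u\|_\Sigma^2+2\|v\|_\Sigma^2\), then \eqref{eq:ConvExcessRisk} with Lemma~\ref{le:TotalBiasAtT} for the noiseless part and Lemma~\ref{le:BiasVarDecomp} (using \(E[\bar\delta_{2m:3m}]=0\)) for the noise part. Your constants \(16\) and \(8\) match the paper's.
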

\begin{proof}
As \(||u+v||^{2}_{\Sigma}\leq 2||u||^{2}_{\Sigma}+2||v||^{2}_{\Sigma}\) for \(u,v\in\mathcal{H}\), we have \begin{eqnarray*}E[||\bar\theta_{2m:3m}-\theta^{*}||_{\Sigma}^{2}]&\leq&2E[||\bar w_{2m:3m}-\theta^{*}||_{\Sigma}^{2}]+2E[||\bar \delta_{2m:3m}||_{\Sigma}^{2}]\\&\le&32\Lambda^{2}||(\Sigma+\Lambda)^{-1}\theta^{*}||^{2}_{\Sigma}+16\gamma\sigma^{2}\tilde\Lambda,\end{eqnarray*}where the second inequality follows from  \eqref{eq:TotalBiasAtT}, \eqref{eq:ConvExcessRisk}, and Lemma \ref{le:BiasVarDecomp}.   
\end{proof}
Building on the proofs of Theorem~\ref{th:noiseless} and Lemma~\ref{le:BiasVarDecomp}, Theorem~\ref{th:Main} yields bounds on the squared bias and variance of \(\bar{\theta}_{2m:3m}\), and consequently on the expected excess risk of \(\bar{\theta}_{2m:3m}\) and of averages of its independent copies.

\begin{theorem}\label{th:Main} Under the assumptions of Lemma~\ref{le:TotalBiasAtT}, we have

\begin{equation}\label{eq:BiasOfAvgBound}
||E[\bar\theta_{2m:3m}]-\theta^{*}||^{2}_{\Sigma}\le 4||(I-\gamma \Sigma )^{m}\theta^{*}||_{\Sigma}^{2},
\end{equation}and \begin{equation}
\label{eq:VarOfAvgBound}\var _{\Sigma}(\bar\theta_{2m:3m})\leq    128\gamma R^{2}\Lambda^{2}\tilde\Lambda ||(\Sigma+\Lambda)^{-1}(\Sigma+\tilde\Lambda)^{-1/2}\theta^*||_{\Sigma}^{2}+16\gamma\sigma^{2}\tilde\Lambda.
\end{equation}Moreover, for \(m'\geq1\),\begin{equation}\label{eq:totalError}
E\left[\mathcal{R}(\bar\theta_{2m:3m}^{(m')})\right]=\frac{1}{2} ||E[\bar\theta_{2m:3m}]-\theta^{*}||^{2}_{\Sigma}+\frac{1}{2m'}\var _{\Sigma}(\bar\theta_{2m:3m}).
\end{equation}
\end{theorem}
The relation \eqref{eq:totalError} is a special case of \eqref{eq:modelAvg} and yields a bound on \(E[\mathcal{R}(\bar\theta_{2m:3m})]\) when \(m'=1\). Here again, a diagonalization argument shows that this bound is no worse than that of Proposition \ref{pr:SimpleBound}, up to a constant factor.  When \(\sigma=0\), this bound coincides with the righthand side of  \eqref{eq:TotalBiasAtTCombined}, up to a constant factor.

\begin{remark} For any constant \(a>2\), replacing \(\bar{\theta}_{2m:3m}\) by \(\bar{\theta}_{2m:am}\) preserves  \eqref{eq:BiasOfAvgBound}, while  \eqref{eq:VarOfAvgBound} continues to hold up to a constant factor that depends on \(a\). Likewise,  \eqref{eq:TotalBiasAtT}, \eqref{eq:TotalBiasAtTBis}, and  \eqref{eq:TotalBiasAtTCombined} still hold, up to a constant factor that depends on \(a\), if \(N\) is uniformly chosen in \(\{2m,\dots,am-1\}\). \end{remark}

\section{Convergence results}\label{se:SourceCapacity}
 To establish convergence rates for the expected excess risk of SGDIR and compare them with the existing literature, we state capacity and source assumptions in Section~\ref{sub:Assumptions}. These assumptions, and related variants, are commonly used in the analysis of SGD  and ridge or kernel regression for least-squares problems \cite{vito2007optimal,BachDieuleveut2016,rudi2017generalization,BachLeastSquaresJMLR2017,mucke2019beating,bach2020tight,Flammarion2021last}. In Section~\ref{sub:implications}, we use  Theorems~\ref{th:noiseless} and \ref{th:Main} to derive convergence results under these assumptions. In Section \ref{sub:comparison}, we compare our results with previously established bounds.
Throughout this section, we focus on the noiseless case. Convergence results can also be derived for    \(\bar\theta_{2m,3m}\)  in the general setting but, up to a polylogarithmic factor,  they match those of  \citet[Theorem 2]{BachDieuleveut2016} for \(r\leq0\) and  those of  
  \citet[Corollary 1]{mucke2019beating} for \(r\geq0\).    \subsection{Capacity and Source assumptions}\label{sub:Assumptions}For \(r\geq0\), we can define \(\Sigma^{r}\) as the unique linear operator such that \(\Sigma^{r}e_{i}=\mu_{i}^{r}e_{i}\) for \(i\in J\). By convention, for \(r<0\) and \(v\in\mathcal{H}\), the equality  \(\theta^{*}=\Sigma^{r}v\) means that  \(v=\Sigma^{-r}\theta^{*}\).\begin{description}
\item[Assumption Ac.] We have \(\mu_{i}\le ci^{-\alpha}\) for \(i\in J\), where \(\alpha\geq1\) and \(c\geq0\) are constants.
\end{description}
The parameter \(\alpha\) quantifies the strength of Assumption~Ac. The larger \(\alpha\) is, the stronger the assumption.  As \(\tr(\Sigma)\geq i\mu_{i}\) for any \(i\in J\), Assumption Ac always holds with \(\alpha=1\) and \(c=\tr(\Sigma)\). When \(\mathcal{H}\) has a finite dimension \(d\), Assumption Ac holds for any \(\alpha>1\) and  \(c=\mu _{0}d^{\alpha}\). Intuitively, large values of \(\alpha\) correspond to data distributions with low effective dimension, leading to faster convergence rates for SGD-like algorithms.   The slightly stronger capacity condition \(\tr(\Sigma^{1/\alpha})<\infty\) is often used in the literature \cite{BachLeastSquaresJMLR2017,jun2019kernel}.  \begin{description}
\item[Assumption As.] There is \(v^{*}\in \mathcal{H}\) and a real number \(\rho^{*}\) such that \(\theta^{*}=\Sigma^{r}v^{*}\), with \(r\geq-1/2\) and \(||v^{*}||^{2}\leq \rho^{*}\).
\end{description}

Assumption~As always holds for \(r=-1/2\) and \(\rho^{*}=2\mathcal{R}(0)\). The parameter \(r\) measures the strength of Assumption~As: the larger \(r\) is, the stronger the assumption. Indeed, suppose that  \(\theta^{*}=\Sigma^{r}v^{*}\), with \(r\geq-1/2\) and \(||v^{*}||^{2}\leq \rho^{*}\), and let \(r'\in[-1/2,r]\). Then \(\theta^{*}=\Sigma^{r'}(\Sigma ^{r-r'}v^{*})\), and \(||\Sigma ^{r-r'}v^{*}||\leq R^{2r-2r'}||v^{*}||\) by \eqref{eq:SigmaLeR^2}. Thus, Assumption As holds  for \((r',R^{4r-4r'}\rho^{*})\).     
Intuitively, a large value of \(r\) indicates strong alignment of \(\theta^{*}\) with the important directions of \(\Sigma\), making the distribution of \((x,y)\) easier to fit. Assumptions Ac and As are also linked to the growth rate of the coordinates of \(\theta^{*}\) in the basis  \((e_{i})_{i\in J}\). Indeed, assume that \(J=\mathbb{N}-\{0\}\), and there are constants \(\alpha\geq1\), \(c>c'>0\), \(r\geq-1/2\),  \(\delta>1+2\alpha r\),  and \(W>0\)  such that \(c'i^{-\alpha}\le\mu_{i}\le ci^{-\alpha}\) and \(\langle \theta^{*},e_{i}\rangle i ^{\delta/2}\leq W\) for \(i\geq1\). Then Assumption As holds for \(r\) and a suitable choice of \(\rho^{*}\) \cite[Section 2.7]{BachDieuleveut2016}. Further discussion of the capacity and source assumptions, as well as their connections to Sobolev spaces, can be found in \cite{bach2024learning,bach2020tight,rosacso2025efficient}. Lemmas~\ref{le:BoundCapacity} and \ref{le:RegBounds} provide bounds on the quantities appearing in Theorems~\ref{th:noiseless} and \ref{th:Main} under Assumptions~Ac and As, respectively.    
\begin{lemma}\label{le:BoundCapacity}
Under Assumption Ac, we have \(\gamma m \tilde\Lambda\leq(4c\gamma m)^{1/\alpha}\).  
 \end{lemma}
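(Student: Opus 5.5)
Write \(\epsilon:=1/(4\gamma m)\) and \(T:=(c/\epsilon)^{1/\alpha}=(4c\gamma m)^{1/\alpha}\). By the definition of \(\tilde\Lambda\),
\[
\gamma m\tilde\Lambda=\tr\left[\Sigma^{2}(\Sigma+\epsilon)^{-2}\right]=\sum_{i\in J}\frac{\mu_i^2}{(\mu_i+\epsilon)^2},
\]
so the claim is that this sum is at most \(T\). If \(c=0\), all \(\mu_i\) vanish and there is nothing to prove, so assume \(c>0\). The plan is to dominate each term by a nonincreasing function of \(i\), compare the sum with an integral, and evaluate that integral in closed form.

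\emph{Pointwise domination.} The map \(\mu\mapsto\mu^2/(\mu+\epsilon)^2\) is nondecreasing on \([0,\infty)\). Assumption Ac gives \(\mu_i\le c\,i^{-\alpha}\), so each term is at most
\[
h(i):=\frac{(ci^{-\alpha})^2}{(ci^{-\alpha}+\epsilon)^2}=\frac{1}{\bigl(1+(i/T)^{\alpha}\bigr)^{2}},
\]
using \(\epsilon i^{\alpha}/c=(i/T)^{\alpha}\). Since \(h\) is nonincreasing on \([0,\infty)\), we have \(h(i)\le\int_{i-1}^{i}h(t)\,dt\). Summing over \(i\ge1\) and substituting \(t=Ts\),
\[
\gamma m\tilde\Lambda\le\int_0^\infty h(t)\,dt=T\int_0^\infty\frac{ds}{(1+s^\alpha)^2}.
\]
It therefore suffices to show that \(I_\alpha:=\int_0^\infty(1+s^\alpha)^{-2}ds\le1\) for every \(\alpha\ge1\).

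\emph{Evaluating \(I_\alpha\).} For \(\alpha=1\), \(I_1=1\) directly. For \(\alpha>1\), substitute \(v=s^\alpha\) to obtain a Beta integral:
\[
I_\alpha=\frac1\alpha B\!\left(\frac1\alpha,2-\frac1\alpha\right)=\Gamma\!\left(1+\frac1\alpha\right)\Gamma\!\left(2-\frac1\alpha\right).
\]
Put \(u=1/\alpha\in(0,1)\). Using \(\Gamma(2-u)=(1-u)\Gamma(1-u)\) and Euler's reflection formula \(\Gamma(1+u)\Gamma(1-u)=\pi u/\sin(\pi u)\), this becomes
\[
I_\alpha=\frac{\pi u(1-u)}{\sin(\pi u)}.
\]
So the bound \(I_\alpha\le1\) reduces to the elementary inequality \(\sin(\pi u)\ge\pi u(1-u)\) on \([0,1]\).

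\emph{The elementary inequality (the main obstacle).} Let \(f(u)=\sin(\pi u)-\pi u(1-u)\). Then \(f(0)=f(1)=0\), \(f'(0)=f'(1)=0\), and \(f''(u)=2\pi-\pi^2\sin(\pi u)\), and \(f\) is symmetric about \(1/2\).
\begin{itemize}
\item \(f''\ge0\) exactly where \(\sin(\pi u)\le2/\pi\). This set consists of an interval \([0,a]\) and its mirror image \([1-a,1]\).
\item On \([0,a]\), \(f\) is convex, so \(f\) lies above its tangent at \(0\), which is the zero line. Thus \(f\ge0\) there, and by symmetry \(f\ge0\) on \([1-a,1]\).
\item On the middle interval \([a,1-a]\), \(f\) is concave, so it is bounded below by the smaller of its values at the endpoints \(a\) and \(1-a\). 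Both are nonnegative by the previous step.
\end{itemize}
Hence \(f\ge0\) on \([0,1]\), so \(I_\alpha\le1\), which yields \(\gamma m\tilde\Lambda\le T=(4c\gamma m)^{1/\alpha}\).

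A cruder approach, bounding each term by \(\min(1,(T/i)^{2\alpha})\), only gives the constant \(2\alpha/(2\alpha-1)\). The exact form of the summand is what makes the constant \(1\) attainable, and that is why the closed-form evaluation of \(I_\alpha\) is needed.
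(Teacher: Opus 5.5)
Your proof is correct, and its reduction is the same as the paper's. You dominate \(\mu_i\) by \(ci^{-\alpha}\) using monotonicity of \(\mu\mapsto\mu^2/(\mu+\epsilon)^2\), compare the nonincreasing sum with \(\int_0^\infty\), and reduce everything to \(I_\alpha=\int_0^\infty(1+s^\alpha)^{-2}\,ds\le1\), equivalently \(\int_0^\infty u^{1/\alpha-1}(1+u)^{-2}\,du\le\alpha\).

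The difference lies in how this integral is bounded. The paper stays elementary. Integration by parts rewrites the integral as \(2\alpha\int_0^\infty u^{1/\alpha}(1+u)^{-3}\,du\). The substitution \(u\mapsto1/u\) on \([1,\infty)\) then folds it into \(2\alpha\int_0^1(u^{1/\alpha}+u^{1-1/\alpha})(1+u)^{-3}\,du\). Finally, the inequality \(u^{1/\alpha}+u^{1-1/\alpha}\le1+u\) on \((0,1]\), which is just \((1-u^{1/\alpha})(1-u^{1-1/\alpha})\ge0\), bounds it by \(2\alpha\int_0^1(1+u)^{-2}\,du=\alpha\).

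You instead evaluate \(I_\alpha\) exactly through the Beta function and Euler's reflection formula, obtaining \(\pi u(1-u)/\sin(\pi u)\) with \(u=1/\alpha\). You then prove \(\sin(\pi u)\ge\pi u(1-u)\) by your convex/concave case split, and each of these steps checks out.

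Your route gives the exact constant. For instance \(I_2=\pi/4\), while \(I_1=1\) and \(I_\alpha\to1\) as \(\alpha\to\infty\), so the bound \(I_\alpha\le1\) cannot be improved uniformly in \(\alpha\). The price is special-function identities plus a separate trigonometric inequality; the paper's route is shorter and fully self-contained.

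Your closing remark that the closed-form evaluation ``is needed'' to reach the constant \(1\) is therefore not accurate. The paper's folding argument attains the same constant without it.
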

\begin{proof}
 
Let  \(\lambda>0\). As \(\lambda\Sigma^{2}\left(\Sigma+\lambda\right)^{-2}\preccurlyeq\Sigma\), we have \(\lambda\tr\left[\Sigma^{2}\left(\Sigma+\lambda\right)^{-2}\right]\leq\tr[\Sigma]\leq R^{2}\). Moreover, under Assumption Ac, \begin{displaymath}
\tr\left[\Sigma^{2}(\Sigma+\lambda)^{-2}\right]\leq\sum^{\infty}_{i=1}\left(\frac{ci^{-\alpha}}{ci^{-\alpha}+\lambda}\right)^{2}\leq\int^{\infty}_{0}\left(\frac{c}{c+\lambda v^{\alpha}}\right)^{2}dv=\frac{1}{\alpha}\left(\frac{c}{\lambda}\right)^{1/\alpha}\int^{\infty}_{0}\frac{u^{1/\alpha-1}}{(1+u)^{2}}du.\end{displaymath} 
We have\begin{displaymath}
\int^{\infty}_{0}\frac{u^{1/\alpha-1}}{(1+u)^{2}}du=2\alpha\int^{\infty}_{0}\frac{u^{1/\alpha}}{(1+u)^{3}}du=2\alpha\int^{1}_{0}\frac{u^{1/\alpha}+u^{1-1/\alpha}}{(1+u)^{3}}du\leq\alpha,
\end{displaymath}where the first equation follows by integration by parts, the second by a change of variables, and the third from the inequality \(u^{1/\alpha}+u^{1-1/\alpha}\leq1+u\) for \(0<u\leq1\).
This yields \(\tr\left[\Sigma^{2}(\Sigma+\lambda)^{-2}\right]\leq\left(c/\lambda^{}\right)^{1/\alpha}\). Replacing \(\lambda\) with \((4\gamma m)^{-1}\) concludes the proof.
\end{proof}
Lemma \ref{le:RegBounds} and its proof are inspired by 
\citet[Lemma 14]{BachLeastSquaresJMLR2017}.    
\begin{lemma}\label{le:RegBounds} 
Suppose that Assumption As holds, and     \(\Lambda=(\log m)/(\gamma m)\). Then \begin{equation}\label{eq:regBiasBound}
||(I-\gamma \Sigma )^{m}\theta^{*}||^{2}_{\Sigma}\leq\left( \frac{2r+1}{2\gamma m} \right)^{2r+1}\rho^{*}.
\end{equation} If \(-1/2\le r \le1/2\) then\begin{equation}\label{eq:regVarBound}
 \Lambda^{2}\tilde\Lambda||(\Sigma+\Lambda)^{-1}(\Sigma+\tilde\Lambda)^{-1/2}\theta^*||^{2}_{\Sigma}\leq\rho^{*}\left(\frac{\log m}{\gamma m}\right)^{2r+1}.
\end{equation}If \(1/2\leq r\leq1\) then    \begin{equation}\label{eq:regVarBoundBis}
 \Lambda^{2}\tilde\Lambda||(\Sigma+\Lambda)^{-1}(\Sigma+\tilde\Lambda)^{-1/2}\theta^*||^{2}_{\Sigma}\leq\rho^{*}\left(\frac{\log m}{\gamma m}\right)^{2}\tilde\Lambda^{2r-1} .
\end{equation}    \comment{If \(1/2\leq r\leq 1\) then \begin{equation}\label{eq:regUpperB3}
 ||\theta_{\lambda}-\theta^{*}||^{2}_{\Sigma}\leq \lambda^{2}R^{4r-2}||v^{*}||^{2}.
\end{equation}}
\end{lemma}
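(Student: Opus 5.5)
We work in the eigenbasis \((e_i)_{i\in J}\) of \(\Sigma\). Write \(v^{*}=\sum_i v_i e_i\), so that \(\theta^{*}=\Sigma^{r}v^{*}\) has coordinates \(\mu_i^{r}v_i\) and \(\sum_i v_i^2\le\rho^{*}\). When \(r<0\), the relation \(v^*=\Sigma^{-r}\theta^*\) forces \(v_i=0\) whenever \(\mu_i=0\), so these indices contribute nothing and we may restrict to \(\mu_i>0\). Every quantity on the left-hand sides is then a sum \(\sum_i f(\mu_i)v_i^2\) for an explicit scalar function \(f\). Each bound therefore reduces to a uniform scalar inequality \(\sup_{\mu>0} f(\mu)\le C\), which gives a total bound of \(C\rho^{*}\). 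For all three parts the plan is to establish this scalar inequality.

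\emph{Bound \eqref{eq:regBiasBound}.} Here \(f(\mu)=\mu^{2r+1}(1-\gamma\mu)^{2m}\). By \eqref{eq:SigmaLeR^2} and \eqref{eq:GammaUpBound}, every eigenvalue satisfies \(0\le\gamma\mu_i\le1\). Hence \(0\le(1-\gamma\mu)^{2m}\le e^{-2\gamma m\mu}\). Set \(a=2r+1\ge0\) and \(b=2\gamma m\). Elementary calculus gives \(\sup_{\mu\ge0}\mu^{a}e^{-b\mu}=(a/(eb))^{a}\le(a/b)^{a}\), with the case \(a=0\) trivial. This equals \(\bigl((2r+1)/(2\gamma m)\bigr)^{2r+1}\), as required. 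Note that the choice of \(\Lambda\) plays no role in this part.

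\emph{Bounds \eqref{eq:regVarBound} and \eqref{eq:regVarBoundBis}.} The left-hand side equals \(\sum_i g(\mu_i)v_i^2\), where
\[
g(\mu)=\frac{\Lambda^2\tilde\Lambda\,\mu^{2r+1}}{(\mu+\Lambda)^2(\mu+\tilde\Lambda)}.
\]
The key tool is the weighted AM--GM inequality \(\mu^{p}\nu^{q}\le(\mu+\nu)^{p+q}\) for \(p,q\ge0\) and \(\mu,\nu\ge0\). It follows because \(\mu^p\nu^q\le(\mu+\nu)^p(\mu+\nu)^q\).

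For \(-1/2\le r\le 1/2\), take \(p=2r+1\) and \(q=1-2r\) with \(\nu=\Lambda\). This gives \(\Lambda^2\mu^{2r+1}/(\mu+\Lambda)^2\le\Lambda^{2r+1}\). Bounding \(\tilde\Lambda/(\mu+\tilde\Lambda)\le1\) then yields \(g(\mu)\le\Lambda^{2r+1}\). Substituting \(\Lambda=\log m/(\gamma m)\) gives \eqref{eq:regVarBound}.

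For \(1/2\le r\le1\), split the terms differently. First use \(\mu^2/(\mu+\Lambda)^2\le1\), which leaves \(\Lambda^2\,\tilde\Lambda\mu^{2r-1}/(\mu+\tilde\Lambda)\). Next apply AM--GM with \(p=2r-1\in[0,1]\), \(q=2-2r\) and \(\nu=\tilde\Lambda\). This gives \(\mu^{2r-1}\tilde\Lambda^{2-2r}\le\mu+\tilde\Lambda\), that is, \(\tilde\Lambda\mu^{2r-1}/(\mu+\tilde\Lambda)\le\tilde\Lambda^{2r-1}\). The case \(\tilde\Lambda=0\) is trivial because then \(g\equiv0\). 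Altogether \(g(\mu)\le\Lambda^2\tilde\Lambda^{2r-1}\), which is \eqref{eq:regVarBoundBis}.

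The only real obstacle is choosing, in each regime of \(r\), which factor to absorb into which denominator so that the exponents stay nonnegative. The splits above are exactly what the ranges \(r\le1/2\) and \(r\in[1/2,1]\) allow. I would also double-check the boundary cases \(r=\pm1/2\) and \(r=1\), where an exponent vanishes, and the zero-eigenvalue case for \(r<0\) handled above.
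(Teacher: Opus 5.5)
Your proof is correct and is essentially the paper's argument. The paper also writes each left-hand side as $\langle v^{*},\cdot\,v^{*}\rangle$ and bounds $(I-\gamma\Sigma)^{2m}\Sigma^{2r+1}$ and $\Sigma^{2r+1}(\Sigma+\Lambda)^{-2}(\Sigma+\tilde\Lambda)^{-1}$ by diagonalization. It uses the same splits that your monotonicity inequality encodes: $\mu^{2r+1}\le(\mu+\Lambda)^{2r+1}$ and $(\mu+\Lambda)^{2r-1}\le\Lambda^{2r-1}$ for $r\le 1/2$, and $\mu^{2}\le(\mu+\Lambda)^{2}$ and $\mu^{2r-1}\le(\mu+\tilde\Lambda)^{2r-1}$ for $r\ge 1/2$. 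The only cosmetic difference is in the bias bound, where the paper applies $(1-z)^{2m}z^{2r+1}\le((2r+1)/(2m))^{2r+1}$ on $[0,1]$ directly rather than passing through $e^{-2\gamma m\mu}$; your route in fact gains an extra factor $e^{-(2r+1)}$.
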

\begin{proof}
 We have
\(
||(I-\gamma \Sigma )^{m}\theta^{*}||^{2}_{\Sigma}=\langle v^{*},Bv^{*}\rangle
\), where \(B=(I-\gamma \Sigma)^{2m}\Sigma^{2r+1}\). As \(e_{i}\) is an  eigenvalue of \(B\)  with corresponding eigenvalue    \((1-\gamma\mu_{i})^{2m}\mu_{i}^{2r+1}\), and  \(\gamma\mu_{i}\leq\gamma R^{2}\leq 1\) for \(i\in J\) by \eqref{eq:SigmaLeR^2} and \eqref{eq:GammaUpBound}, and since  \((1-z)^{2m}z^{2r+1}\leq((2r+1)/(2m))^{2r+1}\) for \(0\leq z\leq 1\), we have  \begin{displaymath}
B\preccurlyeq\left( \frac{2r+1}{2\gamma m} \right)^{2r+1}I.\end{displaymath}
This implies \eqref{eq:regBiasBound}. Moreover,  \begin{displaymath}
 ||(\Sigma+\Lambda)^{-1}(\Sigma+\tilde\Lambda)^{-1/2}\theta^*||^{2}_{\Sigma}=\langle v^{*},\Sigma^{2r+1}(\Sigma+\Lambda)^{-2}(\Sigma+\tilde\Lambda)^{-1}v^{*}\rangle.
\end{displaymath}If \(-1/2\le r \le1/2\) then\begin{displaymath}
\Sigma^{2r+1}(\Sigma+\Lambda)^{-2}(\Sigma+\tilde\Lambda)^{-1}\preccurlyeq(\Sigma+\Lambda)^{2r-1}(\Sigma+\tilde\Lambda)^{-1}\preccurlyeq\frac{\Lambda^{2r-1}}{\tilde\Lambda}I, \end{displaymath}where each inequality follows by a standard diagonalization argument. This implies \eqref{eq:regVarBound}. If \(1/2\leq r\leq1\) then \begin{displaymath}
\Sigma^{2r+1}(\Sigma+\Lambda)^{-2}(\Sigma+\tilde\Lambda)^{-1}\preccurlyeq(\Sigma+\tilde\Lambda)^{2r-2}\preccurlyeq\tilde\Lambda^{2r-2} I, \end{displaymath}
which yields  \eqref{eq:regVarBoundBis}. 
 \end{proof}
   
\subsection{Implications on Convergence} \label{sub:implications}  
Theorem~\ref{th:noiselessArAc} provides convergence bounds on \(E[\mathcal{R}(w_N)]\) under suitable assumptions.  
\begin{theorem}\label{th:noiselessArAc}
Suppose that \(m \ge 3\), Assumptions~A1, As and \eqref{eq:GammaUpBound} hold, and \(\Lambda = \log(m)/(\gamma m)\). 
If \(-1/2\le r \le1/2\) then \begin{equation}\label{eq:TotalBiasAtTArAc}
 E[\mathcal{R}(w_N)]\leq\frac{16}{(\gamma m)^{2r+1}}\rho^{*}\left(1+\gamma R^{2}(\log m)^{2r+1}\right).
\end{equation} 
If Assumption Ac holds and   \(1/2\leq r\leq1\) then \begin{equation}
\label{eq:TotalBiasAtTArAcBis}E[\mathcal{R}(w_N)]\leq\frac{16}{(\gamma m)^{2r+1}}\rho^{*}\left(1+\gamma R^{2} (\log m)^{2}(4c\gamma m)^{\frac{2r-1}{\alpha}}\right).
\end{equation}   

\end{theorem}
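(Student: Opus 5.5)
The plan is to apply Theorem~\ref{th:noiseless} with \(\Lambda=\log(m)/(\gamma m)\), which meets the hypothesis \(\Lambda\ge \log(m)/(\gamma m)\) of Lemma~\ref{le:TotalBiasAtT}. Then I bound the two terms on the right-hand side of \eqref{eq:TotalBiasAtTCombined} using Lemma~\ref{le:RegBounds}, and, in the second regime, also Lemma~\ref{le:BoundCapacity}.

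\emph{First term.} By \eqref{eq:regBiasBound},
\[
4\|(I-\gamma\Sigma)^m\theta^*\|_\Sigma^2\le 4\left(\tfrac{2r+1}{2}\right)^{2r+1}\frac{\rho^*}{(\gamma m)^{2r+1}}.
\]
Since \(r\le 1\), we have \(2r+1\le 3\) and \((2r+1)/2\le 3/2\), so this is at most \(4(3/2)^3\rho^*/(\gamma m)^{2r+1}=13.5\,\rho^*/(\gamma m)^{2r+1}\le 16\rho^*/(\gamma m)^{2r+1}\). For \(r\le 1/2\) we even have \((2r+1)/2\le 1\), so the factor is \(4\). In both regimes this gives the leading term \(16\rho^*/(\gamma m)^{2r+1}\).

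\emph{Second term, case \(-1/2\le r\le 1/2\).} By \eqref{eq:regVarBound}, the second term is at most
\[
16\gamma R^2\rho^*\left(\frac{\log m}{\gamma m}\right)^{2r+1}=\frac{16}{(\gamma m)^{2r+1}}\,\rho^*\,\gamma R^2(\log m)^{2r+1}.
\]
Adding the two bounds gives \eqref{eq:TotalBiasAtTArAc}.

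\emph{Second term, case \(1/2\le r\le 1\).} By \eqref{eq:regVarBoundBis}, the second term is at most
\[
16\gamma R^2\rho^*\left(\frac{\log m}{\gamma m}\right)^2\tilde\Lambda^{2r-1}.
\]
Lemma~\ref{le:BoundCapacity} gives \(\tilde\Lambda\le(4c\gamma m)^{1/\alpha}/(\gamma m)\). Because \(2r-1\ge 0\), raising this to the power \(2r-1\) preserves the inequality, so
\[
\tilde\Lambda^{2r-1}\le (4c\gamma m)^{(2r-1)/\alpha}(\gamma m)^{-(2r-1)}.
\]
Combining the factors \((\gamma m)^{-2}\) and \((\gamma m)^{-(2r-1)}\) gives \((\gamma m)^{-(2r+1)}\). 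The second term is therefore at most \(16\rho^*\gamma R^2(\log m)^2(4c\gamma m)^{(2r-1)/\alpha}/(\gamma m)^{2r+1}\), and adding the first-term bound yields \eqref{eq:TotalBiasAtTArAcBis}.

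There is no real obstacle here. The only points needing care are the constant check \(4((2r+1)/2)^{2r+1}\le 16\) for \(r\le 1\), and the sign condition \(2r-1\ge 0\) used when raising the capacity bound to a power. Everything else is direct substitution into earlier results.
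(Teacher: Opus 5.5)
Your proposal is correct and follows the paper's proof: you substitute \eqref{eq:regBiasBound} and \eqref{eq:regVarBound} (respectively \eqref{eq:regVarBoundBis} together with Lemma~\ref{le:BoundCapacity}) into the bound of Theorem~\ref{th:noiseless}. You also spell out two steps the paper leaves implicit: the constant check \(4\bigl(\tfrac{2r+1}{2}\bigr)^{2r+1}\le 13.5\le 16\) for \(r\le 1\), and the combination of the powers of \(\gamma m\).
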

\begin{proof}
If  \(-1/2\leq r\leq 1/2\) then applying Theorem~\ref{th:noiseless} together with  \eqref{eq:regBiasBound} and  \eqref{eq:regVarBound} implies \eqref{eq:TotalBiasAtTArAc}. Likewise, if Assumption Ac holds and    \(1/2\leq r\leq1\) then using   Theorem \ref{th:noiseless},   \eqref{eq:regBiasBound},  \eqref{eq:regVarBoundBis} and    Lemma~\ref{le:BoundCapacity} implies \eqref{eq:TotalBiasAtTArAcBis}. \end{proof}
Theorem~\ref{th:noiselessArAc} implies immediately the following. Note that  \eqref{eq:GammaUpBound} holds under the conditions stated in Corollary \ref{cor:ArAcNoiselessAvg} because  \(2\log(m)\leq m\) for \(m\geq3\).        
\begin{corollary}\label{cor:ArAcNoiselessAvg}Suppose that Assumptions~A1  and As hold, and \(\gamma\) is a constant with  \(0<2\gamma R^{2} \leq1\). Set   \(\Lambda=\log(m)/(\gamma m)\). \begin{enumerate}
\item 
If \(-1/2\le r \le1/2\),  then there exists a constant \(\kappa_{1}\), depending only on \(\rho^{*}\) and \(\gamma\), such that for  \(m \ge 3\),\begin{equation}\label{eq:corNoiseless}
E[\mathcal{R}(w_N)]\leq\kappa_{1}\left(\frac{\log m}{ m}\right)^{2r+1}.
\end{equation}\item
If   \(1/2\leq r\leq1\) and Assumption Ac holds, then there exists a constant \(\kappa_{2}\),  depending only on \(c\), \(\rho^{*}\) and \(\gamma\), such that for  \(m \ge 3\), \begin{equation}\label{eq:CorNoiselessBis}
E[\mathcal{R}(w_N)]\leq\kappa_{2}(\log m)^{2}m^{\frac{2r-1}{\alpha}-2r-1}.
\end{equation}
\end{enumerate}\end{corollary}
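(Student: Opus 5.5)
The plan is to specialize Theorem~\ref{th:noiselessArAc} to constant \(\gamma\) and absorb all \(m\)-independent quantities into \(\kappa_1,\kappa_2\). The first step is to check the hypotheses of Theorem~\ref{th:noiselessArAc}. The only nontrivial one is \eqref{eq:GammaUpBound}. With \(\Lambda=\log(m)/(\gamma m)\) we have \(\gamma\Lambda=\log(m)/m\le 1/2\), because \(2\log m\le m\) for \(m\ge3\). Together with \(\gamma R^2\le 1/2\), this gives \(\gamma(R^2+\Lambda)\le 1\). Assumptions A1 and As are given, and so is Ac in the second case.

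For the first part, with \(-1/2\le r\le 1/2\), I will apply \eqref{eq:TotalBiasAtTArAc} and use \(\gamma R^2\le 1/2\). The factor \(1+\gamma R^2(\log m)^{2r+1}\) is at most \(1+(\log m)^{2r+1}\). Since \(m\ge3\), we have \(\log m\ge \log 3>1\), and \(2r+1\ge 0\), so \((\log m)^{2r+1}\ge 1\). Hence this factor is at most \(2(\log m)^{2r+1}\), and
\[
E[\mathcal{R}(w_N)]\le \frac{32\rho^*}{\gamma^{2r+1}}\left(\frac{\log m}{m}\right)^{2r+1}.
\]
We may therefore take \(\kappa_1=32\rho^*\gamma^{-(2r+1)}\). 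Strictly speaking this also depends on \(r\). To make it depend only on \(\rho^*\) and \(\gamma\), one can bound \(\gamma^{-(2r+1)}\le \max(1,\gamma^{-2})\), which is valid since \(0\le 2r+1\le 2\).

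For the second part, with \(1/2\le r\le 1\) and Ac, I will apply \eqref{eq:TotalBiasAtTArAcBis} in the same way. The bracket \(1+\gamma R^2(\log m)^2(4c\gamma m)^{(2r-1)/\alpha}\) should be bounded by a constant times \((\log m)^2 m^{(2r-1)/\alpha}\). First, \((\log m)^2\ge (\log 3)^2>1\) and \(m^{(2r-1)/\alpha}\ge1\), so the leading \(1\) is absorbed. Second, \((4c\gamma)^{(2r-1)/\alpha}\le \max(1,4c\gamma)\), because the exponent lies in \([0,1]\). This gives
\[
E[\mathcal{R}(w_N)]\le \frac{16\rho^*}{\gamma^{2r+1}}\bigl(1+\max(1,4c\gamma)\bigr)(\log m)^2 m^{\frac{2r-1}{\alpha}-2r-1}.
\]
We can then take \(\kappa_2=16\rho^*\max(1,\gamma^{-3})(1+\max(1,4c\gamma))\).

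There is no real obstacle here. The only points needing care are the verification of \eqref{eq:GammaUpBound} and the bookkeeping that makes the constants independent of \(m\), and of \(r\) and \(\alpha\) if one wants the dependence exactly as stated.
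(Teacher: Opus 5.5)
Your proposal is correct and follows the paper's route. The paper also derives the corollary directly from Theorem~\ref{th:noiselessArAc}, verifying \eqref{eq:GammaUpBound} via $2\log m\le m$ for $m\ge 3$ together with $\gamma R^2\le 1/2$. Your explicit constant bookkeeping (absorbing the leading $1$ using $\log m\ge\log 3>1$, and removing the dependence on $r$ and $\alpha$ via $\max(1,\gamma^{-2})$, $\max(1,\gamma^{-3})$ and $\max(1,4c\gamma)$) fills in details the paper leaves implicit.
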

An immediate consequence of Corollary~\ref{cor:ArAcNoiselessAvg} is that
   \(E[\mathcal{R}(w_N)]=O\left(m^{-2}\log^{2}m\right)\) when \(r=1/2\), while \(E[\mathcal{R}(w_N)]\)  converges strictly faster than   \(m^{-2}\) when \(r>1/2\) and \(\alpha>1\). Moreover,   \(E[\mathcal{R}(w_N)]=O(m^{-3+\epsilon})\) for any \(\epsilon>\alpha^{-1}\) when \(r=1\).   However, values of \(r>1\) do not  lead to improved bounds in Corollary \ref{cor:ArAcNoiselessAvg}. Such saturation in the source parameter is common   in the SGD literature \cite{BachDieuleveut2016,BachLeastSquaresJMLR2017}.
\subsection{Relation with previous work}\label{sub:comparison} 
We assume throughout this subsection  that \(\sigma=0\).
Under assumptions similar to ours, \citet[Theorem 2]{BachDieuleveut2016} implies a bound on the expected excess risk of averaged SGD with \(m\) time steps of order
\(m^{-2r-1}\) for \(-1/2\le r\le 0\) and
\(m^{-2r-1+2r/\alpha}\) for \(0\le r\le 1/2\), with a saturation effect for \(r\ge 1/2\). 
 The bounds on    \(E[\mathcal{R}(\bar w_{2m:3m})]\) implied by  Corollary \ref{cor:ArAcNoiselessAvg} match their bounds, up to a \(\log^{2}m\) factor, when \(-1/2\leq r\leq 0\), and are strictly sharper when \(r>0\).
Likewise, for \(0\le r\le 1/2\),   \citet[Theorem 2 and Lemma 14]{BachLeastSquaresJMLR2017} together with the bound
\(
\tr[\Sigma(\Sigma+\lambda)^{-1}]
= O(\lambda^{-1/\alpha})
\) for \(\alpha>1\)
\cite[Section 7.6.6]{bach2024learning}, imply that the expected excess risk of regularized averaged SGD with a constant step size is
\(O(\lambda^{2r+1}+  \lambda^{2r-1}m^{-2}(1+\lambda^{-1/\alpha}))\),
where \(\lambda\) is the regularization parameter and \(m\) is the number of time steps.
Up to a constant factor, this bound is minimized for  \(\lambda^{}=m^{-2\alpha/(2\alpha+1)}\), yielding a rate of  order \(m^{-2\alpha(2r+1)/(2\alpha+1)}\)  which is strictly higher than the bound  on \(E[\mathcal{R}(\bar w_{2m:3m})]\) implied by \eqref{eq:corNoiseless}. Finally, \citet{jun2019kernel} study a variant of KRR  in an RKHS setting, under capacity and source assumptions similar to ours and a boundedness assumption on the kernel and \(y\). When \(2\alpha r<-1\), their bounds on the expected excess risk   strictly improve upon  the preceding  ones and ours, and are of order  \(n^{-\alpha(2r+1)/(1+2r\alpha+\alpha)}\), where \(n\) is the number of training samples. Unlike \citet{jun2019kernel}, we do not assume that \(y\) is bounded.

Although we have focused above on their results for the noiseless case, the aforementioned papers mostly address the noisy case.
In contrast,      \citet{bach2020tight}  specifically study the noiseless  setting. They assume that  Assumption As holds with \(r\geq0\) and impose an additional regularity condition on \(x\). They provide a tight bound of the form \begin{equation}\label{eq:BertierBach}
\min_{t\in[0,m]}E[\mathcal{R}(w_{t})]\leq\hat\kappa m^{-2r-1},
\end{equation}
where \(\hat{\kappa}>0\) is a constant (recall that \((w_{t})_{0\leq t\leq m} \) is a standard SGD sequence in a noiseless setting).  Their regularity condition implies that   \(\Sigma^{1-2r}\) is trace-class, and thus  \(2r<1\) when   \(\mathcal{H}\) is infinite-dimensional. Therefore, for suitable choices of \(\gamma\) and \(\Lambda\), our bound \eqref{eq:corNoiseless} holds under their assumptions and matches the righthand side of \eqref{eq:BertierBach} up to a \(\log^{2}m\)  factor.  However, since \(2r<1\), the rate in \eqref{eq:BertierBach} decays strictly more slowly than \(m^{-2}\). Under related conditions, \citet{Flammarion2021last} provide a bound of order \(m^{-2r-1}\) on \(E[\mathcal{R}(w_{m})]\) for \(-1/2<r<1/2\).

In summary, when \(0<r<1/2\), our bounds for both averaged and non-averaged SGDIR are either strictly sharper than existing ones or hold under weaker assumptions. Moreover, whereas the preceding bounds saturate for \(r\geq 1/2\),  ours saturate only for \(r\geq 1\).    However, we do not establish convergence properties for the last iterate of SGDIR; this is left for future work.
\section{Comparison with ridge regression}\label{se:ridgeComparison}

In this section, we assume that \eqref{eq:R2Assumption} and the following conditional moments assumption hold. \begin{description}
\item[Assumption Acm.] There is a vector \(\theta^{*}\in \mathcal{H}\) and a constant  \(\underline\sigma\ge0\) with \(E[y - \langle x, \theta^{*} \rangle|x]=0\) and  \(E[(y - \langle x, \theta^{*} \rangle)^{2}|x]\geq\underline\sigma^{2}\).
\end{description}
Note that Assumption~Acm implies \eqref{eq:NormalEq}, and hence \(\theta^{*}\) minimizes \(L\).
Conversely, if there exists \(\theta^{*}\in\mathcal{H}\) such that
\(y-\langle x,\theta^{*}\rangle\) is mean-zero and independent of \(x\), then Assumption~Acm and \eqref{eq:SigmaAssumption} hold with
     \(\sigma ^{2}=\underline\sigma^{2}=E[(y - \langle x, \theta^{*} \rangle)^{2}]\).
If both Assumption~Acm and \eqref{eq:SigmaAssumption} hold and \(\Sigma\) is non-zero, then \(\underline\sigma\leq \sigma\).

Let \(\hat\theta_{\lambda}=\hat\theta_{\lambda,n}\) be the estimator of \(\theta^{*}\) obtained from a ridge regression using regularization parameter \(\lambda>0\) and  \(n\) independent copies \((x_{i},y_{i})\) of \((x,y)\), \(1\leq i\leq n\). Thus, \begin{displaymath}
\hat\theta_{\lambda}=\arg\min_{\theta\in\mathcal{H}}\sum^{n}_{i=1}(y_{i}-\langle x_{i},\theta\rangle)^{2}+\lambda||\theta||^{2}.
\end{displaymath}Let \(\hat\Sigma:=n^{-1}\sum^{n}_{i=1}x_{i}\otimes x_{i}\) be the empirical covariance operator. As \(E[x\otimes x]=\Sigma\), we have \(E[\hat\Sigma]=\Sigma\). Moreover,\begin{displaymath}
E[(x\otimes x-\Sigma)^{2}]=E[||x||^{2}x\otimes x]-\Sigma^{2}\preccurlyeq R^{2}\Sigma.
\end{displaymath}  By a standard calculation,  it follows that \(E[(\hat\Sigma-\Sigma)^{2}]\preccurlyeq (R^{2}/n)\Sigma \).  Hence  \begin{equation}
\label{eq:FirstTwoMoments}
E[\hat\Sigma]=\Sigma\text{ and }E[\hat\Sigma^{2}]\preccurlyeq\Sigma^{2}+ \frac{R^{2}}{n}\Sigma.
\end{equation} Furthermore, a straightforward adaptation of  \citet[Section 7.6.2]{bach2024learning} yields \begin{equation}\label{eq:RidgeBiasVar}
2E\left[\mathcal{R}(\hat\theta_{\lambda})\right]\geq\lambda^{2}E\left[||(\hat\Sigma+\lambda)^{-1}\theta^{*}||^{2}_{\Sigma}\right]+\frac{\underline\sigma^{2}}{n}E\left[\tr\left[(\hat\Sigma+\lambda)^{-2}\hat\Sigma\Sigma\right]\right].
\end{equation}
We now  derive lower bounds on each term on the righthand side of \eqref{eq:RidgeBiasVar}. 
\subsection{First term} 
We apply the equality \(E[\hat\Sigma]=\Sigma\) to derive a lower bound on the first term. When \(\mathcal{H}\) is one-dimensional, such a lower bound follows immediately from Jensen's inequality together with the convexity of \(z\mapsto (z+\lambda)^{-2}\) on \(\mathbb{R}_+\). In contrast, convexity on \(\mathbb{R}_+\) does not, in general, imply the corresponding operator inequality \cite[Example V.1.4]{bhatia2013matrix}. Lemma~\ref{le:OperatorCvxSq} circumvents this issue by establishing a Jensen-type inequality for the function \(z\mapsto (z+1)^{-2}\) in the operator setting.

 \begin{lemma}\label{le:OperatorCvxSq}Let \(U\) be a self-adjoint, positive semidefinite, square-integrable operator on \(\mathcal{H}\), and let \(V = E[U]\). Then \(V (V+I)^{-2}\preccurlyeq 4E[(U+I)^{-1} V (U+I)^{-1}]
\). 
\end{lemma}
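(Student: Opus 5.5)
The plan is to combine a variational (Schur-complement) lower bound for quadratic forms with the operator convexity of \(z\mapsto(z+1)^{-1}\) on \(\mathbb{R}_+\). Scalar convexity of \((z+1)^{-2}\) does not transfer to operators, but operator convexity of \((z+1)^{-1}\) does hold. Hence Jensen's operator inequality gives
\[
G:=E[(U+I)^{-1}]\succcurlyeq (V+I)^{-1}.
\]
Set \(Z=(U+I)^{-1}\), so that \(0\preccurlyeq Z\preccurlyeq I\) and \(ZVZ=X^{*}X\) with \(X=V^{1/2}Z\). For every bounded operator \(Y\) we have \((X-Y)^{*}(X-Y)\succcurlyeq0\), that is,
\[
ZVZ\succcurlyeq X^{*}Y+Y^{*}X-Y^{*}Y .
\]
The idea is to choose \(Y\) deterministic, so that taking expectations leaves only the first moment \(G\) of \(Z\) appearing linearly. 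Square-integrability of \(U\) and boundedness of \(Z\) justify exchanging expectation with these bounded operations.

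With the choice \(Y=sV^{1/2}(V+I)^{-1}\), where \(s>0\) is a scalar to be tuned, taking expectations gives
\[
E[ZVZ]\succcurlyeq s\bigl(GV(V+I)^{-1}+(V+I)^{-1}VG\bigr)-s^{2}V(V+I)^{-2}.
\]
If \(G\) could be replaced by \((V+I)^{-1}\), the right-hand side would become \((2s-s^{2})V(V+I)^{-2}\). This is already of the desired form, and \(s=1\) would even give constant \(1\). The factor \(4\) in the statement therefore leaves room to absorb the loss incurred in replacing \(G\) by its lower bound.

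The main obstacle is exactly this replacement. The symmetrized product \(GA+AG\), with \(A=V(V+I)^{-1}\succcurlyeq0\), is not monotone in \(G\) when \(G\) and \(A\) do not commute. I would first try to move all the randomness into a single sandwiched factor by choosing \(Y\) adapted to \(G\). Concretely, take \(Y=V^{1/2}G\), which gives the intermediate inequality
\[
E[ZVZ]\succcurlyeq GVG.
\]
This is the clean operator Jensen step, obtained by expanding \(E[(Z-G)V(Z-G)]\succcurlyeq0\). It then remains to prove
\[
GVG\succcurlyeq \tfrac14 V(V+I)^{-2}\quad\text{whenever } G\succcurlyeq(V+I)^{-1}\text{ and } G\preccurlyeq I.
\]

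For this deterministic step I would write \(G=(V+I)^{-1}+D\) with \(D\succcurlyeq0\) and reapply the variational inequality. Take \(X=V^{1/2}G\) and \(Y=\tfrac12V^{1/2}(V+I)^{-1}\), which gives
\[
GVG\succcurlyeq \tfrac12\bigl(GV(V+I)^{-1}+(V+I)^{-1}VG\bigr)-\tfrac14V(V+I)^{-2}.
\]
Substituting \(G=(V+I)^{-1}+D\) produces
\[
GVG\succcurlyeq \tfrac34 V(V+I)^{-2}+\tfrac12\bigl(DA+AD\bigr).
\]
The remaining task is to show that the possibly indefinite term \(DA+AD\) costs at most \(V(V+I)^{-2}\). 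For this I would use \(\tfrac12(DA+AD)\succcurlyeq-\tfrac12(\epsilon^{-1}DAD+\epsilon A)\)-type estimates. These would be combined with the constraint \(G\preccurlyeq I\), which bounds \(D\preccurlyeq V(V+I)^{-1}=A\), so that \(DAD\) is controlled by a multiple of \(A^{3}\preccurlyeq A^{2}\)-type quantities. The value of \(\epsilon\) would then be tuned to land on the constant \(4\).

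If this bookkeeping fails, the fallback is to choose \(Y\) depending on both \(V\) and \(G\), for instance \(Y=V^{1/2}G^{1/2}(V+I)^{-1/2}\), so that the cross terms become congruences of \(G\) rather than anticommutators.
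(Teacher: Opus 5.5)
Your reduction is the same as the paper's. Operator Jensen gives \((V+I)^{-1}\preccurlyeq G\preccurlyeq I\) for \(G=E[(U+I)^{-1}]\). Expanding \(E[(Z-G)V(Z-G)]\succcurlyeq0\) gives \(E[ZVZ]\succcurlyeq GVG\). Everything then rests on the deterministic inequality \(GVG\succcurlyeq\frac14V(V+I)^{-2}\). Your first variational step is also correct. With \(A=V(V+I)^{-1}\) and \(D=G-(V+I)^{-1}\), so that \(0\preccurlyeq D\preccurlyeq A\preccurlyeq I\) and \(V(V+I)^{-2}=A(I-A)\), the claim becomes exactly \(DA+AD\succcurlyeq-A(I-A)\).

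\textbf{The gap.} The tool you propose for this step cannot prove it. A Young-type bound \(DA+AD\succcurlyeq-\epsilon^{-1}DAD-\epsilon A\) incurs a strictly negative loss even when \(D\) and \(A\) commute, where \(DA+AD=2DA\succcurlyeq0\) costs nothing. The target \(-A(I-A)\), by contrast, vanishes where \(A\) is close to \(I\), i.e.\ where \(V\) is large. That is precisely the regime \(V=\lambda^{-1}\Sigma\) with small \(\lambda\) in which the lemma is used. Take the admissible choice \(G=I\), so \(D=A\). The requirement \(\epsilon^{-1}DAD+\epsilon A\preccurlyeq A(I-A)\) then reads \(\epsilon^{-1}a^{2}+\epsilon\le1-a\) for \(a\) in the spectrum of \(A\). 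By AM--GM this fails whenever \(\|A\|>1/3\), whatever \(\epsilon>0\); the same computation defeats any weight commuting with \(A\).

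Two further points block the plan. First, \(D\preccurlyeq A\) does not imply \(DAD\preccurlyeq cA^{2}\) for any constant \(c\), since squaring is not operator monotone. Second, your fallback does not remove the difficulty. With \(X=V^{1/2}G\) and \(Y=V^{1/2}G^{1/2}(V+I)^{-1/2}\), the cross term \(X^{*}Y=GVG^{1/2}(V+I)^{-1/2}\) is still a non-symmetric product, not a congruence of \(G\).

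\textbf{How to close it.} What is missing is an exact congruence that absorbs the anticommutator. In your notation, write \(D=A^{1/2}KA^{1/2}\) with \(0\preccurlyeq K\preccurlyeq I\). This is direct if \(A\) is invertible; otherwise replace \(A\) by \((1-\eta)A+\eta I\), which still lies between \(D\) and \(I\), and let \(\eta\to0\). Then
\[ A(I-A)+DA+AD=A^{1/2}\bigl(I-A+KA+AK\bigr)A^{1/2},\qquad I-A+KA+AK=\bigl(I-(I-K)A(I-K)\bigr)+KAK\succcurlyeq0, \]
because \((I-K)A(I-K)\preccurlyeq(I-K)^{2}\preccurlyeq I\). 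This completes your argument with exactly the constant \(4\).

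The paper handles the same deterministic step via Lemma~\ref{le:detOpIneq}. Set \(B=V(V+I)^{-1}\) and \(A'=(V+I)^{-1/2}G^{-1}(V+I)^{-1/2}\); then \(0\preccurlyeq I-B\preccurlyeq A'\preccurlyeq I\). It uses the identity \((2I-A')B(2I-A')=2B+2(I-A')B(I-A')-A'BA'\) together with \((I-A')B(I-A')\preccurlyeq(I-A')^{2}\preccurlyeq I-A'\preccurlyeq B\). This gives \(A'BA'\preccurlyeq4B\). Conjugating by \(A'^{-1}\) and then by \((V+I)^{-1/2}\) yields \(V(V+I)^{-2}\preccurlyeq4GVG\).
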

Applying Lemma \ref{le:OperatorCvxSq} with \(U=\lambda^{-1}\hat\Sigma\) and noting that \(V=\lambda^{-1}\Sigma\) yields \begin{displaymath}
\Sigma (\Sigma+\lambda)^{-2}\preccurlyeq 4E\left[(\hat\Sigma+\lambda)^{-1} \Sigma (\hat\Sigma +\lambda)^{-1}\right].
\end{displaymath}
Consequently,\begin{equation}\label{eq:RidgeFirstTerlLowerBound}
|| (\Sigma+\lambda)^{-1}\theta^{*}||^{2}_{\Sigma}\leq4E\left[||(\hat\Sigma+\lambda)^{-1}\theta^{*}||^{2}_{\Sigma}\right].
\end{equation}

\subsection{Second term}We now obtain a lower bound on the second term by exploiting \eqref{eq:FirstTwoMoments}. Consider first  the case where \(\mathcal{H}\) is one-dimensional, and assume for simplicity  that \(\lambda=1\). The second term is proportional to \(E[f(\hat\Sigma)]\), where  \(f(z)=z(z+1)^{-2}\) for \(z\geq0\).    Inspired by  \citet{BP02}, we first bound \(f\) from below on \([0,\infty)\)  by  a quadratic function that matches \(f\) at the origin and is tangent to \(f\) at a second  point. 
 Specifically, for \(a,s\geq0\),  the equality\begin{displaymath}
(s+1)^{3}-(a+1)^{2}(3s+1-2a)=(a-s)^{2}(2a+s+3)
\end{displaymath} 
implies that \begin{equation}\label{eq:rationalInequality}
f(a)\geq(s+1)^{-3}[(3s+1)a-2a^{2}].
\end{equation}
Note  that \(f\) coincides at \(0\) with the quadratic function of \(a\) defined by the righthand side, and is tangent to it at \(s\).
Replacing \(a\) by \(\hat \Sigma\), taking expectations and using  \eqref{eq:FirstTwoMoments} yields \begin{equation}
E[f(\hat\Sigma)]\ge(s+1)^{-3}\Sigma\left[(3s+1)-2\left(\Sigma+ \frac{R^{2}}{n}\right)\right].  
\end{equation}
The righthand side is maximized when \(s=\Sigma+R^{2}/n\), implying that \begin{displaymath}
E[f(\hat\Sigma)]\ge\left(\Sigma+ \frac{R^{2}}{n}+1\right)^{-2}\Sigma.
\end{displaymath}

To generalize this approach to arbitrary dimensions, we seek an operator analogue of \eqref{eq:rationalInequality} in which \(a\) is replaced by \(\hat\Sigma\) and \(s\) by \(\Sigma+R^{2}/n\),  up to an appropriate scaling by \(\lambda\).
 This is achieved via Lemma~\ref{le:TraceGAB} below, which lifts certain scalar inequalities to trace inequalities for operators under suitable conditions. 
Let \(A\) be a self-adjoint positive semidefinite operator on \(\mathcal{H}\) that is diagonalizable in an orthonormal eigenbasis \((u_i)_{i\in J}\) of \(\mathcal{H}\). Let \(a_i\) denote the eigenvalue of \(A\) associated with \(u_i\), so that
\(Au_{i}=a_{i}u_{i}\) for \(i\in J\).
As \(0\preccurlyeq A\), we have \(0\leq a_{i}\leq||A||\) for \(i\in J\).
We say that a rational function \(\phi\) is well defined on \([0,||A||]\) if it admits a representation \(\phi=P/Q\), where \(P\) and \(Q\) are real polynomials and \(Q(z)>0\) for  \(z\in[0,||A||]\). In this case, we define the operator \(\phi(A):=P(A)Q(A)^{-1}\) on  \(\mathcal{H}\).  As  \(\phi(A)u_{i}=\phi(a_{i})u_{i}\) for \(i\in J\),  the operator \(\phi_{}(A)\) does not depend on the choice of \(P\) and \(Q\).          
\begin{lemma}\label{le:TraceGAB}Let \(A\) and \(B\) be self-adjoint positive semidefinite operators on \(\mathcal{H}\), each diagonalizable in an orthonormal eigenbasis. Let  \(k\) be a positive integer, and for \(1\leq j\leq k\), let \(\phi_j\) and \(\nu_j\) be rational functions that are well defined on \([0,\|A\|]\) and \([0,\|B\|]\), respectively.     Assume that \(g(a,b)\ge 0\)  for \(a\in[0,||A||]\) and \(b\in[0,||B||]\), where  \(g(a,b):=\sum^{k}_{j=1}\phi_{j}(a)\nu_{j}(b)\), and that the operator   \(g(A,B):=\sum^{k}_{j=1}\phi_{j}(A)\nu_{j}(B)\)   is trace-class. Then \(\tr[g(A,B)]\ge0\).
\end{lemma}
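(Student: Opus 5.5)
The plan is to expand the trace in the product eigenbasis of \(A\) and \(B\), and then recognize each term as a nonnegative weight times a value of \(g\). Let \((u_i)_{i\in J}\) and \((w_k)_{k\in J}\) be orthonormal eigenbases with \(Au_i=a_iu_i\) and \(Bw_k=b_kw_k\). By the remark preceding the lemma, \(\phi_j(A)u_i=\phi_j(a_i)u_i\) and \(\nu_j(B)w_k=\nu_j(b_k)w_k\). Also, \(a_i\in[0,\|A\|]\) and \(b_k\in[0,\|B\|]\).

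Since \(g(A,B)\) is trace-class, its trace can be computed in any orthonormal basis, and we use the basis \((w_k)\):
\[
\tr[g(A,B)]=\sum_{k}\langle w_k,g(A,B)w_k\rangle=\sum_k\sum_{j=1}^{k_0}\nu_j(b_k)\langle w_k,\phi_j(A)w_k\rangle ,
\]
where \(k_0\) denotes the number \(k\) of terms in the statement. Expanding \(w_k=\sum_i\langle u_i,w_k\rangle u_i\) gives
\[
\langle w_k,\phi_j(A)w_k\rangle=\sum_i\phi_j(a_i)\,|\langle u_i,w_k\rangle|^2 .
\]
This series converges absolutely, because \(\phi_j\) is continuous on the compact interval \([0,\|A\|]\) and hence bounded there, and \(\sum_i|\langle u_i,w_k\rangle|^2=1\). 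Since the sum over \(j\) is finite, we can interchange it with the sum over \(i\) to obtain
\[
\langle w_k,g(A,B)w_k\rangle=\sum_i g(a_i,b_k)\,|\langle u_i,w_k\rangle|^2 .
\]
Each term here is nonnegative by the hypothesis on \(g\), so every diagonal entry is \(\ge0\).

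Summing over \(k\) gives \(\tr[g(A,B)]\ge0\). The outer series \(\sum_k\langle w_k,g(A,B)w_k\rangle\) converges because \(g(A,B)\) is trace-class. Its terms are nonnegative, so no rearrangement issue arises. Only the ordering \(\sum_k\sum_i\) is used, and we never need the double sum to converge absolutely.

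The main obstacle is justifying these interchanges, since \(g(A,B)\) need not be self-adjoint and the individual products \(\phi_j(A)\nu_j(B)\) need not be trace-class. The argument avoids this by using only two facts. First, the trace of a trace-class operator equals \(\sum_k\langle w_k,Tw_k\rangle\) in any orthonormal basis. Second, each diagonal entry splits into a finite sum over \(j\) of bounded-operator quadratic forms. We also use that \(\nu_j(B)w_k=\nu_j(b_k)w_k\) exactly, which follows from the well-definedness of rational functions of diagonalizable operators noted before the lemma.
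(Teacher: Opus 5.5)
Your proof is correct and follows the paper's own route. You compute the trace in an orthonormal eigenbasis of \(B\), reduce each diagonal entry \(\langle w_k,g(A,B)w_k\rangle\) to the quadratic form of the bounded operator \(\sum_j \nu_j(b_k)\phi_j(A)\), and conclude it is nonnegative. The paper phrases that last step as positive semidefiniteness of \(g(A,b)\), whose eigenvalues are \(g(a_i,b)\ge 0\); you instead write out the expansion \(\sum_i g(a_i,b_k)|\langle u_i,w_k\rangle|^2\) explicitly, which is the same argument.
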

\begin{proof}
Define  \(u_{i}\) and   \(a_{i}\) for \(i\in J\) as above, and let \(v\) be an eigenvector of \(B\) with eigenvalue \(b\). Thus   \(\phi_{j}(A)u_{i}=\phi_{j}(a_{i})u_{i}\) and  \(\nu_{j}(B)v=\nu_{j}(b)v\) for \(i\in J\)  and \(1\leq j\leq k\). 
Moreover,\begin{displaymath}
\langle v,g(A,B)v\rangle=\langle v,\sum^{k}_{j=1}\phi_{j}(A)\nu_{j}(B)v\rangle=\sum^{k}_{j=1}\nu_{j}(b)\langle v,\phi_{j}(A)v\rangle =\langle v,g(A,b)v\rangle,
\end{displaymath}
where \(g(A,b):=\sum^{k}_{j=1}\nu_{j}(b)\phi_{j}(A)\) by convention.    Because   \(g(A,b)u_{i}=g(a_{i},b)u_{i}\) for \(i\in J\) and  \(g(a_{i},b)\geq0\),   the operator \(g(A,b)\)  is self-adjoint and positive semidefinite. Thus,   \(\langle v,g(A,B)v\rangle\ge0\). As \(g(A,B)\) is trace-class, we have \(\tr[g(A,B)]=\sum_{i\in J}\langle v_{i},g(A,B)v_{i}\rangle\),  where \((v_i)_{i\in J}\) is an orthonormal eigenbasis of \(\mathcal{H}\) for \(B\), 
which yields  \(\tr[g(A,B)]\ge0\).
\end{proof}
\comment{\begin{lemma}Let \(g(a,b)=b\phi(a)+\nu(b)+(a-b)^{^{2}}\zeta(b)\). If \(g(a,b)\ge0\) for \(a,b\geq0\), then \(\tr\left[g(A,B))\ge0\) for any self-adjoint positive semidefinite operators  \(A\) and \(B\).
\end{lemma}
\begin{proof}Let \(v\) be an eigenvector of \(B\) with eigenvalue \(b\). Because \(A\) and \(B\) are self-adjoint,\begin{displaymath}
\langle v,g(A,B)v\rangle=\langle v,B\phi(A)v\rangle+\langle v,\nu (B)v\rangle+\langle v,(A-B)^{^{2}}\zeta(B)v\rangle=b\langle v,\phi(A)v\rangle+\nu (b)||v||^{2}+\zeta(b)||(A-bI)v||^{2}=\langle v,g(A,b)v\rangle,
\end{displaymath}
The eigenvalues of \(g(A,b)\) are  \(g(a_{i},b)\), where \((a_{i})_{i\in J}\) are the eigenvalues of \(A\). Since \(0\preccurlyeq A\) and  \(0\preccurlyeq B\), we have \(b\geq0\)  and \(a_{i}\ge0\) for \(i\in J\). Thus, the eigenvalues of \(g(A,b)\) are
nonnegative, which implies that \(\langle v,g(A,B)v\rangle\ge0\). Since this inequality holds for any eigenvector  \(v\)  of \(B\), it follows that  \(\tr\left[g(A,B))\ge0\).
\end{proof}
}
\begin{lemma}\label{le:lowerBoundTr}For \(\lambda>0\),  \begin{equation}\label{eq:lowerBoundTr}
E\left[\tr\left(\left(\hat\Sigma+\lambda\right)^{-2}\hat\Sigma\Sigma\right)\right]\ge\tr\left[\Sigma^{2}\left(\Sigma+\frac{R^{2}}{n}+\lambda\right)^{-2}\right].
\end{equation} 
\end{lemma}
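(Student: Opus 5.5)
We lift the scalar inequality \eqref{eq:rationalInequality} to operators via Lemma~\ref{le:TraceGAB}, with \(A=\lambda^{-1}\hat\Sigma\) and \(B=\lambda^{-1}S\), where \(S:=\Sigma+R^{2}/n\). The scalar inequality must first be rewritten so that it is exactly linear in \(b\). For \(a,s\ge0\), \eqref{eq:rationalInequality} reads \(a(a+1)^{-2}\ge (s+1)^{-3}[(3s+1)a-2a^{2}]\). The term \(s\,a\) in \((3s+1)a\) would, after lifting, become a non-symmetric product \(B\hat\Sigma\), so we replace it by something adapted to the covariance structure. Since we want \(\Sigma\) to appear linearly, multiply the inequality by a parameter \(\beta\ge 0\), which will be an eigenvalue of \(\lambda^{-1}\Sigma\). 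Write \(\Sigma=S-R^{2}/n\preccurlyeq S\), and note that \(\Sigma\) and \(S\) share the eigenbasis \((e_i)\).

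\emph{Step 1 (scalar inequality).} For \(a\ge0\) and eigenvalues \(s\) of \(\lambda^{-1}S\), \(\beta\) of \(\lambda^{-1}\Sigma\), with \(0\le\beta\le s\), set
\[
g(a,\cdot)=\beta\, a(a+1)^{-2}-\beta (s+1)^{-3}[(3s+1)a-2a^{2}]\ge0 .
\]
This is a sum of products of rational functions in \(a\) (namely \(a(a+1)^{-2}\), \(a\), \(a^{2}\)) with rational functions of the eigenvalue index. Apply Lemma~\ref{le:TraceGAB} with \(A=\lambda^{-1}\hat\Sigma\) and with \(B\) the diagonal operator \(\lambda^{-1}\Sigma\); since \(S\) is a function of \(\Sigma\), the coefficients \(\beta(s+1)^{-3}(3s+1)\) and \(\beta(s+1)^{-3}\) are rational functions of \(\beta\), namely of \(\lambda^{-1}\Sigma\), well defined on \([0,\infty)\). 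Positivity holds for all \(a\ge0\) and \(\beta\ge0\). The Lemma then gives the pointwise inequality
\[
\tr\bigl[\hat\Sigma(\hat\Sigma+\lambda)^{-2}\Sigma\bigr]\ \ge\ \tr\Bigl[\Sigma (S+\lambda)^{-3}\bigl((3S+\lambda)\hat\Sigma-2\hat\Sigma^{2}\bigr)\Bigr],
\]
after undoing the \(\lambda\) scaling. Trace-classness holds because \(\Sigma\) is trace class and the other factors are bounded (\(\hat\Sigma\) has finite rank).

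\emph{Step 2 (expectation).} Take expectations. By linearity and \eqref{eq:FirstTwoMoments}, with \(C:=\Sigma(S+\lambda)^{-3}\succcurlyeq0\), we have \(E\tr[C(3S+\lambda)\hat\Sigma]=\tr[C(3S+\lambda)\Sigma]\). Also \(E\tr[C\hat\Sigma^{2}]\le \tr[C S\Sigma]\), because \(E[\hat\Sigma^{2}]\preccurlyeq \Sigma S\) and \(\tr[CX]\ge0\) for \(X\succcurlyeq0\) with \(C\succcurlyeq0\). The right-hand side therefore becomes
\[
\tr\bigl[\Sigma^{2}(S+\lambda)^{-3}(3S+\lambda-2S)\bigr]=\tr\bigl[\Sigma^{2}(S+\lambda)^{-2}\bigr],
\]
which is \eqref{eq:lowerBoundTr}. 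Interchanging expectation and trace is justified by monotone or dominated convergence, using the bound \(\tr[\hat\Sigma\Sigma]\) and \(E\|x\|^{2}<\infty\).

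\emph{Main obstacle.} The delicate point is Step 1's use of Lemma~\ref{le:TraceGAB}. The lifted expression involves \(\hat\Sigma\) and \(\Sigma\), which do not commute, so we must order the products as \(\phi_j(A)\nu_j(B)\) and check that the trace of this non-self-adjoint product equals the trace appearing in \eqref{eq:lowerBoundTr}. This follows by cyclicity of the trace. We must also verify that \(g(a,b)\ge0\) is genuinely required only on the spectra, which holds because \eqref{eq:rationalInequality} is valid for all \(a,s\ge0\).
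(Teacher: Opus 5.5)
Your proposal is correct and is essentially the paper's proof. Like the paper, you multiply \eqref{eq:rationalInequality} by the eigenvalue $b$ of $\Sigma$ with $s=(b+R^{2}/n)/\lambda$, lift it via Lemma~\ref{le:TraceGAB} with $A\propto\hat\Sigma$ and $B\propto\Sigma$, and take expectations using \eqref{eq:FirstTwoMoments}; the paper rescales the scalars instead of the operators, which is immaterial. Two small touch-ups: your opening sentence's $B=\lambda^{-1}S$ should read $B=\lambda^{-1}\Sigma$ as in Step~1, and interchanging expectation and trace in the $\hat\Sigma^{2}$ term relies on the integrability from \eqref{eq:R2Assumption} (e.g. $E\|x\|^{4}\le R^{2}\tr[\Sigma]$), not only on $E\|x\|^{2}<\infty$.
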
\begin{proof}
As \(x\otimes x\) is a trace-class self-adjoint positive semidefinite operator, so is \(\hat\Sigma\). Hence,   \(\hat\Sigma\) is diagonalizable in an orthonormal eigenbasis. For \(a,b\geq0\),  
replacing \(a\) with \(\lambda ^{-1}a\) and  \(s\) with \(\lambda ^{-1}(b+R^{2}/n)\) in \eqref{eq:rationalInequality} yields   \begin{displaymath}
\frac{ab}{(a+\lambda)^{2}}\geq\frac{(3a(b+R^{2}/n)+\lambda a-2a^{2})b}{(b+R^{2}/n+\lambda)^{3}}.
\end{displaymath} 
 Applying Lemma \ref{le:TraceGAB} with \(A=\hat\Sigma\) and \(B=\Sigma\) then shows that \begin{displaymath}
\tr\left[(\hat\Sigma+\lambda)^{-2}\hat\Sigma\Sigma\right]\geq\tr\left[\left(3\hat\Sigma\left(\Sigma+\frac{R^{2}}{n}\right)+\lambda\hat\Sigma-2\hat\Sigma^{2}\right)\Sigma\left(\Sigma+\frac{R^{2}}{n}+\lambda\right)^{-3}\right].
\end{displaymath}
Taking expectations and using  \eqref{eq:FirstTwoMoments} implies \eqref{eq:lowerBoundTr} after simplifications. 
\end{proof}
\subsection{Combining bounds}

Combining  \eqref{eq:RidgeBiasVar}, \eqref{eq:RidgeFirstTerlLowerBound} and Lemma \ref{le:lowerBoundTr} yields the following. \begin{theorem}\label{th:Ridge}If  \eqref{eq:R2Assumption} and Assumption Acm hold, then \begin{equation*}
E\left[\mathcal{R}(\hat\theta_{\lambda})\right]\geq\frac{\lambda^{2}}{8}|| (\Sigma+\lambda)^{-1}\theta^{*}||^{2}_{\Sigma}+\frac{\underline\sigma^{2}}{2n}\tr\left[\Sigma^{2}\left(\Sigma+\frac{R^{2}}{n}+\lambda\right)^{-2}\right].
\end{equation*} 
\end{theorem}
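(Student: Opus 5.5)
The proof is a direct combination of the three ingredients assembled just before the statement. We start from the bias--variance lower bound \eqref{eq:RidgeBiasVar}. It holds under Assumption Acm because, conditionally on the design $(x_i)$, the estimator satisfies
\[
\hat\theta_\lambda-\theta^*=-\lambda(\hat\Sigma+\lambda)^{-1}\theta^*+(\hat\Sigma+\lambda)^{-1}\tfrac1n\textstyle\sum_i\varepsilon_i x_i ,
\qquad \varepsilon_i:=y_i-\langle x_i,\theta^*\rangle .
\]
The noise term has conditional mean zero, so the cross term vanishes after conditioning on the design. Its conditional second moment in the $\Sigma$-norm is at least $\frac{\underline\sigma^2}{n}\tr[(\hat\Sigma+\lambda)^{-2}\hat\Sigma\Sigma]$, using the conditional independence of the $\varepsilon_i$ and $E[\varepsilon_i^2\mid x_i]\ge\underline\sigma^2$. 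Since the paper states \eqref{eq:RidgeBiasVar} as given, we take it as the starting point.

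Next we bound each term on the right of \eqref{eq:RidgeBiasVar} from below. For the bias term, \eqref{eq:RidgeFirstTerlLowerBound} (obtained from Lemma~\ref{le:OperatorCvxSq} with $U=\lambda^{-1}\hat\Sigma$) gives
\[
\lambda^2E\bigl[\|(\hat\Sigma+\lambda)^{-1}\theta^*\|_\Sigma^2\bigr]\ge\tfrac{\lambda^2}{4}\|(\Sigma+\lambda)^{-1}\theta^*\|_\Sigma^2 .
\]
For the variance term, Lemma~\ref{le:lowerBoundTr} gives
\[
E\bigl[\tr\bigl((\hat\Sigma+\lambda)^{-2}\hat\Sigma\Sigma\bigr)\bigr]\ge\tr\bigl[\Sigma^2(\Sigma+R^2/n+\lambda)^{-2}\bigr].
\]
Substituting both into \eqref{eq:RidgeBiasVar} yields
\[
2E[\mathcal R(\hat\theta_\lambda)]\ge\tfrac{\lambda^2}{4}\|(\Sigma+\lambda)^{-1}\theta^*\|_\Sigma^2+\tfrac{\underline\sigma^2}{n}\tr\bigl[\Sigma^2(\Sigma+R^2/n+\lambda)^{-2}\bigr].
\]
Dividing by $2$ gives exactly the constants $1/8$ and $1/(2n)$ in the statement.

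The substantive work lies in Lemmas~\ref{le:OperatorCvxSq} and \ref{le:lowerBoundTr}, which are already available, so the remaining obstacle is only to check that the hypotheses match. Assumption Acm gives the normal equation and the conditional noise structure required for \eqref{eq:RidgeBiasVar}. Condition \eqref{eq:R2Assumption} gives the moment bound \eqref{eq:FirstTwoMoments} used in Lemma~\ref{le:lowerBoundTr}. Square-integrability of $\hat\Sigma$, needed in Lemma~\ref{le:OperatorCvxSq}, follows from \eqref{eq:R2Assumption}. Finally, the trace in Lemma~\ref{le:lowerBoundTr} is well defined because $\Sigma$ is trace-class.
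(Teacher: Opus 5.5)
Your proposal is correct and follows the paper's argument exactly: substitute \eqref{eq:RidgeFirstTerlLowerBound} and Lemma~\ref{le:lowerBoundTr} into \eqref{eq:RidgeBiasVar}, then divide by $2$ to obtain the constants $1/8$ and $1/(2n)$. Your justification of \eqref{eq:RidgeBiasVar} by conditioning on the design is also sound; it implicitly uses the ridge objective normalized by $\frac{1}{n}$, which is the normalization that \eqref{eq:RidgeBiasVar} itself assumes.
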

Proposition~\ref{pr:RidgeVsSGD} compares \(E[\mathcal{R}(\hat\theta_{\lambda})]\) and \(E[\mathcal{R}(\bar\theta_{2m:3m})]\), taking \(m=n/3\) so that both estimators use roughly the same number of samples of \((x,y)\).
\begin{proposition}\label{pr:RidgeVsSGD}
Suppose that Assumptions~A1 and Acm hold, and  \(n\geq9\). Set \(m=n/3\) and let  \(\lambda>0\). \begin{enumerate}
\item If \(\sigma=\underline\sigma=0\), set 
 \(\gamma=(2R^{2})^{-1}\) and \(\Lambda=(\log m)/(\gamma m)\).
Then\begin{equation}\label{eq:RidgeVsSGDNoSigma}
E\left[\mathcal{R}(\bar\theta_{2m:3m})\right]\leq c_{1}\left( \frac{R^{4}\log ^{2}n}{ \lambda^{2} n^{2}}+1 \right)E\left[\mathcal{R}(\hat\theta_{\lambda})\right],
\end{equation}
where \(c_{1}\) is an absolute constant.
\item 
If \(\underline\sigma>0\), set 
 \(\gamma=(2R^{2}+\lambda   m/\sqrt{\log m})^{-1}\) and \(\Lambda=(\log m)/(\gamma m)\).
Then \begin{equation}\label{eq:RidgeVsSGDSigma}
E\left[\mathcal{R}(\bar\theta_{2m:3m})\right]\leq c_{2}\log n\left(\frac{R^{4}\log n}{\lambda ^{2}n^{2}}+\frac{\sigma^{2}}{\underline\sigma^{2}}\right)E\left[\mathcal{R}(\hat\theta_{\lambda})\right],
\end{equation}where \(c_{2}\) is an absolute constant. \end{enumerate}
\end{proposition}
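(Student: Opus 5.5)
The plan is to compare the upper bound on \(E[\mathcal{R}(\bar\theta_{2m:3m})]\) from Theorem~\ref{th:Main} (with \(m'=1\)) term by term with the lower bound on \(E[\mathcal{R}(\hat\theta_{\lambda})]\) from Theorem~\ref{th:Ridge}. Condition \eqref{eq:GammaUpBound} is checked first. Since \(2\log m\le m\), in part 1 we have \(\gamma\Lambda=\log m/m\le 1/2\) and \(\gamma R^2=1/2\). In part 2, \(\gamma R^2\le 1/2\) as well. Hence Lemma~\ref{le:TotalBiasAtT} and Theorem~\ref{th:Main} apply, and the upper bound has three pieces: the gradient-descent bias \(2\|(I-\gamma\Sigma)^m\theta^*\|_\Sigma^2\), the term \(64\gamma R^2\Lambda^2\tilde\Lambda\|(\Sigma+\Lambda)^{-1}(\Sigma+\tilde\Lambda)^{-1/2}\theta^*\|^2_\Sigma\), and the noise term \(8\gamma\sigma^2\tilde\Lambda\). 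It is simpler to first coarsen the second piece. Since \(\tilde\Lambda(\Sigma+\tilde\Lambda)^{-1}\preccurlyeq I\), it is at most \(64\gamma R^2\Lambda^2\|(\Sigma+\Lambda)^{-1}\theta^*\|_\Sigma^2\).

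\emph{Bias terms.} By diagonalization, each eigen-coordinate satisfies \((1-\gamma\mu)^{2m}\le C(1+\lambda\gamma m)^2(\mu+\lambda)^{-2}\lambda^2\). Indeed, \((1-z)^m\le (1+mz)^{-1}\cdot\)const, and \((\mu+\lambda)/\lambda\le (1+\gamma m\mu)(1+\gamma m\lambda)\)-type bounds give
\[
\|(I-\gamma\Sigma)^m\theta^*\|_\Sigma^2\le C\,(1+\gamma m\lambda)^2\lambda^{-2}\cdot\lambda^2\|(\Sigma+\lambda)^{-1}\theta^*\|^2_\Sigma .
\]
Similarly, \(\Lambda^2(\mu+\Lambda)^{-2}\le \max(1,\Lambda/\lambda)^2\lambda^2(\mu+\lambda)^{-2}\), so the second piece is at most \(C\gamma R^2\max(\lambda^2,\Lambda^2)\|(\Sigma+\lambda)^{-1}\theta^*\|_\Sigma^2\). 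Both pieces are therefore compared with \(\frac{\lambda^2}{8}\|(\Sigma+\lambda)^{-1}\theta^*\|_\Sigma^2\) from Theorem~\ref{th:Ridge}, at the cost of a factor of order \((1+\gamma m\lambda)^2\cdot(\text{something})+\Lambda^2/\lambda^2\).
\begin{itemize}
\item In part 1, \(\gamma m\lambda\asymp \lambda n/R^2\), and \(\Lambda/\lambda=R^2\log m/(\lambda m)\cdot 2\). The factor \((1+\gamma m\lambda)^2\) is too crude when \(\lambda\) is large, so it has to be refined: the factor should be \(\min\{(1+\gamma m\lambda)^2,\ldots\}\), handled by noting that for \(\mu\ge\lambda\) the ridge bias coordinate dominates anyway while \((1-\gamma\mu)^{2m}\le (\gamma m\mu)^{-2}\le \lambda^2/(\mu+\lambda)^2\cdot\)const whenever \(\gamma m\lambda\gtrsim1\). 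The correct per-coordinate inequality to aim for is
\[
(1-\gamma\mu)^{2m}\le C\Bigl(1+\frac{1}{(\gamma m\lambda)^2}\Bigr)\frac{\lambda^2}{(\mu+\lambda)^2},
\]
which follows from \((1-z)^m\le \min(1,1/(mz))\). This gives exactly the factor \(1+R^4/(\lambda^2n^2)\) in part 1, and \(\Lambda^2/\lambda^2\) supplies the \(R^4\log^2 n/(\lambda^2n^2)\) term.
\item In part 2, \(\gamma m\lambda\ge\) const\(\cdot\min(\lambda m/R^2,\sqrt{\log m})\). Then \(1/(\gamma m\lambda)^2\lesssim R^4/(\lambda^2m^2)+1/\log m\), and \(\Lambda^2/\lambda^2=(\log m)^2/(\gamma m\lambda)^2\lesssim \log^2 m\,R^4/(\lambda^2 m^2)+\log m\). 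This is where the choice of \(\sqrt{\log m}\) in \(\gamma\) matters: it balances the bias factor against the variance factor below.
\end{itemize}

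\emph{Noise term.} This is the key step. We need \(\gamma\tilde\Lambda=m^{-1}\tr[\Sigma^2(\Sigma+(4\gamma m)^{-1})^{-2}]\) to be bounded by a multiple of \(\frac{1}{n}\tr[\Sigma^2(\Sigma+R^2/n+\lambda)^{-2}]\). Since \(m=n/3\), it suffices to show \((4\gamma m)^{-1}\ge c(R^2/n+\lambda)/\text{factor}\). We have \((4\gamma m)^{-1}=(2R^2+\lambda m/\sqrt{\log m})/(4m)\ge c(R^2/n+\lambda/\sqrt{\log n})\). Using \((\mu+a)^{-2}\le s^2(\mu+sa)^{-2}\) for \(s\ge1\) with \(s=\sqrt{\log n}\), this costs a factor \(\log n\), giving \(\gamma\sigma^2\tilde\Lambda\lesssim \log n\,(\sigma^2/\underline\sigma^2)\cdot\underline\sigma^2 n^{-1}\tr[\ldots]\). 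Combining the three comparisons with Theorem~\ref{th:Ridge} then yields \eqref{eq:RidgeVsSGDSigma}, with the common factor \(\log n\) pulled out. The main obstacle I expect is bookkeeping the bias factor in part 2 so that everything fits into \(\log n\,(R^4\log n/(\lambda^2n^2)+\sigma^2/\underline\sigma^2)\). In particular, \(\sigma^2/\underline\sigma^2\ge1\) is needed to absorb the constant and \(\log m\) contributions, and it holds by the remark \(\underline\sigma\le\sigma\) (when \(\Sigma\neq0\); the case \(\Sigma=0\) is trivial).
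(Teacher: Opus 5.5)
Your argument is correct and is essentially the paper's proof: an SGDIR upper bound, the ridge lower bound of Theorem~\ref{th:Ridge}, and eigen-coordinate comparisons. Your $\Lambda^2$-versus-$\lambda^2$ step and your $s=\sqrt{\log n}$ rescaling are exactly \eqref{eq:diagRidge1} and \eqref{eq:diagRidge2} with $\lambda'=(4\gamma m)^{-1}$ and $\lambda''=\lambda+R^2/n$.

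The only difference is that you start from Theorem~\ref{th:Main} instead of Proposition~\ref{pr:SimpleBound}. This creates the extra gradient-descent bias term, which the paper never has to treat because it is already dominated by $\Lambda^2\|(\Sigma+\Lambda)^{-1}\theta^*\|_\Sigma^2$. In your write-up, delete the first per-coordinate bound and display with the factor $(1+\gamma m\lambda)^2$, which is false as stated. Keep only the correct bound with factor $1+(\gamma m\lambda)^{-2}$ that you give afterwards.
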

\begin{proof}
A standard diagonalization argument shows that, for \(\Lambda>0\),  \begin{equation}\label{eq:diagRidge1}
||(\Sigma+\Lambda)^{-1}\theta^{*}||^{2}_{\Sigma}\le \left(1+ \frac{\lambda^{2}}{\Lambda^{2}} \right)||(\Sigma+\lambda)^{-1}\theta^{*}||_{\Sigma},
\end{equation}  
and, for \(\lambda',\lambda''>0\), \begin{equation}\label{eq:diagRidge2}
\tr\left[\Sigma^{2}\left(\Sigma+\lambda'\right)^{-2}\right]\leq\left(1+ \frac{{\lambda''}^{2}}{{\lambda'}^{2}}\right) \tr\left[\Sigma^{2}\left(\Sigma+\lambda''\right)^{-2}\right].
\end{equation}
If  \(\sigma=\underline{\sigma}=0\), then \eqref{eq:RidgeVsSGDNoSigma} follows from Proposition~\ref{pr:SimpleBound}, Theorem~\ref{th:Ridge}, and \eqref{eq:diagRidge1}. Now suppose that \(\underline{\sigma}>0\). Combining   Proposition~\ref{pr:SimpleBound}, Theorem~\ref{th:Ridge}, and \eqref{eq:diagRidge1} with \eqref{eq:diagRidge2} applied with \(\lambda'=(4\gamma m)^{-1}\) and \(\lambda''=\lambda+R^{2}/n\),   yields \eqref{eq:RidgeVsSGDSigma} after some calculations.\end{proof}
Proposition~\ref{pr:RidgeVsSGD} implies that, if \(\lambda\) is of order \(1/n\) or larger, then \(E[\mathcal{R}(\bar\theta_{2m:3m})]\) is bounded above by   \(E[\mathcal{R}(\hat\theta_{\lambda})]\) up to a \(\log^{2}n\) factor, both when  \(\sigma=\underline\sigma=0\) and when \(\underline\sigma>0\), for a suitable choice of \(\gamma\). In the former case, the gap reduces to a constant factor when  \(\lambda\) is of order   \((\log n)/n\) or larger. In the latter case, it reduces to a  \(\log n\)  factor when  \(\lambda\) is of order   \(\sqrt{\log n}/n\) or larger.
\section{Lower bound}\label{se:lowerBound}
Let \(\mathcal{H}\) be an infinite-dimensional separable Hilbert space over \(\mathbb{R}\), and let \(\Sigma\) be a positive semidefinite  trace-class operator on \(\mathcal{H}\), with eigenvalues  \(\mu_{i}\), \(i\geq1\), sorted in nonincreasing order. Assume that there exists \(\rho\in(0,1)\) such that \(\rho\mu_{i}\leq\mu_{i+1}\) for \(i\geq1\). Let  \(\theta^{*}\in \mathcal{H}\)  and let  \((x,y)\) be a square-integrable random vector in  \(\mathcal{H}\times\mathbb{R}\) that  satisfy Assumptions A1 and As, with  \(E[x\otimes x]=\Sigma\) and \(\sigma=0\). Fix   \(r\geq-1/2\), and let \(n\geq2\)  be an integer such that \(n\mu_{1}>\tr[\Sigma]\).  

We consider the problem of finding an estimator \(\hat\theta=\hat\theta((x_{1},y_{1}),\dots,(x_{n},y_{n}))\) that approximately  minimizes \(L\) based on \(n\)   copies \((x_{1},y_{1}),\dots,(x_{n},y_{n})\) of \((x,y)\). Theorem \ref{th:lower} below provides  a minimax lower bound on the expected excess risk of \(\hat\theta\). The intuition behind our approach is to construct two pairs    \((x,y^{(0)})\) and \((x,y^{(1)})\) that satisfy the above assumptions and generate identical training samples of size \(n\)   with constant probability. 
This makes it difficult for any learning algorithm to distinguish between the two distributions.  Minimax lower bounds are often established in a noisy setting (e.g., \citet[Section 15.1]{bach2024learning}). 

We use   a one-hot distribution for \(x\). Such distributions have previously been considered in the SGD literature  
\cite{jain2018accelerating,kakade2021benefits}, and our lower bound approach is inspired by \citet{jain2018accelerating}. 
\begin{theorem}\label{th:lower}
  For any  measurable function    \(\hat\theta:(\mathcal{H}\times\mathbb{R})^{n}\rightarrow\mathcal{H}\), there is  \(\theta^{*}\in\mathcal{H}\)  satisfying Assumption As with parameter \(r\) and \(\rho^{*}=1\),  and a random pair \((x,y)\) with \(E[x\otimes x]=\Sigma\) satisfying Assumption A1,  with \(R^{2}=\tr[\Sigma]\) and \(\sigma=0\), such that
\begin{equation}\label{eq:LowerBoundUniv}
\frac{1}{4}\left(\frac{\tr[\Sigma]\rho}{n}\right)^{2r+1}\leq E\left[||\hat\theta((x_{1},y_{1}),\dots,(x_{n},y_{n}))-\theta^{*}||_{\Sigma}^{2}\right],
\end{equation}where  \((x_{t},y_{t})\), \(1\leq t\leq n\), are independent copies of \((x,y)\). \end{theorem}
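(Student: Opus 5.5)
The plan is a two-point (Le Cam-type) argument with a one-hot design, in the spirit of \citet{jain2018accelerating}. I would take \(x\) supported on the eigenvectors of \(\Sigma\). Set \(p_i:=\mu_i/\tr[\Sigma]\) and let \(x=\sqrt{\tr[\Sigma]}\,e_I\), where \(I\) is a random index with \(P(I=i)=p_i\).

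First I check that all \(\mu_i>0\). Indeed \(n\mu_1>\tr[\Sigma]\ge0\) gives \(\mu_1>0\), and \(\rho\mu_i\le\mu_{i+1}\) propagates positivity, so \((p_i)\) is a genuine probability distribution. With this design:
\begin{itemize}
\item \(E[x\otimes x]=\sum_i p_i\tr[\Sigma]\,e_i\otimes e_i=\Sigma\);
\item \(\|x\|^2=\tr[\Sigma]\) almost surely, so \eqref{eq:R2Assumption} holds with \(R^2=\tr[\Sigma]\);
\item taking \(y=\langle x,\theta^*\rangle\) makes the model noiseless, so \eqref{eq:SigmaAssumption} holds with \(\sigma=0\), and \(\theta^*\) minimizes \(L\).
\end{itemize}

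Next I choose a single coordinate \(k\) that is rarely observed. Let \(k\) be the smallest index with \(\mu_k\le\tr[\Sigma]/n\). It exists because \(\mu_i\to0\), and \(k\ge2\) because \(n\mu_1>\tr[\Sigma]\). Minimality gives \(\mu_{k-1}>\tr[\Sigma]/n\), hence \(\mu_k\ge\rho\mu_{k-1}>\rho\tr[\Sigma]/n\). I also have \(p_k\le1/n\). Therefore the event \(\mathcal{E}\) that none of \(I_1,\dots,I_n\) equals \(k\) has probability \((1-p_k)^n\ge(1-1/n)^n\ge1/4\) for \(n\ge2\).

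The two hypotheses are \(\theta^{(0)}=\beta e_k\) and \(\theta^{(1)}=-\beta e_k\), with \(\beta=\mu_k^{r}\). Writing \(\theta^{(j)}=\Sigma^r v^{(j)}\) gives \(v^{(j)}=\pm\mu_k^{-r}\beta e_k\), so \(\|v^{(j)}\|^2=1\) and Assumption As holds with \(\rho^*=1\). On \(\mathcal{E}\), every response is \(y_t=\sqrt{\tr[\Sigma]}\langle e_{I_t},\theta^{(j)}\rangle=0\) under both hypotheses. Hence the training samples, and therefore \(\hat\theta\), coincide on \(\mathcal{E}\) under both hypotheses; I realize both on the same probability space via the common \(I_t\)'s. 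On \(\mathcal{E}\), the parallelogram inequality gives
\begin{displaymath}
\|\hat\theta-\theta^{(0)}\|_\Sigma^2+\|\hat\theta-\theta^{(1)}\|_\Sigma^2\ge\tfrac12\|\theta^{(0)}-\theta^{(1)}\|_\Sigma^2=2\beta^2\mu_k.
\end{displaymath}
Taking expectations and averaging over the two hypotheses shows that one of them, say \(j\), satisfies
\begin{displaymath}
E\|\hat\theta-\theta^{(j)}\|_\Sigma^2\ge P(\mathcal{E})\beta^2\mu_k\ge\tfrac14\mu_k^{2r+1}\ge\tfrac14\left(\frac{\rho\tr[\Sigma]}{n}\right)^{2r+1}.
\end{displaymath}
The last step uses \(2r+1\ge0\) together with the lower bound on \(\mu_k\).

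The only delicate step is the choice of \(k\). It has to satisfy two competing requirements at once: \(p_k\le1/n\), so that the coordinate is unseen with constant probability, and \(\mu_k\) as large as possible, to keep the lower bound large. The ratio condition \(\rho\mu_i\le\mu_{i+1}\) is what reconciles them, at the cost of the factor \(\rho\). Measurability of \(\hat\theta\) is needed only so that the expectations are defined.
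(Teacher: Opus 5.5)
Your proof is correct and follows the paper's argument essentially step for step. It uses the same one-hot design, the same choice of the first index with \(np_k\le 1\), the same two hypotheses \(\pm\mu_k^{r}e_k\) with \(\rho^*=1\), the same event (coordinate \(k\) never observed, probability at least \((1-1/n)^n\ge 1/4\)), and the same use of \(\rho\mu_{k-1}\le\mu_k\) to bound \(\mu_k\) from below. The only differences are minor: you apply the parallelogram inequality in the full \(\Sigma\)-seminorm, whereas the paper projects onto \(e_k\) and uses \((z'-z)^2+(z'+z)^2\ge 2z^2\), and you state explicitly, where the paper leaves it implicit, that all \(\mu_i>0\) and that \(k\) exists because \(\mu_i\to 0\).
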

\begin{proof}We first build two noiseless pairs     \((x,y^{(0)})\) and \((x,y^{(1)})\) that satisfy Assumptions A1 and As, with \(E[x\otimes x]=\Sigma\).  Consider an orthonormal  eigenbasis \((e_{i})_{i\geq1}\) of \(\mathcal{H}\) with \(\Sigma e_{i}=\mu_{i} e_{i}\) for \(i\geq1\).    Let \(x\) be a random vector such that   \(\Pr\left[x=\sqrt{\tr[\Sigma]}e_{i}\right]=p_{i}\) for \(i\geq1\), where   \(p_{i}=\mu_{i}/\tr[\Sigma]\). Thus\begin{equation*}
E[x\otimes x]=\sum^{\infty}_{i=1}p_{i}\tr[\Sigma]e_{i}\otimes e_{i}=\sum^{\infty}_{i=1}\mu_{i}e_{i}\otimes e_{i}=\Sigma.
\end{equation*}Let \(j=\min\{i\geq1:np_{i}\leq1\}\). Since  \(n\mu_{1}>\tr[\Sigma]\), we have \(j>1\). We  choose \(\theta^{*}\) among two possible values, \({\theta^{*}}^{(0)}\) and \({\theta^{*}}^{(1)}\),  where \({\theta^{*}}^{(l)}=(-1)^{l}\mu _{j}^{r}e_{j}\) for \(l\in\{0,1\}\), and  set \(y^{(l)}=\langle{\theta^{*}}^{(l)},x\rangle\). 
As   \(||x||^{2}=\tr[\Sigma]\) almost surely,  Assumption A1 holds for both \((x,y^{(0)})\) and  \((x,y^{(1)})\),  with \(R^{2}=\tr[\Sigma]\) and  \(\sigma=0\). Moreover, for \(l\in\{0,1\}\), Assumption As holds for  \({\theta^{*}}^{(l)}\) with \({v^{*}}^{(l)}=(-1)^{l}e_{j}\)   and  \(\rho^{*}=1\). 

For \(l\in\{0,1\}\), let   \((x_{t},y^{(l)}_{t})\), \(0\leq t\leq n-1\),  be  \(n\) independent copies of \((x,y^{(l)})\), and  let \(\hat \theta^{(l)}=\hat\theta((x_{1},y_{1}^{(l)}),\dots,(x_{n},y_{n}^{(l)}))\).   
 Consider the event \begin{displaymath}
\mathcal{A}=\{\langle x_{t},e_{j}\rangle=0 \text{ for  }0\leq t\leq n-1\}.
\end{displaymath}Since \(y_{t}^{(l)}=\langle{\theta^{*}}^{(l)},x_{t}\rangle\),  on the event \(\mathcal{A}\), we have  \(y_{t}^{(l)}=0\)  for \(0\leq t\leq n-1\) and  \(l\in\{0,1\}\),     which implies that  \(\hat \theta^{(0)}= \hat \theta^{(1)}\).      Since \(||\theta||_{\Sigma}^{2}\geq\mu_{j}\langle\theta,e_{j}\rangle^{2}\) for \(\theta\in\mathcal{H}\), it follows that, on the event \(\mathcal{A}\), \begin{displaymath}
\sum^{1}_{l=0}||\hat\theta^{(l)}-{\theta^{*}}^{(l)}||_{\Sigma}^{2}\ge\mu_{j}\sum^{1}_{l=0}\langle\hat\theta^{(l)}-{\theta^{*}}^{(l)},e_{j}\rangle^{2}\ge2\mu _{j}^{2r+1},
\end{displaymath}
where the second equation follows from the inequality \((z'-z)^{2}+(z'+z)^{2}\geq2z^{2}\) applied with \(z'=\langle\hat\theta^{(0)},e_{j}\rangle\) and \(z=\mu _{j}^{r}\). Moreover, it follows from the definition of \(j\) that  \(\rho\leq\rho np_{j-1}\leq n p_{j}\), where the second equation follows from the inequality  \(\rho\mu_{j-1}\leq\mu_{j}\).  Moreover, as  \( np_{j}\leq1\), \begin{displaymath}
\Pr[\mathcal{A}]=(1-p_{j})^n\ge(1-\frac{1}{n})^{n}\geq\frac{1}{4}, 
\end{displaymath}where the last equation is valid for all \(n\geq2\). Hence, \begin{displaymath}
\sum^{1}_{l=0}E[||\hat\theta^{(l)}-{\theta^{*}}^{(l)}||_{\Sigma}^{2}]\ge\frac{1}{2}\mu _{j}^{2r+1}\ge\frac{1}{2}\left(\frac{\tr[\Sigma]\rho}{n}\right)^{2r+1}, \end{displaymath}
where the last equation follows from the inequality \(\rho\leq np_{j}\). Thus, \eqref{eq:LowerBoundUniv} is satisfied either for \(\theta^{*}={\theta^{*}}^{(0)}\) and \((x,y)=(x,y^{(0)})\), or for  \(\theta^{*}={\theta^{*}}^{(1)}\) and \((x,y)=(x,y^{(1)})\).  
\end{proof}
In a noisy setting, for \(\alpha\geq1\) and \(r>-1/2\), the expected excess risk of any algorithm based on \(n\) training samples admits an information-theoretic lower bound of order \(n^{-\alpha(2r+1)/(1+2\alpha r+\alpha)}\) (e.g., \citet{vito2007optimal} and \cite[Section 15.1.5]{bach2024learning}). When \(2\alpha r<-1\), this lower bound is asymptotically smaller than the lower bound in  Theorem \ref{th:lower}.

Note that \(y^{(0)}\) and \(y^{(1)}\) are not bounded in the proof of Theorem \ref{th:lower} when \(r<0\). Therefore, in settings where a boundedness condition on \(y\) is imposed, the lower bound in  Theorem \ref{th:lower} may not apply. In particular, for \(\alpha\geq 1\) and \(-1/2<r\leq  0\),      \citet{jun2019kernel} obtain, in a noisy setting,  an upper bound of order  \(n^{-\alpha(2r+1)/(1+2r\alpha+\alpha)}\)  on the expected excess risk.  When \(2\alpha r<-1\), this upper bound     is asymptotically smaller than the lower bound in Theorem~\ref{th:lower}. A similar observation applies to \citet{BachPillaudNIPS2018}. Finally, the lower bound in Theorem \ref{th:lower} matches the upper bound of order \(n^{-2r-1}\) shown in a noisy setting by \citet[Corollary 2]{BachDieuleveut2016} when \(2\alpha r<-1\), and, up to a polylogarithmic factor, the upper bound in \eqref{eq:corNoiseless} when \(-1/2\le r \le1/2\).
\section{Numerical experiments}\label{se:numer}
The numerical experiments were conducted on a laptop with a 1 GHz Intel processor and 8 GB of RAM. The code was written in Python. We implemented the following estimators, each evaluated over \(5\times10^{5}/n\) independent runs, with each run using \(n\) random training samples. Standard tail-averaged SGD averages the last \(n/2\) iterates of standard SGD, while tail-averaged SGDIR computes \(\bar\theta_{n/2:n+1}\) with \(m=n/4\). In the noiseless setting of Section~\ref{sub:synthetic}, we additionally implemented last-iterate SGD, last-iterate SGDIR given by \(\theta_{n+1}\) with \(m=n/2\), and ridge regression. For all SGDIR estimators, we set \(\Lambda=\log(m)/(2\gamma m)\). In Section~\ref{sub:realData}, we also implemented KRR. For both ridge regression and KRR, the regularization parameter was selected by \(5\)-fold cross-validation using the \(n\) training samples.
The test error of a model is defined as \(E[(\hat y-y)^{2}]\), where \(\hat y\) is the response value predicted by the model on input \(x\).

\subsection{Synthetic experiments}\label{sub:synthetic}
We assume that \(d=2000\),  that \(x\) is a centered \(d\)-dimensional Gaussian vector with covariance matrix \(\Sigma=\text{diag}(i^{-\alpha})\), \(1\leq i\leq d\), where \(\alpha>0\), and  that \(y=x^{T}\theta^{*}\), where \(\theta^{*}\in\mathbb{R}^{d}\) is deterministic. Similar distributions have been used in both noiseless \cite{Flammarion2021last} and noisy settings \cite{BachLeastSquaresJMLR2017,jain2018parallelizing,kakade2023benign}.   A standard calculation shows that  \eqref{eq:R2Assumption} holds with \(R^{2}=2\mu_{1}+\tr(\Sigma)\). We use the step size  \(\gamma=1/R^{2}\) for both SGD and SGDIR.  Fig. \ref{fig:noiseless} reports the average test errors of several estimators as functions of \(n\). The ridge regression test error is not reported for \(n\geq d\), since it is essentially \(0\)  when \(\lambda=0\).  For \(n\leq d\), ridge regression outperforms SGDIR and the optimal \(\lambda\) is of order \(10^{-6}\) or smaller, consistent with the fact that the bound \eqref{eq:RidgeVsSGDNoSigma}  becomes loose for such small values of \(\lambda\).   The slopes obtained by regressing the logarithm of the test error of  tail-averaged SGDIR  on  \(\log(n)\) in panels a), b) and c) in Fig. \ref{fig:noiseless} are respectively \(-1.75\), \(-1.98\) and \(-2.33\). The corresponding  slopes implied by Corollary~\ref{cor:ArAcNoiselessAvg} and the discussion at the beginning of Section \ref{sub:Assumptions} are   \(-1.8\), \(-2.28\) and \(-2.25\). 

\begin{figure}[t]
\centering

\begin{subfigure}[t]{0.32\textwidth}
\centering
\input{alpha=1,25s=1}
\caption{$\mu_i=i^{-1.25},\theta^{*}_i=i^{-1}$}
\end{subfigure}
\hfill
\begin{subfigure}[t]{0.32\textwidth}
\centering
\input{alpha=1,25s=2}
\caption{$\mu_i=i^{-1.25},\theta^{*}_i=i^{-2}$}
\end{subfigure}
\hfill
\begin{subfigure}[t]{0.32\textwidth}
\centering
\input{alpha=2s=2}\caption{$\mu_i=i^{-2},\theta^{*}_i=i^{-2}$}
\end{subfigure}

\vspace{1.5em}

\begingroup
\centering
\scriptsize
\setlength{\tabcolsep}{8pt}      
\renewcommand{\arraystretch}{1.3} 

\newcommand{\legendicon}[3]{
  \begin{tikzpicture}[baseline=0.9ex]%
    \begin{axis}[%
      width=22pt, height=12pt, 
      scale only axis,          
      hide axis,                
      xmin=0, xmax=1, ymin=0, ymax=1 
    ]%
    \addplot+[draw=#2, color=#2, #1, no markers, domain=0:1] {0.5};     
    \addplot+[draw=#2, color=#2, fill=#2, #1, only marks, mark options={draw=#2, fill=#2}] coordinates {(0.5,0.5)}; 
    \end{axis}%
  \end{tikzpicture}~#3%
}

\begin{tabular}{|lll|}
\hline
\legendicon{mark=*}{blue}{Last-iterate SGD} &
\legendicon{mark=square*}{red}{Last-iterate SGDIR} &
\legendicon{mark=triangle*}{brown}{Tail-averaged SGD} \\
\legendicon{mark=diamond*}{black}{Tail-averaged SGDIR} &
\legendicon{mark=pentagon*}{violet}{Ridge regression} & \\
\hline
\end{tabular}
\endgroup

\vspace{0.5em}
\caption{Test error versus $n$ for in a noiseless setting.}
\label{fig:noiseless}
\end{figure}

\subsection{Real data}\label{sub:realData}
  We use the datasets ``house\_8L'', ``elevators'' and ``year\_prediction\_msd'' (MSD) downloaded from OpenML.\comment{The first two are used in \citet{rosacso2025efficient}, and the last in}  Each dataset is divided into a training and a testing dataset, and the training dataset is normalized. We apply the exponential  kernel, with bandwidth selected by the median heuristic based on \(10^{4}\)  randomly chosen training pairs.
We implemented both SGD and SGDIR using an adaptation of the algorithm described in \cite[Section~7.4.5]{bach2024learning}, with  step size  \(\gamma=1\).
For KRR, we used the standard algorithm  described in \cite[Eq.~7.7]{bach2024learning}. For each run of each algorithm, the test error is estimated by averaging the squared difference between \(y\) and its fitted value over \(10^{3}\) randomly chosen test samples.  Table~\ref{tab:kernel_comparison} reports the running times of tail-averaged SGD and Ridge regression, together with the quantity \(\lambda n\), where \(\lambda\)  is the average regularization parameter for KRR. The running time of  SGDIR differs from that of tail-averaged SGD by at most \(20\%\) and is therefore omitted for  readability.  Because of vectorization, the running times of SGD and KRR increase little with the number of attributes \(d'\). Moreover, KRR is faster than SGD for small values of \(n\), but slower for large values. However, for the MSD dataset with \(n=3\times10^4\), tail-averaged SGDIR achieves an average test error of \(84.07\) over \(100\) independent runs and takes \(260\) seconds per run, whereas KRR results in a memory overflow due to its \(O(n^2)\) space requirement.
\begin{table}[ht]
\centering
\small
\begin{tabular}{rccccccccc}
\hline
$n$ &
\multicolumn{3}{c}{house\_8L ($d'=8$)} &
\multicolumn{3}{c}{elevators ($d'=18$)} &
\multicolumn{3}{c}{MSD ($d'=90$)} \\
 & SGD & KRR & $\lambda n$
 & SGD & KRR & $\lambda n$
 & SGD & KRR & $\lambda n$ \\
\hline
150  & 0.0016 & 0.00086 & 0.09
     & 0.0014 & 0.00079 & $3.5\times10^{-5}$
     & 0.0019 & 0.00081 & 2.1 \\

300  & 0.0039 & 0.0042 & 0.12
     & 0.0034 & 0.0034 & $1\times10^{-6}$
     & 0.0044 & 0.0034 & 0.12 \\

600  & 0.015 & 0.019 & 0.16
     & 0.014 & 0.018 & $1\times10^{-6}$
     & 0.016 & 0.019 & 0.11 \\

1200 & 0.048 & 0.084 & 0.19
     & 0.046 & 0.092 & $1\times10^{-6}$
     & 0.053 & 0.083 & 0.17 \\

2400 & 0.19 & 0.42 & 0.23
     & 0.20 & 0.42 & $1\times10^{-6}$
     & 0.17 & 0.41 & 0.23 \\
\hline
\end{tabular}
\caption{Running time (in seconds) of a single run of tail-averaged SGD (representative of both SGD-like methods) and KRR, together with the corresponding values of \(\lambda n\). The reported running times exclude cross-validation and test error evaluation.}
\label{tab:kernel_comparison}
\end{table}
 Fig. \ref{fig:datasets} provides the average test error in terms of \(n\).\begin{figure}[ht]
\centering
\begin{subfigure}[t]{0.32\textwidth}
\centering
\input{house_8LExponentialavgTesterror_vs_n.tex}
\end{subfigure}
\hfill
\begin{subfigure}[t]{0.32\textwidth}
\centering
\input{elevatorsExponentialavgTesterror_vs_n.tex}
\end{subfigure}
\hfill
\begin{subfigure}[t]{0.32\textwidth}
\centering
\input{year_prediction_msdExponentialavgTesterror_vs_n.tex}
\end{subfigure}

\caption{Average test error versus $n$ for the house\_8L, elevators and MSD  datasets, from left to right.}
\label{fig:datasets}
\end{figure}
Fig. \ref{fig:comparison} provides an estimate of \begin{displaymath}R(n,\tilde n):=
\frac{E[L(\bar\theta_{n/2:n+1})]-E[L(\theta_{\text{SGD},\tilde n})]}{E[L(\hat\theta_{\lambda^{*},n})]-E[L(\theta_{\text{SGD},\tilde n})]},
\end{displaymath}
where \(n\) ranges over the values used in Fig. \ref{fig:datasets}, \(\lambda^{*}\) is  an optimal regularization parameter for KRR, and   \(\theta_{\text{SGD},\tilde n}\) is the tail-averaged estimator based on  \(\tilde n=14000\) training samples. We estimate \(E[L(\theta_{\text{SGD},\tilde n})]\) using \(100\) independent runs. Since  \(E[L(\theta_{\text{SGD},\tilde n})]\geq L(\theta^{*})\)  and provided that
 \(R(n,\tilde n)\ge1\) and the denominator in the definition of \(R(n,\tilde n)\) is positive, conditions that we verify numerically,  a simple calculation shows that \(E[\mathcal{R}(\bar\theta_{n/2:n+1})]\leq R(n,\tilde n)E[\mathcal{R}(\hat\theta_{\lambda^{*},n})]\). All values of \(R(n,\tilde n)\) reported in Fig. \ref{fig:comparison}  are   at most \(3\),   consistent with the \(O(\log^{2} n)\) upper bound implied by Proposition~\ref{pr:RidgeVsSGD} under suitable conditions. However,  Table \ref{tab:kernel_comparison} suggests that, for the elevators dataset, the requirement in the discussion following Proposition~\ref{pr:RidgeVsSGD}  that \(\lambda^{*}\) be of order \(1/n\) or larger is not satisfied in practice. 
\begin{figure}[ht]
\centering
\begin{tikzpicture}
\begin{axis}[
    width=0.5\textwidth,
    height=0.32\textwidth,
    xlabel={$n$},
    ylabel={$R(n,\tilde n)$},
    font=\scriptsize,
    legend pos=south east,]

\addplot+[
    mark=*
] coordinates {
    (150,1.2794108186411548)
    (300,1.2713830814560634)
    (600,1.3286548638461657)
    (1200,1.520739218299107)
    (2400,2.128817357174885)
};
\addlegendentry{House\_8L}

\addplot+[
    mark=square*
] coordinates {
    (150,1.5751745259325212)
    (300,1.9001403537615653)
    (600,2.1748122996858212)
    (1200,2.227139454314606)
    (2400,2.334626788218764)
};
\addlegendentry{Elevators}

\addplot+[
    mark=triangle*
] coordinates {
    (150,1.2272499583162566)
    (300,1.3144042817237602)
    (600,1.4392825386433947)
    (1200,1.637471494865933)
    (2400,1.7692716512060167)
};
\addlegendentry{MSD}
\end{axis}
\end{tikzpicture}
\caption{Upper bound on \({E[\mathcal{R}(\bar\theta_{n/2:n+1})]}/{E[\mathcal{R}(\hat\theta_{\lambda,n})]}\), with \(\tilde n=14000\).}
\label{fig:comparison}
\end{figure}
\section{Conclusion}\label{se:conclusion}
We have derived upper bounds on the expected excess risk of SGDIR. We first summarize our results in the noiseless case. When \(0<r<1/2\), our bounds for both averaged and non-averaged SGDIR are either strictly sharper than existing ones or hold under weaker assumptions. Moreover, whereas previous bounds saturate for \(r\geq 1/2\),  ours saturate only for \(r\geq 1\). In particular, when \(r=1/2\), we obtain bounds of order \(m^{-2}\log^{2}m  \), while, when \(r=1\),  we obtain bounds of order \(m^{-3+\epsilon}\) for any  \(\epsilon\ >\alpha^{-1}\). To the best of our knowledge, no existing algorithm has been shown to achieve such convergence rates under comparable assumptions. We also derive a lower bound of order \(m^{-2r-1}\) on the expected excess risk of any algorithm based on \(m\) training samples, which matches our upper bound, up to a \(\log^{2} m\) factor, when  \(-1/2\leq r\leq1/2\). Closing the gap between the upper  and lower  bounds for \(r>1/2\), using SGD-like algorithms or other algorithms such as ridge regression, is an open question. Noiseless SGD also arises in areas beyond machine learning, such as the averaging process on a graph \cite{bach2020tight}. Exploring applications of SGDIR to such areas is a question that deserves further research. 

We also provide an instance-based comparison of SGDIR and ridge regression in the noisy case under general assumptions. When the regularization parameter is of order \(1/n\) or larger, we show that the expected excess risk of SGDIR is no larger than that of ridge regression, up to a \(\log^{2}n\) factor, where \(n\) is the number of training samples. Whether a similar comparison holds for SGD is an open question. Numerical experiments on synthetic and real data are consistent with our theoretical findings. Extending our analysis beyond the squared loss is an interesting direction for future research.   
\appendix\section{Preliminary inequalities}\label{se:Preliminary}
This section gives preliminary results for use in subsequent proofs. Our analysis often uses the fact that if \(A\),  \(A'\), \(B\) and \(B'\) are self-adjoint operators and have a common orthonormal eigenbasis, with \(0\preccurlyeq A\preccurlyeq A'\) and  \(0\preccurlyeq B\preccurlyeq B'\), then \(AB\) and \(A'B'\) are self-adjoint with \(0\preccurlyeq AB\preccurlyeq A'B'\).  

The recursion  \eqref{eq:RecW_t} can be equivalently written as \begin{equation}\label{eq:RecW_tPt}
w_{t+1}-\theta^{*}=P_{t}(w _{t}-\theta^{*})-\gamma\lambda_{t}\theta^{*},
\end{equation} where \(P_{t}:=I-\gamma(  x_{t}\otimes x_{t}+\lambda_{t} I)\). For \(t\geq0\), we have \(E[P_{t}]= A_{t}\), where \(A_{t}:=I-\gamma(\Sigma+\lambda_{t})\). Lemma~\ref{le:expectationP_tNP_t} provides operator inequalities involving \(P_{t}\) and \(A_{t}\). \begin{lemma}\label{le:expectationP_tNP_t}For   \(t\geq0\), we have 
 \(A_{t}\succcurlyeq0\) and \begin{equation}\label{eq:EP_t^2Bound} E[P_{t}^{2}]\preccurlyeq (1-\gamma\lambda_{t})A_{t}\preccurlyeq(1-\gamma\lambda_{t})^{2}I.\end{equation} Moreover, for any self-adjoint operator \(M\) on \(\mathcal{H}\) satisfying
\( M \preccurlyeq \xi I\) and 
\( M \preccurlyeq \xi' \Sigma\), where  \(\xi\) and \(\xi'\) are nonnegative constants, we have, for  \(t \ge 0\),\begin{equation}\label{eq:expectationP_tNP_t}
E[P_{t}MP_{t}]\preccurlyeq (1-\gamma\lambda_{t})((1-\gamma\lambda_{t})\xi'  +\gamma\xi)\Sigma.
\end{equation} 
\end{lemma}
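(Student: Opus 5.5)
We expand \(P_t\) directly. Write \(X:=x_t\otimes x_t\) and \(a:=1-\gamma\lambda_t\), so that \(P_t=aI-\gamma X\) and \(A_t=aI-\gamma\Sigma\). Since \(x_t\) is a copy of \(x\), every expectation below reduces to a moment of \(x\otimes x\).

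\textbf{Positivity of \(A_t\).} By \eqref{eq:SigmaLeR^2}, \(\gamma\Sigma\preccurlyeq\gamma R^{2}I\). By \eqref{eq:GammaUpBound}, \(\gamma R^{2}\leq 1-\gamma\Lambda\leq a\), since \(\lambda_t\le\Lambda\). Hence \(A_t=aI-\gamma\Sigma\succcurlyeq0\), and also \(a\geq0\).

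\textbf{Bound on \(E[P_t^2]\).} Expanding,
\[
E[P_t^{2}]=a^{2}I-2a\gamma\Sigma+\gamma^{2}E[X^{2}].
\]
Since \(X^{2}=\|x\|^{2}x\otimes x\), \eqref{eq:R2Assumption} gives \(\gamma^{2}E[X^2]\preccurlyeq\gamma^{2}R^{2}\Sigma\preccurlyeq a\gamma\Sigma\), using \(\gamma R^{2}\le a\) once more. Therefore
\[
E[P_t^2]\preccurlyeq a^{2}I-a\gamma\Sigma=aA_t .
\]
Finally, \(aA_t\preccurlyeq a^{2}I\) because \(\Sigma\succcurlyeq0\) and \(a\ge0\).

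\textbf{Bound \eqref{eq:expectationP_tNP_t}.} Expanding again,
\[
E[P_tMP_t]=a^{2}M-a\gamma(\Sigma M+M\Sigma)+\gamma^{2}E[XMX].
\]
The last term is handled with \(M\preccurlyeq\xi I\). Since \(v\mapsto XvX\) preserves order, \(XMX\preccurlyeq\xi X^2\), so
\[
\gamma^2E[XMX]\preccurlyeq\gamma^{2}\xi R^{2}\Sigma\preccurlyeq a\gamma\xi\Sigma .
\]
The first two terms are the main obstacle, because \(M\) need not commute with \(\Sigma\). I would therefore avoid splitting them and recombine the first three terms as
\[
a^2M-a\gamma(\Sigma M+M\Sigma)=A_tMA_t-\gamma^{2}\Sigma M\Sigma .
\]

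\textbf{Handling \(A_tMA_t\).} Because \(A_t\) is self-adjoint, \(M\preccurlyeq\xi'\Sigma\) gives \(A_tMA_t\preccurlyeq\xi'A_t\Sigma A_t\). Since \(A_t\) and \(\Sigma\) commute with \(0\preccurlyeq A_t\preccurlyeq aI\), this yields \(A_tMA_t\preccurlyeq a^2\xi'\Sigma\).

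\textbf{Handling \(-\gamma^{2}\Sigma M\Sigma\).} This term is not automatically nonpositive, since \(M\) may be indefinite. Instead I would compare it with the \(E[XMX]\) term. The idea is to use \(E[X(\cdot)X]\succcurlyeq\Sigma(\cdot)\Sigma\), which is valid for positive semidefinite arguments by a variance argument, applied to \(\xi I-M\succcurlyeq0\):
\[
E[X(\xi I-M)X]\succcurlyeq\Sigma(\xi I-M)\Sigma .
\]
Rearranging,
\[
\gamma^2E[XMX]-\gamma^2\Sigma M\Sigma\preccurlyeq\gamma^2\xi\bigl(E[X^2]-\Sigma^2\bigr)\preccurlyeq\gamma^{2}\xi R^{2}\Sigma\preccurlyeq a\gamma\xi\Sigma .
\]

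\textbf{Conclusion.} Summing the two pieces gives
\[
E[P_tMP_t]\preccurlyeq a^{2}\xi'\Sigma+a\gamma\xi\Sigma=a\bigl(a\xi'+\gamma\xi\bigr)\Sigma,
\]
which is exactly \eqref{eq:expectationP_tNP_t}.
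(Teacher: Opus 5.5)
Your proof is correct and essentially matches the paper's. Your recombination \(a^2M-a\gamma(\Sigma M+M\Sigma)+\gamma^2E[XMX]=A_tMA_t+\gamma^2E[(X-\Sigma)M(X-\Sigma)]\) is exactly the paper's decomposition \(E[P_tMP_t]=E[P_t]ME[P_t]+E[(P_t-E[P_t])M(P_t-E[P_t])]\), bounded the same way via \(M\preccurlyeq\xi'\Sigma\) and \(M\preccurlyeq\xi I\). The only minor difference is that you get \(A_t\succcurlyeq0\) directly from \(\gamma\Sigma\preccurlyeq\gamma R^2I\preccurlyeq(1-\gamma\lambda_t)I\), whereas the paper deduces it from the bound on \(E[P_t^2]\).
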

\begin{proof} 
 The second inequality in  \eqref{eq:EP_t^2Bound} holds because \(\gamma\lambda_{t}\leq1\) by \eqref{eq:GammaUpBound}.  For \(t\geq0\),\begin{eqnarray*}\nonumber E[P_{t}^{2}]&=&E[P_{t}]^{2}+E[(P_{t}-E[P_{t}])^{2}]\\\nonumber&=&A_{t}^{2}+\gamma^{2}(E[||x||^{2}  x\otimes x]-\Sigma^{2})\\&\preccurlyeq&A_{t}^{2}+\gamma A_{t}\Sigma
 \\&=&(1-\gamma\lambda_{t})A_{t},\end{eqnarray*}where the second equation follows from the equality\begin{equation}\label{eq:VarXxT}
E[(x\otimes x-\Sigma)^{2}]=E[||x||^{2}x\otimes x]-\Sigma^{2},
\end{equation}and the third from \eqref{eq:R2Assumption} and \eqref{eq:GammaUpBound}. This implies the first inequality in \eqref{eq:EP_t^2Bound}, which yields
 \(A_{t}\succcurlyeq0\). 

Consider now an operator \(M\) satisfying the conditions in the lemma. For \(t\geq0\),\begin{eqnarray*}\nonumber E[P_{t}MP_{t}]&=&E[P_{t}]ME[P_{t}]+E[(P_{t}-E[P_{t}])M(P_{t}-E[P_{t}])]\\
\nonumber&\preccurlyeq&\xi'  \Sigma A_{t}^{2}+\xi E[(P_{t}-E[P_{t}])^{2}]\\\nonumber&\preccurlyeq&\xi'  (1-\gamma\lambda_{t})^{2}\Sigma+\gamma^{2}\xi(E[||x||^{2}  x\otimes x]-\Sigma^{2})\\&\preccurlyeq&(1-\gamma\lambda_{t})((1-\gamma\lambda_{t})\xi'  +\gamma\xi)\Sigma,\end{eqnarray*}where the third equation follows from \(0\preccurlyeq A_{t}\preccurlyeq(1-\gamma\lambda_{t})I\), and the last from \eqref{eq:R2Assumption} and \eqref{eq:GammaUpBound}.  \end{proof}

Lemma~\ref{le:sumDistWtTheta*} provides a bound on the sum of \(||w_{t}-\theta^{*}||_{\Sigma}^{2}\) over intervals starting at \(j \ge m\).

\begin{lemma}\label{le:sumDistWtTheta*}For   \(k\geq0\) and  \(j\geq m\geq3\), \begin{equation}\label{eq:sumDistWtTheta*}
\gamma E\left[\sum^{j+k-1}_{t=j}||w_{t}-\theta^{*}||_{\Sigma}^{2}\right]\leq E[||w_{j}-\theta^{*}||^{2}_{I - A_{m}^{2k}}],
\end{equation}
where, by convention,  the left-hand side is zero when  \(k=0\).
\end{lemma}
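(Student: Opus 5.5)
The plan is to prove \eqref{eq:sumDistWtTheta*} by induction on \(k\), simultaneously for all \(j\geq m\). The key one-step inequality is a Lyapunov-type bound with the operator \(M_k:=I-A_m^{2k}\).

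Set \(z_t:=w_t-\theta^*\). For \(t\geq m\) we have \(\lambda_t=0\), so \eqref{eq:RecW_tPt} becomes \(z_{t+1}=P_t z_t\) with \(P_t=I-\gamma x_t\otimes x_t\) and \(E[P_t]=A_m=I-\gamma\Sigma\). By Lemma~\ref{le:expectationP_tNP_t}, \(0\preccurlyeq A_m\preccurlyeq I\). Since \(A_m\) is a function of \(\Sigma\), it shares the eigenbasis \((e_i)\). Hence each \(M_k\) is self-adjoint with \(0\preccurlyeq M_k\preccurlyeq I\), and it commutes with \(\Sigma\).

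The core step is the operator inequality
\begin{equation*}
E[P_t M_k P_t]\preccurlyeq M_{k+1}-\gamma\Sigma \qquad (t\geq m,\ k\geq 0).
\end{equation*}
To prove it, I would expand
\[
E[P_tM_kP_t]=A_mM_kA_m+\gamma^2E[(x\otimes x-\Sigma)M_k(x\otimes x-\Sigma)].
\]
Since \(0\preccurlyeq M_k\preccurlyeq I\), the second term is at most \(\gamma^2E[(x\otimes x-\Sigma)^2]\). By \eqref{eq:VarXxT} and \eqref{eq:R2Assumption}, this equals \(\gamma^2(E[\|x\|^2x\otimes x]-\Sigma^2)\preccurlyeq\gamma^2R^2\Sigma-\gamma^2\Sigma^2\), which is \(\preccurlyeq\gamma\Sigma-\gamma^2\Sigma^2\) by \eqref{eq:GammaUpBound}. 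The first term is \(A_m^2-A_m^{2k+2}\) by commutativity.

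Adding the two bounds gives
\[
E[P_tM_kP_t]\preccurlyeq A_m^2-A_m^{2k+2}+\gamma\Sigma-\gamma^2\Sigma^2 .
\]
Using \(A_m^2=I-2\gamma\Sigma+\gamma^2\Sigma^2\), the right-hand side is exactly \(I-A_m^{2k+2}-\gamma\Sigma=M_{k+1}-\gamma\Sigma\). The cancellation is exact, so no constant is lost. This step is the only delicate one: one must bound the fluctuation term through \(M_k\preccurlyeq I\) rather than through the crude \(\Sigma\)-bound \eqref{eq:expectationP_tNP_t}, which would not yield the \(A_m^{2k}\) structure.

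With this in hand, \(x_t\) is independent of \(z_t\). Conditioning on \(z_t\) therefore gives
\[
E\|z_{t+1}\|^2_{M_k}\le E\|z_t\|^2_{M_{k+1}}-\gamma E\|z_t\|^2_{\Sigma}.
\]
The induction is then immediate. For \(k=0\), both sides of \eqref{eq:sumDistWtTheta*} vanish since \(M_0=0\). Assuming the claim for \(k\) and all \(j\geq m\), apply it at \(j+1\) and combine with the one-step inequality at \(t=j\):
\[
\gamma E\sum_{t=j}^{j+k}\|z_t\|^2_\Sigma\le \gamma E\|z_j\|_\Sigma^2+E\|z_{j+1}\|^2_{M_k}\le E\|z_j\|^2_{M_{k+1}},
\]
which is the statement for \(k+1\). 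Square-integrability of \(z_t\) follows inductively from the moment assumptions, so all expectations are finite.
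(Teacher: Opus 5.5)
Your proof is correct and follows the paper's argument: the same induction on $k$, simultaneously over all $j\geq m$, driven by the one-step bound $E[P_jM_kP_j]\preccurlyeq I-A_m^{2k+2}-\gamma\Sigma$. The paper obtains this bound by writing $E[P_j(I-A_m^{2k})P_j]=E[P_j^2]-E[P_jA_m^{2k}P_j]$ and combining $E[P_j^2]\preccurlyeq A_m$ with $A_m^{2k+2}\preccurlyeq E[P_jA_m^{2k}P_j]$; this drops the same nonnegative term as your mean--fluctuation split, where you bound the fluctuation using $M_k\preccurlyeq I$.
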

\begin{proof}
We prove by induction on \(k\geq0\) that \eqref{eq:sumDistWtTheta*} holds for  all \(j\geq m\).  The base case  \(k=0\) is immediate. Assume now that  \eqref{eq:sumDistWtTheta*} holds for \(k\) and any \(j\geq m\). Replacing \(j\) by \(j+1\) shows that, for \(j\geq m\),  \begin{equation*}
\gamma E\left[\sum^{j+k}_{t=j+1}||w_{t}-\theta^{*}||_{\Sigma}^{2}\right]^{2}\leq E[||w_{j+1}-\theta^{*}||^{2}_{I - A_{m}^{2k}}].
\end{equation*}
Since \(w_{j+1}-\theta^{*}=P_{j}(w_{j}-\theta^{*})\) by \eqref{eq:RecW_tPt} and \(P_{j}\) is independent of \(w_{j}\), \begin{displaymath}
E[||w_{j+1}-\theta^{*}||^{2}_{I - A_{m}^{2k}}]=E[\langle w_{j}-\theta^{*},E[P_{j}(I - A_{m}^{2k})P_{j}]( w_{j}-\theta^{*})\rangle].
\end{displaymath}
 Moreover, as \(E[U]VE[U]\preccurlyeq E[UVU]\) for any deterministic self-adjoint  operator \(V\) with \(0\preccurlyeq V\) and any square-integrable self-adjoint operator \(U\),
 \begin{displaymath}
A_{m}^{2k+2}=E[P_{j}]A_{m}^{2k}E[P_{j}]\preccurlyeq E[P_{j} A_{m}^{2k}P_{j}].
\end{displaymath} Using \eqref{eq:EP_t^2Bound}, it follows that \(E[P_{j}(I - A_{m}^{2k})P_{j}]\preccurlyeq A_{m}-A_{m}^{2k+2}\). 
Consequently,  \begin{eqnarray*}
\gamma E\left[\sum^{j+k}_{t=j+1}||w_{t}-\theta^{*}||_{\Sigma}^{2}\right]^{2}&\leq& E\left[\langle w_{j}-\theta^{*},(A_{m}-A_{m}^{2k+2})( w_{j}-\theta^{*})\rangle\right]\\&=&E\left[||w_{j}-\theta^{*}||^{2}_{I - A_{m}^{2k+2}}\right]-\gamma E\left[||w_{j}-\theta^{*}||^{2}_{\Sigma}\right],
\end{eqnarray*}
where the second equation follows from   \(A_{m}=I-\gamma\Sigma\). Thus \eqref{eq:sumDistWtTheta*} holds for \(k+1\) and \(j\).     
\end{proof}
Lemma~\ref{le:PolyOpInequality} provides   general operator inequalities.

\begin{lemma}\label{le:PolyOpInequality} For $n \geq 1$ and any self-adjoint operator $M$ satisfying $0 \preccurlyeq M \preccurlyeq I$ and diagonalizable in an orthonormal basis, we have\begin{equation}\label{eq:PolyOpInequality}
 I-(I-M)^{n}\preccurlyeq2M\left(M+\frac{1}{n}\right)^{-1},
\end{equation}and\begin{equation}\label{eq:PolyOpInequalityBis}
I-(I-M)^{n}\preccurlyeq nM.
\end{equation} \end{lemma}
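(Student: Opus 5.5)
Both inequalities reduce to scalar inequalities via the spectral theorem. Since $M$ is self-adjoint, positive semidefinite and diagonalizable in an orthonormal basis $(u_i)_{i\in J}$ with eigenvalues $z_i\in[0,1]$ (because $0\preccurlyeq M\preccurlyeq I$), the operators $I-(I-M)^n$, $nM$ and $2M(M+1/n)^{-1}$ are all diagonal in the same basis. Their eigenvalues at $u_i$ are $1-(1-z_i)^n$, $nz_i$ and $2z_i/(z_i+1/n)$. Note that $(M+1/n)^{-1}$ is well defined and bounded since $M+1/n\succcurlyeq (1/n)I$. For operators sharing an orthonormal eigenbasis, $A\preccurlyeq B$ holds iff the corresponding eigenvalues satisfy the inequality. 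So it suffices to show, for $z\in[0,1]$ and $n\geq1$,
\begin{displaymath}
1-(1-z)^{n}\leq nz\quad\text{and}\quad 1-(1-z)^{n}\leq \frac{2nz}{nz+1}.
\end{displaymath}

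The first inequality is Bernoulli's inequality $(1-z)^n\geq 1-nz$, valid for $z\in[0,1]$. It follows by induction on $n$, or from convexity of $z\mapsto(1-z)^n$ on $[0,1]$, which lies above its tangent line at $0$. This gives \eqref{eq:PolyOpInequalityBis}.

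For the second, split into two cases. If $nz\leq 1$, then $nz+1\leq 2$, so $nz\leq 2nz/(nz+1)$, and the claim follows from the first inequality. If $nz\geq1$, then $2nz/(nz+1)\geq1\geq 1-(1-z)^n$, since $(1-z)^n\geq0$. Combining the cases, $1-(1-z)^n\leq\min(1,nz)\leq 2nz/(1+nz)$, which is \eqref{eq:PolyOpInequality}.

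The only point needing care is the passage from scalar to operator inequalities in possibly infinite dimension. For any $v=\sum_i c_iu_i$, we have $\langle v,(B-A)v\rangle=\sum_i c_i^2(b_i-a_i)\geq0$, where $a_i,b_i$ are the eigenvalues of $A,B$ at $u_i$. All operators involved are bounded, with eigenvalues bounded by $\max(n,2)$, so the series converge. Hence nothing beyond the scalar inequalities is needed, and I expect no real obstacle.
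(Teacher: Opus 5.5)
Your proof is correct and follows the paper's argument: the scalar bound $1-(1-z)^n\le nz$ from convexity (Bernoulli), a case split at $nz=1$ for $1-(1-z)^n\le 2nz/(1+nz)$, and a diagonalization in the eigenbasis of $M$ to lift both scalar inequalities to operator inequalities. You spell out the infinite-dimensional lifting step (the quadratic form $\sum_i c_i^2(b_i-a_i)\ge 0$) more explicitly than the paper does.
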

\begin{proof}As \(z\mapsto(1-z)^{n}\) is convex on the interval \([0,1]\), we have \(1-n z\leq(1-z)^{n} \) for \(  0\leq z\leq1\). Thus, when  \(n z \le 1\),   \begin{equation}\label{eq:ploynomialIneq}
1-(1-z)^{n}\leq\frac{2n z}{1+n z}.
\end{equation}   
When \(n z \ge 1\), \eqref{eq:ploynomialIneq} remains valid because the righthand side is lower bounded by \(1\). Consider now  an orthonormal eigenbasis \((u_{i})_{i\in J}\) of \(\mathcal{H}\) for the operator \(M\), and let \(a_{i}\) be the eigenvalue of \(M\) associated with \(u_{i}\), i.e., \(Mu_{i}=a_{i}u_{i}\) for \(i\in J\). Let \(B\) be the difference between the righthand side  and lefthand side of  \eqref{eq:PolyOpInequality}. For \(i\in J\),\begin{displaymath}
Bu_{i}=\left(\frac{2na_{i}}{1+na_{i}}-(1-(1-a_{i})^{n})\right)u_{i}.
\end{displaymath} By  \eqref{eq:ploynomialIneq},  the eigenvalues of \(B\) are nonnegative, which implies  \eqref{eq:PolyOpInequality}. A similar diagonalization argument implies \eqref{eq:PolyOpInequalityBis}.   
 \end{proof}
Lemma \ref{le:CombiningOperatorInequalities} shows how to combine two upper bounds on an operator into a single bound. 
\begin{lemma}\label{le:CombiningOperatorInequalities}
Let \(B\), \(C\) and \(D\) be self-adjoint operators on \(\mathcal{H}\) such that \(0\preccurlyeq B\preccurlyeq C\) and \(0\preccurlyeq B\preccurlyeq D\).   Assume further  that \(C+D\) is invertible that   \(C\) and \(D\) commute. Then \(CD(C+D)^{-1}\) is self-adjoint and \(B\preccurlyeq 2CD(C+D)^{-1}\).
\end{lemma}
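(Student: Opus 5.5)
Because \(C\) and \(D\) commute, are self-adjoint, and \(C+D\) is invertible, \(CD(C+D)^{-1}\) is self-adjoint. I would then prove \(B\preccurlyeq 2CD(C+D)^{-1}\) through the identity \(2CD(C+D)^{-1}=C+D-(C-D)^2\cdot\) (something), or, more cleanly, by a congruence argument that reduces the claim to a scalar inequality.

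First, self-adjointness. Since \(C\) and \(D\) commute, \(C+D\) commutes with both of them. Hence \((C+D)^{-1}\) commutes with \(C\) and \(D\), which gives \((CD(C+D)^{-1})^{*}=(C+D)^{-1}DC=CD(C+D)^{-1}\). Note also that \(C\succcurlyeq B\succcurlyeq0\) and \(D\succcurlyeq0\), so \(C+D\succcurlyeq0\); being invertible, it is positive definite. Write \(S=(C+D)^{-1/2}\), which commutes with \(C\) and \(D\).

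Second, the inequality. Summing \(B\preccurlyeq C\) and \(B\preccurlyeq D\) with suitable operator weights does not work directly, because operator weights do not preserve order unless we conjugate. So I use congruence instead. Set \(X=SCS\) and \(Y=SDS\). These commute, satisfy \(X+Y=I\), and are both positive semidefinite, so \(0\preccurlyeq X,Y\preccurlyeq I\). Similarly set \(B'=SBS\), so that \(0\preccurlyeq B'\preccurlyeq X\) and \(B'\preccurlyeq Y=I-X\). The target \(B\preccurlyeq 2CD(C+D)^{-1}\) is equivalent, after congruence by \(S\) on both sides, to
\[
B'\preccurlyeq 2S\,CD(C+D)^{-1}S=2XY=2X(I-X).
\]

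Third, reduce to scalars. Since \(X\) is diagonalizable together with \(I-X\) via the spectral theorem, I would write
\[
2X(I-X)=X\cdot(2(I-X))\succcurlyeq X(I-X)+(I-X)X .
\]
Instead, I use the combination
\[
X(I-X)=(I-X)^{1/2}X(I-X)^{1/2}\succcurlyeq (I-X)^{1/2}B'(I-X)^{1/2},
\]
and likewise \(X(I-X)\succcurlyeq X^{1/2}B'X^{1/2}\). Adding these gives
\[
2X(I-X)\succcurlyeq X^{1/2}B'X^{1/2}+(I-X)^{1/2}B'(I-X)^{1/2}.
\]
So it remains to show that \(X^{1/2}B'X^{1/2}+(I-X)^{1/2}B'(I-X)^{1/2}\succcurlyeq B'\). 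This is the main obstacle, since that map is a pinching-type map that need not dominate \(B'\) for arbitrary positive \(B'\).

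If it fails, I would fall back on a Schur-complement approach. Since \(B'\preccurlyeq X\) and \(B'\preccurlyeq I-X\), I have \(\langle v,B'v\rangle\le\min(\langle v,Xv\rangle,\langle v,(I-X)v\rangle)\), and I need this to be bounded by \(2\langle v,X(I-X)v\rangle\) up to the constant 2. For \(v\) supported where \(X\) has eigenvalues \(\le1/2\), one has \(2X(I-X)\succcurlyeq X\) there, and similarly where \(X\ge1/2\). So I would decompose \(v=v_1+v_2\) along the spectral projections \(\{X\le1/2\}\) and \(\{X>1/2\}\), and use \(\langle v,B'v\rangle\le2\langle v_1,B'v_1\rangle+2\langle v_2,B'v_2\rangle\). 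This gives
\[
\langle v,B'v\rangle\le 2\langle v_1,Xv_1\rangle+2\langle v_2,(I-X)v_2\rangle\le 4\langle v,X(I-X)v\rangle,
\]
using orthogonality of the spectral pieces. This loses a factor of 2, so to obtain exactly the constant 2 I expect to need a sharper weighting, for example \(\langle v,B'v\rangle\le(1+t)\langle v_1,B'v_1\rangle+(1+1/t)\langle v_2,B'v_2\rangle\) with the split point optimized. The hard step is matching the stated constant 2.
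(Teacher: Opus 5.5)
Your reduction to \(B'\preccurlyeq 2X(I-X)\), with \(X=C(C+D)^{-1}\) and \(B'=(C+D)^{-1/2}B(C+D)^{-1/2}\), is correct, and so is your self-adjointness argument. The gap is that the proposal does not prove the stated constant \(2\), and neither of your two routes can.

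The first route needs \(X^{1/2}B'X^{1/2}+(I-X)^{1/2}B'(I-X)^{1/2}\succcurlyeq B'\). In an eigenbasis of \(X\), the difference of the two sides is the entrywise product of \(B'\) with the matrix of entries \(\sqrt{x_ix_j}+\sqrt{(1-x_i)(1-x_j)}-1\). These entries vanish on the diagonal and are negative whenever \(x_i\neq x_j\). A positive semidefinite matrix with zero diagonal is zero, so the inequality holds only when \(B'\) commutes with \(X\). Concretely, take \(C=X=\text{diag}(0.3,0.5)\), \(D=I-X\), and \(B=B'\) the \(2\times2\) matrix with all entries \(0.1\). These satisfy all the hypotheses, yet the required inequality fails.

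The second route gives only \(B\preccurlyeq 4CD(C+D)^{-1}\), and your proposed refinement cannot reach \(2\). Split at a spectral level \(s\) with weights \(1+t\) and \(1+1/t\). The bounds you use then give constants \((1+t)/(1-s)\) and \((1+1/t)/s\) on the two pieces. Since \(1/(1+t)+t/(1+t)=1\), the larger of the two is at least \(1/(s(1-s))\geq4\).

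The missing idea is to split \(v\) softly rather than along spectral projections. Write \(v=a+b\) with \(a=(I-X)v\) and \(b=Xv\); in the original variables, \(a=D(C+D)^{-1}v\) and \(b=C(C+D)^{-1}v\). Your Cauchy--Schwarz step, the two hypotheses, and the commutativity of \(X\) and \(I-X\) then give
\[
\langle v,B'v\rangle\le2\langle a,Xa\rangle+2\langle b,(I-X)b\rangle=2\left\langle v,X(I-X)\left[(I-X)+X\right]v\right\rangle=2\langle v,X(I-X)v\rangle,
\]
which is exactly the claim. This is the variational characterization of the parallel sum, and the normalization by \((C+D)^{-1/2}\) is not even needed.

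The paper argues differently. For invertible \(B\), it uses that inversion reverses the order, so \(C^{-1}\preccurlyeq B^{-1}\) and \(D^{-1}\preccurlyeq B^{-1}\), hence \(C^{-1}+D^{-1}\preccurlyeq 2B^{-1}\). Inverting again yields \(B\preccurlyeq2(C^{-1}+D^{-1})^{-1}=2CD(C+D)^{-1}\). The general case follows by applying this to \(B+\epsilon I\), \(C+\epsilon I\), \(D+\epsilon I\) and letting \(\epsilon\to0\). The soft split avoids both the order-reversal of inversion and this limiting argument.
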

\begin{proof}
 As \(C\) and \(D\) commute, \((C+D)^{-1}\) commutes with both \(C\) and \(D\), which implies that  \(CD(C+D)^{-1}\) is self-adjoint.  Assume first that \(B\) is invertible. Then \(C\) is invertible and \(C^{-1}\preccurlyeq B^{-1} \), since  \(0\preccurlyeq B\preccurlyeq C\). Similarly, \(D\) is invertible and \(D^{-1}\preccurlyeq B^{-1}\). Thus, \(C^{-1}+D^{-1}\preccurlyeq 2B^{-1}\). Inverting both sides yields \begin{displaymath}
B\preccurlyeq2(C^{-1}+D^{-1})^{-1}=2CD(C+D)^{-1}.
\end{displaymath}where the last equation follows from the fact that \(C\) and \(D\). In the general case, applying the preceding inequality to  \(B+\epsilon I\),   \(C+\epsilon I\) and \(D+\epsilon I\) for \(\epsilon>0\), and letting \(\epsilon\) go to \(0\), yields the desired bound.
\end{proof}
\section{Proof of Lemma \ref{le:TotalBiasAtT}}
We first analyze the bias of \(w_{t}\) 
 using Lemma~\ref{le:bias} and its variance using Lemma~\ref{le:SecondMomentEpsilon}. Both lemmas suppose that the assumptions  of Lemma \ref{le:TotalBiasAtT} hold.  \begin{lemma}\label{le:bias}
For \(t\geq 0\), we have \begin{equation}\label{eq:RecEta_tPt}
E[w_{t+1}]-\theta^{*}=A_{t}(E[w_{t}]-\theta^{*})-\gamma\lambda_{t}\theta^{*},
\end{equation} and \begin{equation}\label{eq:eta_tTheta*DiffSq}
||E[w_{t}]-\theta^{*}||^{2}\le2||\theta_{\Lambda}-\theta^{*}||^{2}+2(1-\gamma \Lambda )^{2\min (m,t)}||\theta_{\Lambda}||^{2}. 
\end{equation} For \(t\geq m\),
we have 
\begin{equation}\label{eq:Eta_tTheta*Diff}
||E[w_{t}]-\theta^{*}||_{\Sigma}\leq2\Lambda||A_{m}^{t-m}(\Sigma+\Lambda)^{-1}\theta^{*}||_{\Sigma}.
\end{equation}\end{lemma}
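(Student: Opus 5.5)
We prove the three claims of Lemma~\ref{le:bias} in order, working throughout with the deterministic sequence \(\eta_t:=E[w_t]\).

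For \eqref{eq:RecEta_tPt}, we take expectations in \eqref{eq:RecW_tPt}. Since \(w_t\) depends only on \((x_s,y_s)_{s<t}\), it is independent of \(P_t\). Hence \(E[P_t(w_t-\theta^*)]=E[P_t](E[w_t]-\theta^*)=A_t(E[w_t]-\theta^*)\), and the identity follows immediately.

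For \eqref{eq:eta_tTheta*DiffSq}, we first rewrite the recursion around \(\theta_\Lambda\). For \(t<m\), the normal equation \((\Sigma+\Lambda)\theta_\Lambda=\Sigma\theta^*\) gives \(A_t\theta_\Lambda=\theta_\Lambda-\gamma\Sigma\theta^*\). The recursion \eqref{eq:RecEta_tPt} can be written as \(\eta_{t+1}=A_t\eta_t+\gamma\Sigma\theta^*\), so for \(t<m\) we get \(\eta_{t+1}-\theta_\Lambda=A_t(\eta_t-\theta_\Lambda)\), where \(A_t=I-\gamma(\Sigma+\Lambda)\). With \(\eta_0=0\), this yields \(\eta_t-\theta_\Lambda=-(I-\gamma(\Sigma+\Lambda))^t\theta_\Lambda\) for \(t\le m\). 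For \(t\ge m\), the recursion is \(\eta_{t+1}-\theta^*=A_m(\eta_t-\theta^*)\) with \(A_m=I-\gamma\Sigma\), and \(0\preccurlyeq A_m\preccurlyeq I\) by Lemma~\ref{le:expectationP_tNP_t}. Hence \(\|\eta_t-\theta^*\|\le\|\eta_m-\theta^*\|\). We then write \(\eta_t-\theta^*=(\theta_\Lambda-\theta^*)+(\eta_t-\theta_\Lambda)\) at \(t'=\min(m,t)\) and use \(\|u+v\|^2\le2\|u\|^2+2\|v\|^2\). Finally, the bound \(0\preccurlyeq I-\gamma(\Sigma+\Lambda)\preccurlyeq(1-\gamma\Lambda)I\) gives \(\|\eta_{t'}-\theta_\Lambda\|^2\le(1-\gamma\Lambda)^{2t'}\|\theta_\Lambda\|^2\), which is \eqref{eq:eta_tTheta*DiffSq}.

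For \eqref{eq:Eta_tTheta*Diff}, we iterate from time \(m\): \(\eta_t-\theta^*=A_m^{t-m}(\eta_m-\theta^*)\), and we decompose
\[
\eta_m-\theta^*=(\theta_\Lambda-\theta^*)-(I-\gamma(\Sigma+\Lambda))^m\theta_\Lambda .
\]
The first term equals \(-\Lambda(\Sigma+\Lambda)^{-1}\theta^*\). The second term is \((I-\gamma(\Sigma+\Lambda))^m\Sigma(\Sigma+\Lambda)^{-1}\theta^*\). All the operators involved commute, so the \(\Sigma\)-norm of \(A_m^{t-m}\) applied to the second term is at most \(\|(I-\gamma(\Sigma+\Lambda))^m\Sigma\|\) times \(\|A_m^{t-m}(\Sigma+\Lambda)^{-1}\theta^*\|_\Sigma\). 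This operator norm is bounded by the scalar estimate \(\sup_{\mu\ge0}(1-\gamma\mu-\gamma\Lambda)^m\mu\le(1-\gamma\Lambda)^m\sup_\mu\mu\), which is not by itself enough. Instead we aim for the bound \(\le\Lambda\): the eigenvalue of the operator is \((1-\gamma\Lambda-\gamma\mu)^m\mu\le e^{-\gamma\Lambda m}\mu\,(1-\gamma\mu)^m\), and since \(\mu(1-\gamma\mu)^m\le1/(\gamma m)\) while \(e^{-\gamma\Lambda m}\le1/m\) by \(\Lambda\ge\log m/(\gamma m)\), it is at most \(1/(\gamma m^2)\le\Lambda\), because \(\log m\ge1\) for \(m\ge3\). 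The triangle inequality then gives the factor \(2\Lambda\).

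The main obstacle is this last eigenvalue estimate. It is the only place where the hypothesis \(\Lambda\ge\log(m)/(\gamma m)\) is used, and the constants have to be checked carefully, in particular the use of \(1-\gamma\Lambda-\gamma\mu\le(1-\gamma\Lambda)(1-\gamma\mu)\), which holds when both factors lie in \([0,1]\), as guaranteed by \eqref{eq:GammaUpBound}.
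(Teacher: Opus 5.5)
Your proof is correct and follows the paper's argument step for step: the same recursion for \(E[w_t]\), the same closed form \(E[w_t]=\theta_\Lambda-A_0^t\theta_\Lambda\) for \(t\le m\), contraction by \(A_m\) afterwards, and the same decomposition of \(E[w_m]-\theta^*\) into \(-\Lambda(\Sigma+\Lambda)^{-1}\theta^*\) plus a term whose operator \(A_0^m\Sigma\) is shown to be \(\preccurlyeq\Lambda I\). One small correction is that the cruder estimate you dismissed as ``not by itself enough'' does suffice, and it is the paper's route: eigenvalues of \(\Sigma\) are at most \(R^2\) and \(\gamma R^2\le 1\), so \((1-\gamma\Lambda)^m\mu\le R^2/m\le 1/(\gamma m)\le\Lambda\). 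Your factorization via \(\mu(1-\gamma\mu)^m\le 1/(\gamma m)\) is also valid.
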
 
\begin{proof}
 Taking expectations in  \eqref{eq:RecW_tPt}   and using the independence of \(P_{t}\) and \(w_{t}\) implies \eqref{eq:RecEta_tPt}. Because \(\theta_{\Lambda}=\Sigma(\Sigma+\Lambda)^{-1}\theta^{*}\), it follows by induction  that, for \(0\leq t\leq m\),  \begin{equation}\label{eq:E[w_t]Value}
E[w_{t}]=\theta_{\Lambda}-A_{0}^{t}\theta_{\Lambda}. 
\end{equation}As \(0\preccurlyeq A_{0}\preccurlyeq(1-\gamma \Lambda )I\) by Lemma~\ref{le:expectationP_tNP_t},
    this yields \(||E[w_{t}]-\theta_{\Lambda}||\leq(1-\gamma \Lambda )^{t}||\theta_{\Lambda}||\) for \(0\leq t\leq m\).  Using the inequality \(||u+v||^{2}\leq2||u||^{2}+2||v||^{2}\)
 for \(u,v\in\mathcal{H}\), this shows that  \eqref{eq:eta_tTheta*DiffSq} holds  for \(0\leq t\leq m\). Moreover, as \(0\preccurlyeq A_m\preccurlyeq I\) by Lemma~\ref{le:expectationP_tNP_t}, it follows from \eqref{eq:RecEta_tPt} and an induction argument that \(||E[w_{t}]-\theta^{*}||\leq||E[w_{m}]-\theta^{*}||\)  for \(t\geq m\).
Hence \eqref{eq:eta_tTheta*DiffSq} holds for \(t\geq m\) as well.

Replacing \(\theta_{\Lambda}\) by \(\Sigma(\Sigma+\Lambda)^{-1}\theta^{*}\) in \eqref{eq:E[w_t]Value} shows that\begin{displaymath}
E[w_{m}]=\theta^{*}-\Lambda(I+\Lambda^{-1}A_{0}^{m}\Sigma )(\Sigma+\Lambda)^{-1}\theta^{*}.
\end{displaymath}Using \eqref{eq:RecEta_tPt} and an induction argument, it follows that, for \(t\geq m\),
\begin{equation}\label{eq:Ew_tMinusTheta*}
E[w_{t}]=\theta^{*}-\Lambda(I+\Lambda^{-1}A_{0}^{m}\Sigma )A_{m}^{t-m}(\Sigma+\Lambda)^{-1}\theta^{*}.
\end{equation} As  \(0\preccurlyeq A_{0}\preccurlyeq(1-\gamma \Lambda )I\) by Lemma~\ref{le:expectationP_tNP_t} and \(A_{0}\) and \(\Sigma\) have a common orthonormal eigenbasis,   we have \begin{equation*}
0\preccurlyeq\Lambda^{-1}A_{0}^{m}\Sigma\preccurlyeq\Lambda^{-1}R^{2}(I-\gamma\Lambda )^{m} I\preccurlyeq \gamma R^{2}I,
\end{equation*} where the second inequality follows from \eqref{eq:SigmaLeR^2}, and the third from  \(1+z\leq e^{z}\) for \(z\in\mathbb{R}\) together with \(\Lambda\geq \log(m)/(\gamma m)\). Using  \eqref{eq:GammaUpBound}, it follows that \(0\preccurlyeq\Lambda^{-1}A_{0}^{m}\Sigma\preccurlyeq I\) which, combined with \eqref{eq:Ew_tMinusTheta*}, yields \eqref{eq:Eta_tTheta*Diff}.
\end{proof}

Set \(\epsilon_{t}=w_{t}-E[w_{t}]\) for \(t\geq0\). Combining \eqref{eq:RecW_tPt} and   \eqref{eq:RecEta_tPt} shows that, for \(t\geq0\),\begin{equation}\label{eq:recEps}
\epsilon_{t+1}=P_{t}\epsilon_{t}+(P_{t}-A_{t})(E[w_{t}]-\theta^{*}).
\end{equation} 
 
\begin{lemma}\label{le:SecondMomentEpsilon}For \(t\geq m\), we have \(E[||\epsilon _{t}||^{2}]\le4\gamma ^{2}R^{2}\Lambda^{2}||(\Sigma+\Lambda)^{-1}\theta^{*}||^{2}_{\Sigma}t\). 
\end{lemma}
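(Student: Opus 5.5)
We take second moments in the recursion \eqref{eq:recEps}, bound the variance injected at each step, and sum over time. Set \(\eta_t:=E[w_t]-\theta^{*}\). Since \(P_t\) is independent of \(\epsilon_t\) and of the deterministic \(\eta_t\), and \(E[\epsilon_t]=0\), the cross term \(E\langle P_t\epsilon_t,(P_t-A_t)\eta_t\rangle\) equals \(E\langle \epsilon_t, E[P_t(P_t-A_t)]\eta_t\rangle=0\). Hence
\begin{displaymath}
E[\|\epsilon_{t+1}\|^{2}]=E[\langle\epsilon_t,E[P_t^{2}]\epsilon_t\rangle]+E[\|(P_t-A_t)\eta_t\|^{2}].
\end{displaymath}
By \eqref{eq:EP_t^2Bound}, \(E[P_t^2]\preccurlyeq I\), so the first term is at most \(E[\|\epsilon_t\|^2]\). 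For the second term, \((P_t-A_t)=-\gamma(x_t\otimes x_t-\Sigma)\), and by \eqref{eq:VarXxT} and \eqref{eq:R2Assumption},
\begin{displaymath}
E[\|(P_t-A_t)\eta_t\|^2]\le\gamma^2\langle\eta_t,(E[\|x\|^2x\otimes x]-\Sigma^2)\eta_t\rangle\le\gamma^2R^2\|\eta_t\|_\Sigma^2 .
\end{displaymath}
Since \(\epsilon_0=0\), this gives \(E[\|\epsilon_t\|^2]\le\gamma^2R^2\sum_{s=0}^{t-1}\|\eta_s\|_\Sigma^2\).

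It therefore suffices to show \(\|\eta_s\|_\Sigma^2\le 4\Lambda^2\|(\Sigma+\Lambda)^{-1}\theta^*\|_\Sigma^2\) for every \(s\ge0\).

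\emph{Case \(s\ge m\).} Apply \eqref{eq:Eta_tTheta*Diff}. Because \(0\preccurlyeq A_m\preccurlyeq I\) and \(A_m\) commutes with \(\Sigma\), we have \(\|A_m^{s-m}u\|_\Sigma\le\|u\|_\Sigma\), which gives the bound with constant \(4\).

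\emph{Case \(s\le m\).} Use \eqref{eq:E[w_t]Value} together with \(\theta_\Lambda=\theta^*-\Lambda(\Sigma+\Lambda)^{-1}\theta^*\):
\begin{displaymath}
\eta_s=-\Lambda(\Sigma+\Lambda)^{-1}\theta^*-A_0^s\Sigma(\Sigma+\Lambda)^{-1}\theta^*=-\Lambda\left(I+\Lambda^{-1}A_0^s\Sigma\right)(\Sigma+\Lambda)^{-1}\theta^*.
\end{displaymath}
All operators here share the eigenbasis of \(\Sigma\), so it is enough to bound the operator \(\Lambda^{-1}A_0^s\Sigma\) by \(I\). This is the main obstacle: for \(s<m\) the factor \((1-\gamma\Lambda)^s\) is not small, so the argument used in Lemma~\ref{le:bias} for \(s=m\) no longer applies.

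Instead, we bound \(A_0^s\Sigma(\Sigma+\Lambda)^{-1}\) directly. Since \(0\preccurlyeq A_0\preccurlyeq I\) and \(\Sigma(\Sigma+\Lambda)^{-1}\preccurlyeq I\), we get \(0\preccurlyeq A_0^s\Sigma(\Sigma+\Lambda)^{-1}\preccurlyeq I\). Thus
\begin{displaymath}
\|\eta_s\|_\Sigma\le\|\theta^*-\theta_\Lambda\|_\Sigma+\|A_0^s\theta_\Lambda\|_\Sigma,
\end{displaymath}
and the first term is exactly \(\Lambda\|(\Sigma+\Lambda)^{-1}\theta^*\|_\Sigma\).

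The second term could be as large as \(\|\theta_\Lambda\|_\Sigma\), which is not controlled by the target quantity. The fix is to write \(\eta_s\) in coordinates: on each eigenvector \(e_i\),
\begin{displaymath}
\langle\eta_s,e_i\rangle=-\frac{\Lambda+\mu_i a_i^s}{\mu_i+\Lambda}\,\theta^*_i,\qquad a_i:=1-\gamma(\mu_i+\Lambda)\in[0,1].
\end{displaymath}
The desired bound requires \(\mu_ia_i^s\le\Lambda\). This can fail for small \(s\), since at \(s=0\) we have \(\eta_0=-\theta^*\) exactly. So a uniform-in-\(s\) bound is false, and I would change the plan: keep \(t\) general but compare with the claimed right-hand side coordinatewise. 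We need
\begin{displaymath}
\gamma^2R^2\sum_{s<t}\mu_i\frac{(\Lambda+\mu_ia_i^s)^2}{(\mu_i+\Lambda)^2}\le4\gamma^2R^2\Lambda^2t\,\frac{\mu_i}{(\mu_i+\Lambda)^2}.
\end{displaymath}
Using \((\Lambda+\mu_ia_i^s)^2\le2\Lambda^2+2\mu_i^2a_i^{2s}\), the \(2\Lambda^2\) part contributes at most \(2\Lambda^2t\). For the remaining part, sum the geometric series:
\begin{displaymath}
\sum_s\mu_i^2a_i^{2s}\le\frac{\mu_i^2}{1-a_i}\le\frac{\mu_i}{\gamma}.
\end{displaymath}
It then remains to show \(\mu_i/\gamma\le\Lambda^2t\). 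Since \(t\ge m\) and \(\Lambda^2 m\ge \Lambda\log m/\gamma\), this reduces to \(\mu_i\le\Lambda\log m\), and that inequality is where I expect the real difficulty. If it fails for large \(\mu_i\), I would instead exploit the factor \((\mu_i+\Lambda)^{-2}\), in the form \(\mu_i^2/(\mu_i+\Lambda)^2\le1\), with the bound \(\gamma R^2\le1\), absorbing constants into the factor \(4\).
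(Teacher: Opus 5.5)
\textbf{There is a genuine gap.} It comes from the step where you bound $E[\langle\epsilon_t,E[P_t^2]\epsilon_t\rangle]\le E[\|\epsilon_t\|^2]$. That discards exactly the ingredient the lemma needs. After this step you must prove the deterministic inequality $\sum_{s<t}\|E[w_s]-\theta^*\|_\Sigma^2\le 4\Lambda^2 t\,\|(\Sigma+\Lambda)^{-1}\theta^*\|_\Sigma^2$ for $t\ge m$, and this inequality is false for $t$ of order $m$.

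Here is a counterexample. Take $\mathcal{H}=\mathbb{R}$, $\Sigma=R^2$, $\gamma=(2R^2)^{-1}$, $\Lambda=\log(m)/(\gamma m)$ and $\theta^*=1$, so that \eqref{eq:GammaUpBound} holds. Let $m$ be large enough that $\sqrt m>4\log m$. At $t=m$ the $s=0$ term alone equals $\|\theta^*\|_\Sigma^2=R^2$. The right-hand side is $4\Lambda^2 m R^2/(R^2+\Lambda)^2<16R^2\log^2(m)/m<R^2$.

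More generally, your own geometric-series computation shows that in a coordinate with $\mu_i\gg\Lambda$ the left side is about $\mu_i/(2\gamma)$. This exceeds $4\Lambda^2 t$ by a factor of order $\gamma\mu_i m^2/(t\log^2 m)$. For $m\le t\le 3m$, which is the range used later, no constant can absorb this. So the obstruction $\mu_i\le\Lambda\log m$ that you found is real, and the fallback via $\mu_i^2/(\mu_i+\Lambda)^2\le1$ and $\gamma R^2\le1$ cannot close it.

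\textbf{The fix: keep the contraction from the regularized phase.} By \eqref{eq:EP_t^2Bound}, $E[P_t^2]\preccurlyeq(1-\gamma\lambda_t)^2 I$. Hence
$E\|\epsilon_{t+1}\|^2\le(1-\gamma\lambda_t)^2E\|\epsilon_t\|^2+\gamma^2R^2\|E[w_t]-\theta^*\|_\Sigma^2$.

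Bound the injected term with the $\Sigma$-norm version of \eqref{eq:eta_tTheta*DiffSq}. It holds because $A_0$ and $A_m$ commute with $\Sigma$:
$\|E[w_s]-\theta^*\|_\Sigma^2\le 2\|\theta_\Lambda-\theta^*\|_\Sigma^2+2(1-\gamma\Lambda)^{2\min(s,m)}\|\theta_\Lambda\|_\Sigma^2$.

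An injection at time $s<m$ is then damped by a further $(1-\gamma\Lambda)^{2(m-1-s)}$ before time $m$. So the large $\|\theta_\Lambda\|_\Sigma^2$ part always carries a total factor of at most $(1-\gamma\Lambda)^{2(m-1)}$. The paper packages this as an induction:
\begin{itemize}
\item claim: $(1-\gamma\lambda_t)^2E\|\epsilon_t\|^2\le t\,\eta_t$;
\item definition: $\eta_t=2\gamma^2R^2\bigl(\|\theta_\Lambda-\theta^*\|_\Sigma^2+(1-\gamma\Lambda)^{2\min(m-1,t)}\|\theta_\Lambda\|_\Sigma^2\bigr)$;
\item key step: $(1-\gamma\lambda_{t+1})^2\eta_t\le\eta_{t+1}$.
\end{itemize}

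To finish, note that $(1-\gamma\Lambda)^m\le 1/m$. Also, \eqref{eq:GammaUpBound}, $\Lambda\ge\log(m)/(\gamma m)$ and $\log m\ge1$ give $\|\theta_\Lambda\|_\Sigma\le R^2\|(\Sigma+\Lambda)^{-1}\theta^*\|_\Sigma\le(1-\gamma\Lambda)\Lambda m\,\|(\Sigma+\Lambda)^{-1}\theta^*\|_\Sigma$. Together these yield $(1-\gamma\Lambda)^{2(m-1)}\|\theta_\Lambda\|_\Sigma^2\le\Lambda^2\|(\Sigma+\Lambda)^{-1}\theta^*\|_\Sigma^2$: the $m^{-2}$ from the damping cancels the $m^2$ in $\|\theta_\Lambda\|_\Sigma^2$. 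Hence $\eta_m\le 4\gamma^2R^2\Lambda^2\|(\Sigma+\Lambda)^{-1}\theta^*\|_\Sigma^2$. Since $\lambda_t=0$ and $\eta_t=\eta_m$ for $t\ge m$, this gives the lemma.
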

\begin{proof}
Because \(E[\epsilon_{t}]=0\) and \(\epsilon_{t}\) is independent of \(P_{t}\)  for \(t\geq0\),  \begin{eqnarray*}E[||\epsilon_{t+1}||^{2}]&=& E[||P_{t}\epsilon_{t}||^{2}+E[||(P_{t}-A_{t})(E[w_{t}]-\theta^{*})||^{2}]\\&
=&E[\langle P_{t}^{2}\epsilon_{t},\epsilon_{t}\rangle]+\gamma^{2}E[||(x\otimes x-\Sigma)(E[w_{t}]-\theta^{*})||^{2}]\\&\leq&(1-\gamma \lambda_{t})^{2}E[||\epsilon _{t}||^{2}]+\gamma^{2}R^{2}||E[w_{t}]-\theta^{*}||^{2}_{\Sigma},
\end{eqnarray*}where the first equation follows from    \eqref{eq:recEps}  and the equality \(E[||U+V||^{2}]=E[||U||^{2}]+E[||V||^{2}]\) for square-integrable random vectors \(U\) and \(V\) with \(E[\langle U,V\rangle]=0\), the second from the definition of \(P_{t}\) and \(A_{t}\), and the last from \eqref{eq:EP_t^2Bound}, \eqref{eq:R2Assumption}, and \eqref{eq:VarXxT}.  Using \eqref{eq:eta_tTheta*DiffSq}, it follows that, for \(t\geq0\),  \begin{equation}
E[||\epsilon_{t+1}||^{2}]\leq(1-\gamma \lambda_{t})^{2}E[||\epsilon _{t}||^{2}]+\eta_{t},
\end{equation}where\begin{equation*}
\eta_{t}=2\gamma ^{2}R^{2}(||\theta_{\Lambda}-\theta^{*}||_{\Sigma}^{2}+(1-\gamma\Lambda)^{2\min (m-1,t)}||\theta_{\Lambda}||_{\Sigma}^{2}).
\end{equation*} 
Because \((1-\gamma\lambda_{t+1})^{2}\eta_{t}\leq\eta_{t+1}\) for \(t\geq0\), it follows by induction that \((1-\gamma\lambda_{t})^{2}E[||\epsilon _{t}||^{2}]\leq \eta_{t}t\) for \(t\geq0\). Moreover, \begin{equation*}
||\theta_{\Lambda}||_{\Sigma}=||\Sigma(\Sigma+\Lambda)^{-1}\theta^{*}||_{\Sigma}\leq R^{2}||(\Sigma+\Lambda)^{-1}\theta^{*}||_{\Sigma}\leq \Lambda(1-\gamma\Lambda)m||(\Sigma+\Lambda)^{-1}\theta^{*}||_{\Sigma},
\end{equation*}
where the second equation follows from \eqref{eq:SigmaLeR^2}, and the last from    \eqref{eq:GammaUpBound} and the inequality \(\Lambda\geq \log(m)/(\gamma m)\).  Furthermore, because \(1+z\leq e^{z}\) for \(z\in\mathbb{R}\), we have \((1-\gamma\Lambda)^{m}\leq m^{-1}\). As \(\theta^{*}-\theta_{\Lambda}=\Lambda(\Sigma+\Lambda)^{-1}\theta^{*}\), we conclude that\begin{equation}\label{eq:EtamBound}
\eta_{m}\leq4\gamma ^{2}R^{2}\Lambda^{2}||(\Sigma+\Lambda)^{-1}\theta^{*}||^{2}_{\Sigma}.
\end{equation}
This completes the proof. 
\end{proof}
We now prove Lemma \ref{le:TotalBiasAtT}.  
 By  Lemma \ref{le:sumDistWtTheta*}, \begin{eqnarray}\label{eq:TotalBiasAtTEndProof}\nonumber\gamma mE\left[||w_{N}-\theta^{*}||_{\Sigma}^{2}\right]&\leq& E\left[||w_{2m}-\theta^{*}||_{I-A_{m}^{2m}}^{2}\right]\\&=\nonumber&||E[w_{2m}]-\theta^{*}||_{I-A_{m}^{2m}}^{2}+E\left[||\epsilon_{2m}||^{2}_{I-A_{m}^{2m}}\right]\\
&\le&8\gamma \Lambda^{2}m||A_{m}^{m}(\Sigma+\Lambda)^{-1}\theta^{*}||_{\Sigma}^{2}+E\left[||\epsilon_{2m}||^{2}_{I-A_{m}^{2m}}\right].\end{eqnarray} The second equation is a standard bias--variance decomposition, while the third follows from \eqref{eq:Eta_tTheta*Diff} together with the inequality \(I-A_{m}^{2m}\preccurlyeq2\gamma m\Sigma\), which is a special case of  \eqref{eq:PolyOpInequalityBis}.
We have \(||A_{m}^{m}(\Sigma+\Lambda)^{-1}\theta^{*}||_{\Sigma}\leq||(\Sigma+\Lambda)^{-1}\theta^{*}||_{\Sigma}\) since    \(0\preccurlyeq A_{m}\preccurlyeq I\). Moreover,  Lemma~\ref{le:SecondMomentEpsilon} together with \eqref{eq:GammaUpBound} yields  \begin{displaymath}
E[||\epsilon _{2m}||^{2}]\le8\gamma\Lambda^{2}m||(\Sigma+\Lambda)^{-1}\theta^{*}||^{2}_{\Sigma}.
\end{displaymath} Combining the preceding inequalities implies \eqref{eq:TotalBiasAtT}. 
\section{Proof of Lemma \ref{le:TotalBiasAtTBis}}
We first prove the following bound on the covariance of \(\epsilon_{t}\).
\begin{lemma}\label{le:EpsCovariance}For \(t\geq m\), 
\begin{equation*}
E[\epsilon_{t}\otimes\epsilon_{t}]\preccurlyeq  8 
\gamma R^{2}\Lambda^{2}||(\Sigma+\Lambda)^{-1}\theta^{*}||^{2}\Sigma\left(\Sigma+\frac{1}{2\gamma t}\right)^{-1}.
\end{equation*}
\end{lemma}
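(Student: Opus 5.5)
The plan is to track the covariance operator $C_t:=E[\epsilon_t\otimes\epsilon_t]$ through the recursion \eqref{eq:recEps}. I will prove two separate operator bounds on $C_t$ for $t\geq m$, one of the form $C_t\preccurlyeq aI$ and one of the form $C_t\preccurlyeq b_t\Sigma$. These will then be merged with Lemma~\ref{le:CombiningOperatorInequalities}.

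Set $h_t:=E[w_t]-\theta^*$ and $K:=\|(\Sigma+\Lambda)^{-1}\theta^*\|^2$. Since $E[\epsilon_t]=0$ and $\epsilon_t$ is independent of $P_t$, \eqref{eq:recEps} gives
\[
C_{t+1}=E[P_tC_tP_t]+\gamma^2E\big[(x_t\otimes x_t-\Sigma)(h_t\otimes h_t)(x_t\otimes x_t-\Sigma)\big].
\]
The second term is at most $\gamma^2E[\langle x,h_t\rangle^2x\otimes x]$. Using $\langle x,h_t\rangle^2\le\|x\|^2\|h_t\|^2$ and \eqref{eq:R2Assumption}, it is therefore $\preccurlyeq\gamma^2R^2\|h_t\|^2\Sigma$.

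Next I bound $\|h_t\|^2$ through \eqref{eq:eta_tTheta*DiffSq}, now in the plain norm rather than the $\Sigma$-norm used in Lemma~\ref{le:SecondMomentEpsilon}. Two facts are needed. First, $\theta^*-\theta_\Lambda=\Lambda(\Sigma+\Lambda)^{-1}\theta^*$. Second, $\|\theta_\Lambda\|\le R^2\|(\Sigma+\Lambda)^{-1}\theta^*\|$, while $(1-\gamma\Lambda)^m\le 1/m$ and $R^2/m\le\Lambda$, the latter because $\gamma R^2\le1$ and $\Lambda\ge\log m/(\gamma m)$. Together these give $\|h_t\|^2\le 4\Lambda^2K$ for $t\ge m-1$. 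For earlier $t$, $\|h_t\|^2$ contains the additional term $2(1-\gamma\Lambda)^{2t}\|\theta_\Lambda\|^2$.

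The first bound is $C_t\preccurlyeq\gamma R^2H\,I$ for $t\ge m$, with $H:=4\Lambda^2K$. I prove it by induction. If $C_t\preccurlyeq\xi I$, then \eqref{eq:EP_t^2Bound} gives $E[P_tC_tP_t]\preccurlyeq\xi E[P_t^2]\preccurlyeq\xi(I-\gamma\Sigma)$ when $t\ge m$. The update then yields
\[
C_{t+1}\preccurlyeq \xi I-\gamma(\xi-\gamma R^2H)\Sigma\preccurlyeq \xi I
\]
as soon as $\xi\ge\gamma R^2H$.

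The base case $C_m\preccurlyeq\gamma R^2H\,I$ is the step I expect to be the main obstacle. During the regularized phase, $E[P_t^2]\preccurlyeq(1-\gamma\Lambda)^2I$ by \eqref{eq:EP_t^2Bound}, so the contributions $\gamma^2R^2\|h_s\|^2$ decay geometrically. The term $2\gamma^2R^2\Lambda^2K$ then sums to roughly $\gamma R^2\Lambda K$, which is fine. The transient term $2(1-\gamma\Lambda)^{2s}\|\theta_\Lambda\|^2$ does not decay relative to the contraction: it sums to about $m(1-\gamma\Lambda)^{2m}\|\theta_\Lambda\|^2$. I would try to absorb this using the same monotonicity trick as in Lemma~\ref{le:SecondMomentEpsilon} (the inequality $(1-\gamma\lambda_{t+1})^2\eta_t\le\eta_{t+1}$), combined with $(1-\gamma\Lambda)^m\le1/m$ and $R^2\le\Lambda m$. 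Constants may need loosening here.

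The second bound is $C_t\preccurlyeq b_t\Sigma$, obtained from \eqref{eq:expectationP_tNP_t}. For $t\ge m$ we have $\lambda_t=0$, so $C_t\preccurlyeq\xi I$ and $C_t\preccurlyeq\xi'\Sigma$ imply $E[P_tC_tP_t]\preccurlyeq(\xi'+\gamma\xi)\Sigma$. Hence
\[
b_{t+1}=b_t+\gamma\cdot\gamma R^2H+\gamma^2R^2H .
\]
For $t<m$ the same recursion holds with an extra factor $(1-\gamma\lambda_t)$, and $\xi$ is supplied by the $I$-bound of that phase. By induction $b_t\le 2\gamma^2R^2Ht$.

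Finally, apply Lemma~\ref{le:CombiningOperatorInequalities} with $B=C_t$, $C=aI$ with $a=\gamma R^2H$, and $D=b\Sigma$ with $b=2\gamma^2R^2Ht$. These commute, and the lemma gives
\[
C_t\preccurlyeq 2ab\,\Sigma(aI+b\Sigma)^{-1}=2a\,\Sigma\Big(\Sigma+\tfrac{a}{b}\Big)^{-1}=8\gamma R^2\Lambda^2K\,\Sigma\Big(\Sigma+\tfrac{1}{2\gamma t}\Big)^{-1},
\]
which is the claimed inequality.
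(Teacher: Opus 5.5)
Your architecture is the paper's. You use a pair of bounds $C_t\preccurlyeq aI$ and $C_t\preccurlyeq b_t\Sigma$, the absorption step $\xi(I-\gamma\Sigma)+\gamma^2R^2H\Sigma\preccurlyeq\xi I$ for $t\ge m$, the recursion for $b_t$ from \eqref{eq:expectationP_tNP_t}, and the final merge via Lemma~\ref{le:CombiningOperatorInequalities}; your $a=\gamma R^2H$ is exactly the paper's bound on $\xi_m$.

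The gap is the regularized phase $t<m$, i.e.\ the base cases $C_m\preccurlyeq\gamma R^2H\,I$ and $b_m\le 2\gamma^2R^2Hm$. The route you sketch there loses far more than constants, because with the crude contraction $E[P_t^2]\preccurlyeq(1-\gamma\Lambda)^2I$ the injected terms accumulate instead of being absorbed.
\begin{itemize}
\item The steady part sums to about $\gamma R^2\Lambda K\,\Sigma$. That is a coefficient of $\Sigma$, not of $I$. Turning it into an $I$-bound via $\Sigma\preccurlyeq R^2I$ gives $\gamma R^4\Lambda K$, which exceeds $\gamma R^2H=4\gamma R^2\Lambda^2K$ by the factor $R^2/(4\Lambda)$. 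This is of order $m/\log m$ when $\Lambda=\log(m)/(\gamma m)$ and $\gamma R^2$ is a constant.
\item The transient part, estimated with $(1-\gamma\Lambda)^m\le 1/m$ and $\|\theta_\Lambda\|^2\le R^4K\le\Lambda^2m^2K$, is off by a factor of order $\gamma R^2 m$.
\item The monotonicity trick of Lemma~\ref{le:SecondMomentEpsilon} cannot repair this. It yields growth linear in $t$, which the $\Sigma$-bound tolerates but the $I$-bound does not.
\end{itemize}
No $t$-independent envelope can hold before time $m$ either. Take Gaussian $x$ and $\theta^*=e_1$: then $C_1=\gamma^2E[(x\otimes x-\Sigma)(\theta^*\otimes\theta^*)(x\otimes x-\Sigma)]\succcurlyeq\gamma^2\mu_1\Sigma$, while $\Lambda^2K\to 0$ as $\Lambda\to0$. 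So your claim $b_t\le 2\gamma^2R^2Ht$ for all $t$ fails at small $t$.

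The missing idea is a time-dependent envelope whose transient decays at exactly the contraction rate, combined with the same absorption you already use for $t\ge m$.
\begin{itemize}
\item Set $\xi_t=2\gamma R^2\big(\|\theta_\Lambda-\theta^*\|^2+(1-\gamma\Lambda)^{2\min(m-1,t)}\|\theta_\Lambda\|^2\big)$. Then \eqref{eq:eta_tTheta*DiffSq} gives $C_{t+1}\preccurlyeq E[P_tC_tP_t]+\gamma\xi_t\Sigma$ for every $t\ge0$.
\item Prove jointly, for all $t\ge0$, that $C_t\preccurlyeq\xi_t(1-\gamma\lambda_t)^{-1}I$ and $C_t\preccurlyeq 2\gamma\xi_t t(1-\gamma\lambda_t)^{-2}\Sigma$.
\item For the $I$-bound, use the first inequality of \eqref{eq:EP_t^2Bound}, $E[P_t^2]\preccurlyeq(1-\gamma\lambda_t)A_t$, not $(1-\gamma\lambda_t)^2I$. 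Then $C_{t+1}\preccurlyeq\xi_tA_t+\gamma\xi_t\Sigma=(1-\gamma\lambda_t)\xi_t I$, so the injected term is cancelled exactly at every step. The inequality $(1-\gamma\lambda_t)(1-\gamma\lambda_{t+1})\xi_t\le\xi_{t+1}$ closes the induction.
\item The $\Sigma$-bound follows from \eqref{eq:expectationP_tNP_t} with these $\xi,\xi'$ and $(1-\gamma\lambda_{t+1})^2\xi_t\le\xi_{t+1}$.
\item For $t\ge m$ both envelopes are frozen at $\xi_m$, and $\xi_m\le 4\gamma R^2\Lambda^2K=\gamma R^2H$. To handle the exponent $m-1$ you need $R^2\le(1-\gamma\Lambda)\Lambda m$, which uses the full \eqref{eq:GammaUpBound} rather than just $\gamma R^2\le 1$.
\end{itemize}
Your final application of Lemma~\ref{le:CombiningOperatorInequalities} then gives the statement.
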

\begin{proof}
  For \(t\geq0\), define  \(C_{t}=E[\epsilon_{t}\otimes\epsilon_{t}]\), and let \(U_{t}=P_{t}\epsilon_{t}\) and \(V_{t}=(P_{t}-A_{t})(E[w_{t}]-\theta^{*})\) be the components of the righthand side of  \eqref{eq:recEps}. Because \(E[\epsilon_{t}]=0\) and    \(\epsilon_{t}\) and \(P_{t}\) are independent for \(t\geq0\), we have \(E[U_{t} \otimes V_{t}] = 0\). Consequently,  
\(E[(U_{t} + V_{t}) \otimes (U_{t} + V_{t})] = E[U_{t} \otimes U_{t} +V_{t} \otimes V_{t}]\).  
Thus, for  \(t\geq 0\), 
\begin{eqnarray*}\nonumber C_{t+1}&=&E[(P_{t}\epsilon_{t})\otimes(P_{t}\epsilon_{t})]+E[V_{t} \otimes V_{t}]\\
&\preccurlyeq&\nonumber E[P_{t}(\epsilon_{t}\otimes\epsilon_{t})P_{t}]+||E[w_{t}]-\theta^{*}||^{2}E[(P_{t}-A_{t})^{2}]\\&=&\nonumber E[P_{t}C_{t}P_{t}]+\gamma ^{2}||E[w_{t}]-\theta^{*}||^{2}E[(x\otimes x-\Sigma)^{2}]\\&\preccurlyeq&E[P_{t}C_{t}P_{t}]+\gamma ^{2}||E[w_{t}]-\theta^{*}||^{2}R^{2}\Sigma,\end{eqnarray*}where the second equation follows from the relations \((Mu)\otimes(Mu)=M(u\otimes u)M\) for \(u\in\mathcal{H}\) and self-adjoint operator \(M\), and  \((E[w_{t}]-\theta^{*})\otimes(E[w_{t}]-\theta^{*})\preccurlyeq||E[w_{t}]-\theta^{*}||^{2}I\),  the third  from the independence of  \(\epsilon_{t}\) and \(P_{t}\), and the last from \eqref{eq:R2Assumption} and \eqref{eq:VarXxT}. Using \eqref{eq:eta_tTheta*DiffSq},  it follows that \begin{equation}\label{eq:CovEpsilonRec}
C_{t+1}\preccurlyeq E[P_{t}C_{t}P_{t}]+\gamma\xi_{t}\Sigma,
\end{equation}
where\begin{equation*}
\xi_{t}=2\gamma R^{2}(||\theta_{\Lambda}-\theta^{*}||^{2}+(1-\gamma\Lambda)^{2\min (m-1,t)}||\theta_{\Lambda}||^{2}).
\end{equation*}We now prove by induction that \(C_{t}\preccurlyeq \xi_{t}(1-\gamma \lambda _{t})^{-1}I\) and \(C_{t}\preccurlyeq 2\gamma\xi_{t} t(1-\gamma \lambda _{t})^{-2}\Sigma \) for \(t\geq0\).  The base case \(t=0\) clearly holds. Assume the induction hypothesis holds for \(t\). Then\begin{eqnarray*}C_{t+1}&\preccurlyeq&\frac{\xi_{t}}{1-\gamma \lambda _{t}}E[P_{t}^{2}]+\gamma\xi_{t}\Sigma\\
&\preccurlyeq& \xi_{t}A_{t}+\gamma\xi_{t}\Sigma\\&=&(1-\gamma \lambda _{t})\xi_{t}I\preccurlyeq\frac{\xi_{t+1}}{1-\gamma \lambda _{t+1}}I,
\end{eqnarray*}
where the first equation follows from \eqref{eq:CovEpsilonRec} and the induction hypothesis, the second from \eqref{eq:EP_t^2Bound}, and the last from the inequality \((1-\gamma\lambda_{t})(1-\gamma\lambda_{t+1})\xi_{t}\leq\xi_{t+1}\) for \(t\geq0\).
Similarly, it follows from the induction hypothesis, \eqref{eq:CovEpsilonRec}, and \eqref{eq:expectationP_tNP_t} applied to \(C_t\) with \(\xi=\xi_t(1-\gamma\lambda_t)^{-1}\) and \(\xi'=2\gamma\xi_t t(1-\gamma\lambda_t)^{-2}\), that \begin{equation*}
C_{t+1}\preccurlyeq2\gamma \xi_{t}(t+1)\Sigma\preccurlyeq \frac{2\gamma\xi_{t+1} (t+1)}{(1-\gamma \lambda _{t+1})^{2}}\Sigma ,
\end{equation*}  where the second equation follows from the inequality \((1-\gamma\lambda_{t+1})^{2}\xi_{t}\leq\xi_{t+1}\) for \(t\geq0\). Thus, the induction hypothesis holds for \(t+1\). 
By Lemma \ref{le:CombiningOperatorInequalities}, it follows that  \(C_{t}\preccurlyeq 2\xi_{m}\Sigma(\Sigma+(2\gamma t)^{-1})^{-1}\) for \(t\geq m.\) Moreover, a calculation similar to that leading to \eqref{eq:EtamBound}  shows that \(\xi_{m}\leq4\gamma R^{2}\Lambda^{2}||(\Sigma+\Lambda)^{-1}\theta^{*}||^{2}\), thereby completing the proof.
\end{proof}
Lemma \ref{le:EpsilonSecondMomentBis} provides a uniform bound on \(E\left[||\epsilon_{k}||^{2}_{I-A_{m}^{2m}}\right]\) over a range of values of \(k\).
\begin{lemma}\label{le:EpsilonSecondMomentBis}For  \(m\leq k\leq 3m\), we have\begin{displaymath}
E\left[||\epsilon_{k}||^{2}_{I-A_{m}^{2m}}\right]\le16\gamma^{2} R^{2}\Lambda^{2}\tilde\Lambda||(\Sigma+\Lambda)^{-1}\theta^{*}||^{2}m.
\end{displaymath} 
\end{lemma}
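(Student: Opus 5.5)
The plan is to write the quantity as a trace against the covariance of \(\epsilon_{k}\) and then combine Lemma~\ref{le:EpsCovariance} with Lemma~\ref{le:PolyOpInequality}, using the fact that all operators involved are functions of \(\Sigma\) and therefore commute. Set \(C_{k}=E[\epsilon_{k}\otimes\epsilon_{k}]\). Then \(E[||\epsilon_{k}||^{2}_{I-A_{m}^{2m}}]=\tr[(I-A_{m}^{2m})C_{k}]\).

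\textbf{Step 1 (covariance).} Since \(k\geq m\), Lemma~\ref{le:EpsCovariance} gives
\(C_{k}\preccurlyeq 8\gamma R^{2}\Lambda^{2}||(\Sigma+\Lambda)^{-1}\theta^{*}||^{2}\,\Sigma(\Sigma+(2\gamma k)^{-1})^{-1}\).
Because \(I-A_{m}^{2m}\succcurlyeq0\) (recall \(0\preccurlyeq A_{m}\preccurlyeq I\)), the map \(C\mapsto\tr[(I-A_{m}^{2m})C]\) is monotone in \(C\). Hence it suffices to bound \(\tr[(I-A_{m}^{2m})\Sigma(\Sigma+(2\gamma k)^{-1})^{-1}]\).

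\textbf{Step 2 (the factor \(I-A_{m}^{2m}\)).} Recall \(A_{m}=I-\gamma\Sigma\). Apply \eqref{eq:PolyOpInequality} with \(M=\gamma\Sigma\) (so \(0\preccurlyeq M\preccurlyeq I\) by \eqref{eq:SigmaLeR^2} and \eqref{eq:GammaUpBound}) and exponent \(2m\). This gives
\[
I-A_{m}^{2m}\preccurlyeq 2\gamma\Sigma\Big(\gamma\Sigma+\tfrac{1}{2m}\Big)^{-1}=2\Sigma\Big(\Sigma+\tfrac{1}{2\gamma m}\Big)^{-1}.
\]
All operators here share the eigenbasis \((e_i)\), so products of ordered pairs remain ordered. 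We obtain
\[
E\big[||\epsilon_{k}||^{2}_{I-A_{m}^{2m}}\big]\le 16\gamma R^{2}\Lambda^{2}||(\Sigma+\Lambda)^{-1}\theta^{*}||^{2}\,\tr\Big[\Sigma^{2}\Big(\Sigma+\tfrac{1}{2\gamma k}\Big)^{-1}\Big(\Sigma+\tfrac{1}{2\gamma m}\Big)^{-1}\Big].
\]

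\textbf{Step 3 (comparison with \(\tilde\Lambda\)).} This is the only delicate point, because we need the constant to stay exactly \(16\). Write \(b=(4\gamma m)^{-1}\). Since \(k\leq3m\), we have \((2\gamma k)^{-1}\geq 2b/3\), and by definition \((2\gamma m)^{-1}=2b\). It is therefore enough to check the scalar inequality \((\mu+b)^{2}\le(\mu+2b/3)(\mu+2b)\) for \(\mu\geq0\). Expanding, this reads
\[
\mu^{2}+2b\mu+b^{2}\le\mu^{2}+\tfrac{8}{3}b\mu+\tfrac{4}{3}b^{2},
\]
which clearly holds.

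Diagonalizing, we get
\(\Sigma^{2}(\Sigma+(2\gamma k)^{-1})^{-1}(\Sigma+(2\gamma m)^{-1})^{-1}\preccurlyeq\Sigma^{2}(\Sigma+b)^{-2}\).
Its trace is \(\gamma m\tilde\Lambda\) by definition of \(\tilde\Lambda\). Substituting into Step 2 yields the bound \(16\gamma^{2}R^{2}\Lambda^{2}\tilde\Lambda||(\Sigma+\Lambda)^{-1}\theta^{*}||^{2}m\).

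Trace-class issues are harmless: \(\Sigma\) is trace-class and all the other factors are bounded, so every trace above is finite.
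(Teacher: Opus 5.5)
Your proposal is correct and follows the paper's proof essentially step for step: you write the quantity as a trace against \(E[\epsilon_k\otimes\epsilon_k]\), apply Lemma~\ref{le:EpsCovariance}, use \eqref{eq:PolyOpInequality} with exponent \(2m\), and finish with the scalar inequality \((\mu+b)^2\le(\mu+2b/3)(\mu+2b)\), which is the paper's \((z+1/2)(z+1/6)\ge(z+1/4)^2\) after rescaling. The only cosmetic difference is timing: the paper replaces \((2\gamma k)^{-1}\) by \((6\gamma m)^{-1}\) right after invoking Lemma~\ref{le:EpsCovariance}, whereas you postpone that monotonicity step to the end.
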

\begin{proof}
Using the identity  \(\tr[u\otimes v]=\langle u,v\rangle\) for \(u,v\in\mathcal{H}\) and Lemma \ref{le:EpsCovariance}, we have\begin{eqnarray*}\nonumber
E\left[||\epsilon_{k}||^{2}_{I-A_{m}^{2m}}\right]&=&E\left[\tr\left[(I-A_{m}^{2m})\epsilon_{k}\otimes\epsilon_{k}\right]\right]
\\\nonumber&\le& 8\gamma R^{2}\Lambda^{2}||(\Sigma+\Lambda)^{-1}\theta^{*}||^{2} \tr\left[(I-A_{m}^{2m})\Sigma\left(\Sigma+\frac{1}{6\gamma m}\right)^{-1}\right]\\
&\le& 16\gamma R^{2}\Lambda^{2}||(\Sigma+\Lambda)^{-1}\theta^{*}||^{2} \tr\left[\Sigma^{2} \left( \Sigma+\frac{1}{2\gamma m}\right)^{-1}\left(\Sigma+\frac{1}{6\gamma m}\right)^{-1}\right],
\end{eqnarray*}
where the last inequality holds because, by \eqref{eq:PolyOpInequality}, \begin{equation*}
 I-A_{m}^{2m}\preccurlyeq2 \Sigma \left( \Sigma+\frac{1}{2\gamma m}\right)^{-1}.
\end{equation*}
As \((z+1/2)(z+1/6)\geq(z+1/4)^{2}\) for \(z\geq0\), we conclude by diagonalization that \begin{displaymath}
E\left[||\epsilon_{k}||^{2}_{I-A_{m}^{2m}}\right]\le16\gamma R^{2}\Lambda^{2}||(\Sigma+\Lambda)^{-1}\theta^{*}||^{2} \tr\left[\Sigma^{2}\left(\Sigma+\frac{1}{4\gamma m}\right)^{-2}\right].
\end{displaymath}
\end{proof}
Combining \eqref{eq:TotalBiasAtTEndProof} with the inequality \(||A_{m}^{m}(\Sigma+\Lambda)^{-1}\theta^{*}||_{\Sigma}\le\Lambda^{-1}||A_{m}^{m}\theta^{*}||_{\Sigma}\) and  Lemma \ref{le:EpsilonSecondMomentBis} yields \eqref{eq:TotalBiasAtTBis}. 
\section{Proof of Theorem \ref{th:noiseless}}\label{se:noiseless}

Lemmas \ref{le:SecondMomentEpsilon} and  \ref{le:EpsilonSecondMomentBis}  provide two alternative bound on \(E\left[||\epsilon_{2m}||^{2}_{I-A_{m}^{2m}}\right]\). We   combine these bounds using Lemma \ref{le:CombiningOperatorInequalities}.   Using \eqref{eq:RecW_t}, it can be shown by induction that, for \(t\geq0\), there is a random operator   \(M_{t}\)  on \(\mathcal{H}\) that does not depend on \(\theta^{*}\) such that the operator \(E[M_{t}\otimes M_{t}]\) is  well-defined and  \(w_{t}= M_{t}\theta^{*}\). Fix now \(t\in [2m,3m-1]\). As \(\epsilon_{t}=(M_{t}-E[M_{t}])\theta^{*}\), we have \(E\left[||\epsilon_{t}||^{2}_{I-A_{m}^{2m}}\right]=\langle\theta^*,B\theta^*\rangle\),  where \(B\)  is a positive semidefinite self-adjoint operator that does not depend on \(\theta^{*}\). It follows from Lemma \ref{le:SecondMomentEpsilon} that \(\langle\theta^*,B\theta^*\rangle\leq\langle\theta^*,C\theta^*\rangle\),  where \(C=16\gamma ^{2}R^{2}\Lambda^{2} m(\Sigma+\Lambda)^{-2}\Sigma\).   Since this inequality holds for any \(\theta^{*}\in\mathcal{H}\), we conclude that \(B\preccurlyeq C\). Similarly,   Lemma~\ref{le:EpsilonSecondMomentBis} implies that \(B\preccurlyeq D\), where \(D=16\gamma ^{2}R^{2}\Lambda^{2}\tilde\Lambda m(\Sigma+\Lambda)^{-2}\). By Lemma \ref{le:CombiningOperatorInequalities}, it follows that\begin{eqnarray}\label{eq:EpsilonSecondMomentComb}\nonumber E\left[||\epsilon_{t}||^{2}_{I-A_{m}^{2m}}\right]&\leq& 2\langle\theta^*, CD(C+D)^{-1}\theta^*\rangle\\&=&32\gamma ^{2}R^{2}\Lambda^{2}\tilde\Lambda m||(\Sigma+\Lambda)^{-1}(\Sigma+\tilde\Lambda)^{-1/2}\theta^*||_{\Sigma}^{2}.\end{eqnarray}
Together with   \eqref{eq:TotalBiasAtTEndProof} and the inequality \(||A_{m}^{m}(\Sigma+\Lambda)^{-1}\theta^{*}||_{\Sigma}\le\Lambda^{-1}||A_{m}^{m}\theta^{*}||_{\Sigma}\), this concludes the proof. 
\section{Proof of Lemma \ref{le:BiasVarDecomp}}
The proof of Lemma~\ref{le:BiasVarDecomp} builds on ideas developed in \cite{kakade2023benign}, who provide tight bounds on SGD in a context without regularization. In particular, Lemma \ref{le:Delta_tDelta_tTBound} provides  a bound on \(E[\delta_{t}\otimes \delta_{t}]\)    similar in spirit to   \cite[Lemma B.5]{kakade2023benign}, and some calculations in the proof of Lemma \ref{le:sumCov}
 resemble those  in \cite[Lemma B.6]{kakade2023benign}. However, the proof of Lemma \ref{le:Delta_tDelta_tTBound}  is based on Lemmas \ref{le:expectationP_tNP_t}  and \ref{le:CombiningOperatorInequalities}, whereas the proof of their Lemma B.5 relies on operators acting on symmetric matrices.  
    
\begin{lemma}\label{le:Delta_tDelta_tTBound}
For \(0\leq t\leq3m\), we have \begin{displaymath}
E[\delta_{t}\otimes \delta_{t}]\preccurlyeq2\gamma\sigma^{2}\Sigma\left(\Sigma+\frac{1}{6\gamma m}\right)^{-1}.
\end{displaymath}
\end{lemma}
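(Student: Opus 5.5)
We mirror the proof of Lemma~\ref{le:EpsCovariance}, with the noise term \(v_t\) in the role of the bias-driven term \(V_t\). Set \(D_t:=E[\delta_t\otimes\delta_t]\). Since \(\delta_0=0\) and \(\delta_t\) is independent of \((x_t,y_t)\), the cross terms in \eqref{eq:RecDelta} vanish. Indeed, \(E[P_t\delta_t\otimes v_t]=E[E[P_t\delta_t\mid\delta_t]\otimes\ldots]\); conditioning on \(\delta_t\) gives \(E[(P_t\delta_t)\otimes v_t\mid\delta_t]\), which is linear in \(\delta_t\) with mean zero because \(E[\delta_t]=0\) and \(\delta_t\) is independent of \((x_t,y_t)\). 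Hence
\[
D_{t+1}=E[P_tD_tP_t]+E[v_t\otimes v_t]\preccurlyeq E[P_tD_tP_t]+\gamma^2\sigma^2\Sigma,
\]
where the last step uses \eqref{eq:SigmaAssumption}.

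Next, we prove by induction two bounds valid for \(0\le t\le 3m\): \(D_t\preccurlyeq \gamma\sigma^2 I\) and \(D_t\preccurlyeq \gamma^2\sigma^2 t\,\Sigma\).

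For the first bound, if \(D_t\preccurlyeq\gamma\sigma^2 I\), then \eqref{eq:EP_t^2Bound} gives
\[
E[P_tD_tP_t]\preccurlyeq\gamma\sigma^2E[P_t^2]\preccurlyeq\gamma\sigma^2A_t\preccurlyeq\gamma\sigma^2(I-\gamma\Sigma),
\]
using \(1-\gamma\lambda_t\le1\) and \(A_t\preccurlyeq I-\gamma\Sigma\). Adding \(\gamma^2\sigma^2\Sigma\) yields \(D_{t+1}\preccurlyeq\gamma\sigma^2 I\).

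For the second bound, apply \eqref{eq:expectationP_tNP_t} with \(\xi=\gamma\sigma^2\) and \(\xi'=\gamma^2\sigma^2t\). Since \(1-\gamma\lambda_t\le 1\), this gives \(E[P_tD_tP_t]\preccurlyeq(\gamma^2\sigma^2t+\gamma^2\sigma^2)\Sigma\). Adding \(\gamma^2\sigma^2\Sigma\) produces \(\gamma^2\sigma^2(t+2)\Sigma\), which is slightly too large. We therefore sharpen the step by not discarding the factor \((1-\gamma\lambda_t)\), or else run the induction with \(D_t\preccurlyeq 2\gamma^2\sigma^2 t\,\Sigma\). In the latter version, \eqref{eq:expectationP_tNP_t} gives \(2\gamma^2\sigma^2t+\gamma^2\sigma^2\) for the first term, and adding \(\gamma^2\sigma^2\) yields \(2\gamma^2\sigma^2(t+1)\), which closes the induction. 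So the working hypotheses are \(D_t\preccurlyeq\gamma\sigma^2 I\) and \(D_t\preccurlyeq 2\gamma^2\sigma^2 t\,\Sigma\), and both propagate for all \(t\ge0\).

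Finally, for \(t\le 3m\) we have \(D_t\preccurlyeq\gamma\sigma^2 I\) and \(D_t\preccurlyeq 6\gamma^2\sigma^2 m\,\Sigma\). The two majorants commute, so Lemma~\ref{le:CombiningOperatorInequalities} gives
\[
D_t\preccurlyeq 2\,\frac{\gamma\sigma^2\cdot6\gamma^2\sigma^2m\,\Sigma}{\gamma\sigma^2+6\gamma^2\sigma^2m\,\Sigma}=2\gamma\sigma^2\Sigma\left(\Sigma+\frac{1}{6\gamma m}\right)^{-1},
\]
which is exactly the claim.

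The main obstacle is keeping the constants tight in the \(\Sigma\)-bound so that the combination yields precisely \(1/(6\gamma m)\) and the factor \(2\). The choice \(2\gamma^2\sigma^2t\,\Sigma\) paired with \(\gamma\sigma^2 I\) gives exactly this. A secondary point is the rigorous vanishing of the cross term, which follows from \(E[\delta_t]=0\), the independence of \(\delta_t\) from \((x_t,y_t)\), and \(E[v_t]=0\) via \eqref{eq:NormalEq}.
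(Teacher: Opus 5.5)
Your proof is correct and matches the paper's argument: the same recursion $E[\delta_{t+1}\otimes\delta_{t+1}]\preccurlyeq E[P_tC_tP_t]+\gamma^2\sigma^2\Sigma$, the same induction hypotheses $\gamma\sigma^2 I$ and $2\gamma^2\sigma^2 t\,\Sigma$ (closed via \eqref{eq:EP_t^2Bound} and \eqref{eq:expectationP_tNP_t} with $\xi=\gamma\sigma^2$, $\xi'=2\gamma^2\sigma^2 t$), and the same final combination via Lemma~\ref{le:CombiningOperatorInequalities}. Drop the aside about keeping the factor $(1-\gamma\lambda_t)$: it cannot rescue the hypothesis $\gamma^2\sigma^2 t\,\Sigma$, since $\lambda_t=0$ for $t\ge m$. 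The factor-$2$ version you settle on is the right one.
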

\begin{proof}
We show by induction on \(t\geq0\) that   \(C_{t}\preccurlyeq \gamma\sigma^{2}I\) and \(C_{t}\preccurlyeq2\gamma^{2}\sigma^{2}t\Sigma\), where \(C_{t}=E[\delta_{t}\otimes \delta_{t}]\).  The base case \(t=0\)   is trivial. Assume the induction hypothesis holds for  \(t\).  
 Equation~\eqref{eq:RecDelta} can be rewritten as\begin{equation}\label{eq:deltaRecWithPt}
\delta_{t+1} = P_{t}\delta_{t} + v_{t}.
\end{equation} As \(\delta_{t}\) is independent of  \((P_{t}, v_{t})\) and \(E[\delta_{t}]=0\), we have \(E[(P_{t}\delta_{t})\otimes v_{t}]=0\). 
Consequently, \begin{eqnarray*}
C_{t+1}&=& E[(P_{t}\delta_{t})\otimes(P_{t}\delta_{t})]+E[v_{t}\otimes v_{t}]\\&\preccurlyeq& E[P_{t}(\delta_{t}\otimes\delta_{t})P_{t}]+\gamma^{2} \sigma^{2}\Sigma\\
&\preccurlyeq&E[P_{t}C_{t}P_{t}]+\gamma^{2} \sigma^{2}\Sigma,
\end{eqnarray*}
where the second equation follows from  \eqref{eq:SigmaAssumption}, and the third   the independence of \(P_{t}\) and \(\delta_{t}\). As \(C_{t}\preccurlyeq \gamma\sigma^{2}I\) by the induction hypothesis, it follows that  \begin{eqnarray*}
C_{t+1}&\preccurlyeq& \gamma\sigma^{2}E[P_{t}^{2}]+\gamma^{2} \sigma^{2}\Sigma\\&\preccurlyeq& \gamma\sigma^{2}(I - \gamma\Sigma)+\gamma^{2} \sigma^{2}\Sigma=\gamma\sigma^{2}I, \end{eqnarray*}
where the second equation follows from \eqref{eq:EP_t^2Bound}.
Similarly, using the  induction hypothesis and applying \eqref{eq:expectationP_tNP_t} to \(C_{t}\) with \(\xi=\gamma\sigma^{2}\) and  \(\xi'=2\gamma^{2}\sigma^{2}t\), we obtain   \begin{equation*}
C_{t+1}\preccurlyeq (2\gamma^{2}\sigma^{2}t+\gamma^{2} \sigma^{2})\Sigma+\gamma^{2} \sigma^{2}\Sigma=2\gamma^{2}\sigma^{2}(t+1)\Sigma.\end{equation*}  Thus, the induction hypothesis  holds for \(t+1\). We conclude that    \(C_{t}\preccurlyeq \gamma\sigma^{2}I\)  and \(C_{t}\preccurlyeq6\gamma^{2}\sigma^{2}m\Sigma\)    for \(0\leq t\leq3m\). Using Lemma \ref{le:CombiningOperatorInequalities} concludes the proof. 
\end{proof}

\begin{lemma}\label{le:sumCov}We have \begin{displaymath}
E\left[\left\|\sum_{j = 2m}^{3m-1 } \delta_{j}\right\|^{2}_{\Sigma}\right]\leq 2\gamma^{-1}\sum^{3m-1}_{j=2m}E[||\delta_{j}||^{2}_{I-A_{m}^{m}}].
\end{displaymath} \end{lemma}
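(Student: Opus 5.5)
The plan is to expand the square and use the fact that, after time \(m\), the noise process is a martingale-type recursion whose conditional mean contracts by \(A_m\). For \(t\ge m\) we have \(\lambda_t=0\), so \eqref{eq:deltaRecWithPt} reads \(\delta_{t+1}=P_t\delta_t+v_t\). Here \((P_t,v_t)\) is independent of \(\mathcal F_t:=\sigma((x_s,y_s)_{s<t})\), with \(E[P_t]=A_m=I-\gamma\Sigma\) and \(E[v_t]=0\).

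\textbf{Step 1 (conditional means).} I will show by induction on \(j\) that for \(m\le i\le j\),
\[
E[\delta_j\mid\mathcal F_i]=A_m^{\,j-i}\delta_i .
\]
Indeed, \(E[\delta_{j+1}\mid\mathcal F_j]=A_m\delta_j\), and the tower property gives the general case. Since \(\delta_i\) is \(\mathcal F_i\)-measurable and \(\Sigma\) is deterministic, this yields, for \(2m\le i\le j\),
\[
E\langle \delta_i,\Sigma\delta_j\rangle=E\langle\delta_i,\Sigma A_m^{\,j-i}\delta_i\rangle .
\]
Square-integrability of all \(\delta_t\) follows from Lemma~\ref{le:Delta_tDelta_tTBound}, so all these expectations are finite.

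\textbf{Step 2 (expansion).} Write
\[
E\Bigl\|\sum_{j=2m}^{3m-1}\delta_j\Bigr\|_\Sigma^2=\sum_{i}E\|\delta_i\|_\Sigma^2+2\sum_{2m\le i<j\le 3m-1}E\langle\delta_i,\Sigma\delta_j\rangle .
\]
By Step 1, this is at most
\[
2\sum_{i=2m}^{3m-1}E\Bigl\langle\delta_i,\Bigl(\sum_{k=0}^{3m-1-i}\Sigma A_m^{k}\Bigr)\delta_i\Bigr\rangle ,
\]
where we drop the negative contribution \(-\sum_i E\|\delta_i\|^2_\Sigma\) (or simply bound the diagonal term by twice itself).

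\textbf{Step 3 (operator sum).} By Lemma~\ref{le:expectationP_tNP_t}, \(0\preccurlyeq A_m\preccurlyeq I\). Since \(A_m\) and \(\Sigma\) share an eigenbasis, each \(\Sigma A_m^k\) is positive semidefinite. Because \(3m-1-i\le m-1\), we get
\[
\sum_{k=0}^{3m-1-i}\Sigma A_m^k\preccurlyeq\sum_{k=0}^{m-1}\Sigma A_m^k=\gamma^{-1}(I-A_m)\sum_{k=0}^{m-1}A_m^k=\gamma^{-1}(I-A_m^m).
\]
Here we used \(\Sigma=\gamma^{-1}(I-A_m)\) and the telescoping identity. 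Plugging this into Step 2 gives the claimed bound \(2\gamma^{-1}\sum_j E\|\delta_j\|^2_{I-A_m^m}\).

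The only delicate point is Step 1, namely justifying the conditioning in the Hilbert-space setting. I would handle it by conditioning on the i.i.d. samples and using independence of \((x_t,y_t)\) from \(\delta_t\), exactly as \(E[\delta_t]=0\) was derived earlier. The remaining steps are diagonalization arguments of the kind used throughout the paper.
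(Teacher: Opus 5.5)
Your proposal is correct and follows essentially the same route as the paper: expand the square and bound it by twice the sum over pairs $j\le t$, use the recursion $\delta_{t+1}=P_t\delta_t+v_t$ to get $E[\delta_t\mid\cdot\,]=A_m^{t-j}\delta_j$, and sum the geometric series to reach $\gamma^{-1}(I-A_m^{3m-j})\preccurlyeq\gamma^{-1}(I-A_m^{m})$. The only differences are cosmetic. You condition on the natural filtration $\mathcal F_i$ rather than on $\delta_j$ alone, which is slightly cleaner. You also bound the truncated sum by adding the nonnegative terms $\Sigma A_m^k$, whereas the paper invokes $A_m^m\preccurlyeq A_m^{3m-j}$; the two are equivalent.
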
\begin{proof}We have\begin{eqnarray*}
E\left[\left\|\sum_{j = 2m}^{3m-1 } \delta_{j}\right\|^{2}_{\Sigma}\right]&=&E\left[\sum^{3m-1}_{j=2m}||\delta_{j}||^{2}_{\Sigma}+2\sum^{3m-1}_{j=2m}\sum^{3m-1}_{t=j+1}\langle\delta_{j},\Sigma\delta_{t}\rangle\right]\\
&\le&2\sum^{3m-1}_{j=2m}\sum^{3m-1}_{t=j}E[\langle\delta_{j},\Sigma\delta_{t}\rangle].\end{eqnarray*}On the other hand, by \eqref{eq:deltaRecWithPt},
for  \(2m\leq j\leq t\),     \begin{equation*}
E[\delta_{t+1}|\delta_{j}]=E[P_{t}\delta_{t}|\delta_{j}]+E[v_{t}|\delta_{j}]=E[P_{t}|\delta_{j}]E[\delta_{t}|\delta_{j}],
\end{equation*}
where the second equality follows from the facts  \(P_{t}\) and \(\delta_{t}\) are independent, conditional on \(\delta_{j}\),   and  \(v_{t}\) is independent of \(\delta_{j}\) and has zero mean. Because   \(P_{t}\) and \(\delta_{j}\) are independent, we have \(E[P_{t}|\delta_{j}]=E[P_{t}]=A_{m}\).   Thus, \(E[\delta_{t+1}|\delta_{j}]=A_{m}E[\delta_{t}|\delta_{j}]\). 
It follows by induction  that \(E[\delta_{t}|\delta_{j}]=A_{m}^{t-j}\delta_{j}\) for \(2m\leq j\leq t\),
which implies that \begin{displaymath}
E[\langle\delta_{j},\Sigma\delta_{t}\rangle|\delta_{j}]=\langle\delta_{j},\Sigma E[\delta_{t}|\delta_{j}]\rangle=\langle\delta_{j},\Sigma A_{m}^{t-j}\delta_{j}\rangle.
\end{displaymath}Taking expectations yields\begin{equation*}
E[\langle\delta_{j},\Sigma\delta_{t}\rangle]=E[\langle\delta_{j},\Sigma A_{m}^{t-j}\delta_{j}\rangle].
\end{equation*}Summing over \(t\) implies that, for   \(j\geq2m\), \begin{equation*}\sum^{3m-1}_{t=j}E[\langle\delta_{j},\Sigma\delta_{t}\rangle]=\gamma^{-1} E[\langle\delta_{j},(I-A_{m}^{3m-j})\delta_{j}\rangle]\leq\gamma^{-1}E[||\delta_{j}||^{2}_{I-A_{m}^{m}}],
\end{equation*}
where the first equation follows from the identity \(\sum^{3m-1}_{t=j}\Sigma A_{m}^{t-j}=\gamma^{-1} (I-A_{m}^{3m-j}\)), and the second from the inequality \(A_{m}^{m}\preccurlyeq A_{m}^{3m-j}\), which is a consequence of   \eqref{eq:EP_t^2Bound}.   
\end{proof}

 We now prove Lemma \ref{le:BiasVarDecomp}. Let \(j\in[2m,3m-1]\). Because \(\tr[u\otimes v]=\langle u,v\rangle\) for \(u,v\in\mathcal{H}\),  \begin{displaymath}
E[||\delta_{j}||^{2}_{I-A_{m}^{m}}]\leq E\left[\tr\left[(I-A_{m}^{m})\delta_{j}\otimes\delta_{j}\right]\right].
\end{displaymath}On the other hand, by \eqref{eq:PolyOpInequality}, \begin{equation*}
 I-A_{m}^{m}\preccurlyeq2 \Sigma \left( \Sigma+\frac{1}{\gamma m}\right)^{-1}.
\end{equation*}Together with Lemma \ref{le:Delta_tDelta_tTBound}, we conclude that \begin{displaymath}
E[||\delta_{j}||^{2}_{I-A_{m}^{m}}]\leq 4\gamma\sigma^{2}\tr\left[ \Sigma^{2} \left( \Sigma+\frac{1}{\gamma m}\right)^{-1}\left(\Sigma+\frac{1}{6\gamma m}\right)^{-1}\right]\leq4\gamma\sigma^{2} \tr\left[ \Sigma^{2} \left( \Sigma+\frac{1}{4\gamma m}\right)^{-2}\right],
\end{displaymath} where the second equation follows from the inequality \((z+1)(z+1/6)\geq(z+1/4)^{2}\) for \(z\geq 0\). Using Lemma~\ref{le:sumCov} completes the proof.
\section{Proof of Theorem \ref{th:Main}}Using \eqref{eq:Eta_tTheta*Diff} and  \((\Sigma+\Lambda)^{-1}\preccurlyeq\Lambda^{-1}I\) implies that \(||E[w_{t}]-\theta^{*}||_{\Sigma}\leq2||A_{m}^{t-m}\theta^{*}||_{\Sigma}\) for \(t\geq m\). As \(0\preccurlyeq A_{m}\preccurlyeq I\), it follows that  \(||E[w_{t}]-\theta^{*}||_{\Sigma}\leq2||A_{m}^{m}\theta^{*}||_{\Sigma}\) for \(t\geq 2m\). By the convexity of the function \(||.||^{2}_{\Sigma}\), this implies \eqref{eq:BiasOfAvgBound}. Moreover, a calculation similar to that in the proof of Lemma~\ref{le:sumCov}, using \eqref{eq:recEps} in place of  \eqref{eq:deltaRecWithPt}, shows that \begin{displaymath}
E\left[\left\|\sum_{j = 2m}^{3m-1 }\epsilon_{j}\right\|^{2}_{\Sigma}\right]\leq 2\gamma^{-1}\sum^{3m-1}_{j=2m}E[||\epsilon_{j}||^{2}_{I-A_{m}^{m}}].
\end{displaymath}Together with \eqref{eq:EpsilonSecondMomentComb} and the inequality \(A_{m}^{2m}\preccurlyeq A_{m}^{m}\), this yields\begin{displaymath}
E\left[\left\|\bar\epsilon_{2m:3m}\right\|^{2}_{\Sigma}\right]\leq 64\gamma R^{2}\Lambda^{2}\tilde\Lambda ||(\Sigma+\Lambda)^{-1}(\Sigma+\tilde\Lambda)^{-1/2}\theta^*||_{\Sigma}^{2}.
\end{displaymath}
Equation \eqref{eq:VarOfAvgBound} now follows by combining this bound with Lemma~\ref{le:BiasVarDecomp} and the relation 
\begin{displaymath}\var _{\Sigma}(\bar\theta_{2m:3m})=E\left[\left\|\bar\epsilon_{2m:3m}+\bar\delta_{2m:3m}\right\|^{2}_{\Sigma }\right]\leq 2E\left[\left\|\bar\epsilon_{2m:3m}\right\|^{2}_{\Sigma }\right]+2E\left[\left\|\bar\delta_{2m:3m}\right\|^{2}_{\Sigma }\right].
\end{displaymath}
\section{Proof of Lemma \ref{le:OperatorCvxSq}}
We first prove the following.  
\begin{lemma}\label{le:detOpIneq}Let \(A\) and \(B\) be self-adjoint operators on \(\mathcal{H}\) such that  $0\preccurlyeq B$ and   $0\preccurlyeq I-B\preccurlyeq A\preccurlyeq I$.  Then \(ABA\preccurlyeq4 B\). 
\end{lemma}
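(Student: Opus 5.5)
The plan is to split \(A\) into a part that commutes with \(B\) and a part that is dominated by \(B\), and then use the elementary operator inequality \((X+Y)B(X+Y)\preccurlyeq 2XBX+2YBY\). This inequality holds for self-adjoint \(X,Y\) and \(0\preccurlyeq B\): expanding shows that \(2XBX+2YBY-(X+Y)B(X+Y)=(X-Y)B(X-Y)\), and \((X-Y)B(X-Y)\succcurlyeq0\) because \(\langle u,(X-Y)B(X-Y)u\rangle=\|(X-Y)u\|_{B}^{2}\geq0\).

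\textbf{Decomposition.} Set \(X=I-B\) and \(C=A-(I-B)\), so that \(A=X+C\). The hypotheses give the following facts.
\begin{itemize}
\item \(0\preccurlyeq C\), from \(I-B\preccurlyeq A\).
\item \(C\preccurlyeq B\), from \(A\preccurlyeq I\).
\item \(B\preccurlyeq I\), from \(0\preccurlyeq I-B\).
\end{itemize}
Hence \(0\preccurlyeq B\preccurlyeq I\) and \(0\preccurlyeq C\preccurlyeq B\preccurlyeq I\). Applying the inequality above with \(Y=C\) yields
\[
ABA\preccurlyeq 2(I-B)B(I-B)+2CBC .
\]

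\textbf{First term.} Since \(I-B\) and \(B\) commute and share an orthonormal eigenbasis, the operator \((I-B)B(I-B)\) equals \(B(I-B)^{2}\). Because \(0\preccurlyeq I-B\preccurlyeq I\), a diagonalization argument gives \(B(I-B)^{2}\preccurlyeq B\). If one wants to avoid assuming that \(B\) is diagonalizable in an orthonormal basis, the same conclusion follows from the continuous functional calculus applied to \(z\mapsto z(1-z)^{2}\leq z\) on \([0,1]\).

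\textbf{Second term.} This is the only step where the noncommutativity of \(C\) and \(B\) matters, so it is the step I expect to need the most care. I would argue in three moves.
\begin{enumerate}
\item Since \(B\preccurlyeq I\), conjugation by the self-adjoint \(C\) gives \(CBC\preccurlyeq C^{2}\).
\item Since \(0\preccurlyeq C\preccurlyeq I\), we have \(C^{2}=C^{1/2}CC^{1/2}\preccurlyeq C^{1/2}IC^{1/2}=C\).
\item Finally \(C\preccurlyeq B\).
\end{enumerate}
Together these give \(CBC\preccurlyeq B\), which avoids any need for operator monotonicity of squaring.

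Combining the two bounds gives \(ABA\preccurlyeq 2B+2B=4B\), which is the claim of Lemma~\ref{le:detOpIneq}.
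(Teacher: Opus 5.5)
Your proof is correct, and it uses the same mechanism as the paper. The paper's identity \(2B+2(I-A)B(I-A)-ABA=(2I-A)B(2I-A)\) is exactly your inequality \((X+Y)B(X+Y)\preccurlyeq 2XBX+2YBY\) with \(X=I\) and \(Y=A-I\). The paper then bounds the remainder by the same three-step chain you use: \((I-A)B(I-A)\preccurlyeq (I-A)^{2}\preccurlyeq I-A\preccurlyeq B\). This relies on \(B\preccurlyeq I\), \(0\preccurlyeq I-A\preccurlyeq I\) and \(I-A\preccurlyeq B\), which are the analogues of your facts \(0\preccurlyeq C\preccurlyeq B\preccurlyeq I\).

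The only real difference is where you cut \(A\). You write \(A=(I-B)+C\) with \(C=A-(I-B)\), whereas the paper writes \(A=I+(A-I)\). The paper's split is slightly more economical: its first piece is \(IBI=B\) outright, so it needs no extra step such as your \(B(I-B)^{2}\preccurlyeq B\). Your split gains nothing in return, since the constant is \(4\) either way, but it costs only that one additional functional-calculus step.
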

\begin{proof}Since $0\preccurlyeq B$, we have \(0\preccurlyeq(2I - A) B (2I - A)\). As \begin{displaymath}
2 B+2(I-A)B(I-A)-ABA=(2I - A) B (2I - A),
\end{displaymath}this yields\begin{equation*}
ABA\preccurlyeq2 B+2(I-A)B(I-A).
\end{equation*}
Moreover,\begin{displaymath}
(I-A)B(I-A)\preccurlyeq(I-A)^{2}\preccurlyeq I-A\preccurlyeq B,
\end{displaymath} where the first inequality follows from  \(B\preccurlyeq I\) and the second from \(0\preccurlyeq A\preccurlyeq I\). This completes the proof.
\end{proof}
We now prove Lemma \ref{le:OperatorCvxSq}. 
Set \(P= E[(U+I)^{-1}]\).
Since the map $X \mapsto (X+I)^{-1}$ is operator convex on self-adjoint positive semidefinite operators (see, e.g. \cite[Lemma 8]{mourtada2022elementary}), by Jensen's operator inequality,
\begin{displaymath}
(V+I)^{-1} = (E[U+I])^{-1} \preccurlyeq E[(U+I)^{-1}] = P.
\end{displaymath}
Since $0\preccurlyeq U $, we have $0 \preccurlyeq (U+I)^{-1} \preccurlyeq I$ and therefore
\(
(V+I)^{-1} \preccurlyeq P \preccurlyeq I
\).
This implies that   $0\preccurlyeq B$  and   $0\preccurlyeq I-B\preccurlyeq A\preccurlyeq I$, where  $B = I - (V + I)^{-1}$ and \(A = (V + I)^{-1/2} P^{-1} (V + I)^{-1/2}\).  By Lemma \ref{le:detOpIneq}, it follows that  \(B\preccurlyeq4A^{-1} BA^{-1}\). After some simplifications, we conclude that \(
V (V+I)^{-2} \preccurlyeq4PVP\). Moreover, because \(0\preccurlyeq V\), \begin{equation*}
0\preccurlyeq ((U+I)^{-1}-P)V((U+I)^{-1}-P) .
\end{equation*}Taking expectations implies that\begin{equation*}
P V P\preccurlyeq E[(U+I)^{-1} V (U+I)^{-1}] ,
\end{equation*}
which completes the proof.
\bibliography{poly}
\end{document}

%% file: alpha=1,25s=1.tex
\begin{tikzpicture}
\begin{axis}[
    width=\linewidth,
    height=0.85\linewidth,
    xlabel={$n$},
    xmode=log,
    ylabel={Test error},
    ymode=log,
]
\addplot[mark=*,color=blue] coordinates {
    (150,0.00295) (300,0.00118) (600,0.000495) (1200,0.000204)
    (2400,8.73e-05) (4800,3.71e-05) (9600,1.55e-05)
};

\addplot[mark=square*,color=red] coordinates {
    (150,0.0027) (300,0.000861) (600,0.000272) (1200,8.27e-05)
    (2400,2.5e-05) (4800,7.43e-06) (9600,2.22e-06)
};

\addplot[mark=triangle*,color=brown] coordinates {
    (150,0.00375) (300,0.00143) (600,0.000569) (1200,0.000227)
    (2400,9.33e-05) (4800,3.88e-05) (9600,1.6e-05)
};

\addplot[mark=diamond*,color=black] coordinates {
    (150,0.00317) (300,0.000985) (600,0.0003) (1200,8.97e-05)
    (2400,2.64e-05) (4800,7.63e-06) (9600,2.19e-06)
};

\addplot[mark=pentagon*,color=violet] coordinates {
    (150,0.000363) (300,6.42e-05) (600,9.59e-06) (1200,9.61e-07)
};
\end{axis}
\end{tikzpicture}

%% file: alpha=1,25s=2.tex
\begin{tikzpicture}
\begin{axis}[
    width=\linewidth,
    height=0.85\linewidth,
    xlabel={$n$},
    xmode=log,
    ymode=log,
]
\addplot[mark=*,color=blue] coordinates {
    (150,0.00142) (300,0.000629) (600,0.000283) (1200,0.000123)
    (2400,5.59e-05) (4800,2.42e-05) (9600,1.05e-05)
};

\addplot[mark=square*,color=red] coordinates {
    (150,0.000439) (300,0.000118) (600,3.18e-05) (1200,8.28e-06)
    (2400,2.12e-06) (4800,5.16e-07) (9600,1.21e-07)
};

\addplot[mark=triangle*,color=brown] coordinates {
    (150,0.0017) (300,0.000729) (600,0.000317) (1200,0.000134)
    (2400,5.83e-05) (4800,2.5e-05) (9600,1.07e-05)
};

\addplot[mark=diamond*,color=black] coordinates {
    (150,0.000662) (300,0.000176) (600,4.71e-05) (1200,1.23e-05)
    (2400,3.09e-06) (4800,7.42e-07) (9600,1.77e-07)
};

\addplot[mark=pentagon*,color=violet] coordinates {
    (150,7.57e-05) (300,9.93e-06) (600,1.09e-06) (1200,7.51e-08)
};
\end{axis}
\end{tikzpicture}

%% file: alpha=2s=2.tex
\begin{tikzpicture}
\begin{axis}[
    width=\linewidth,
    height=0.85\linewidth,
    xlabel={$n$},
    xmode=log,
    ymode=log
]

\addplot[mark=*,color=blue] coordinates {
    (150,0.000253)
    (300,8.33e-05)
    (600,2.93e-05)
    (1200,9.85e-06)
    (2400,3.60e-06)
    (4800,1.20e-06)
    (9600,4.43e-07)
};

\addplot[mark=square*,color=red] coordinates {
    (150,1.01e-04)
    (300,2.04e-05)
    (600,4.09e-06)
    (1200,8.09e-07)
    (2400,1.58e-07)
    (4800,2.93e-08)
    (9600,6.53e-09)
};

\addplot[mark=triangle*,color=brown] coordinates {
    (150,3.25e-04)
    (300,1.03e-04)
    (600,3.60e-05)
    (1200,1.16e-05)
    (2400,4.10e-06)
    (4800,1.37e-06)
    (9600,5.20e-07)
};

\addplot[mark=diamond*,color=black] coordinates {
    (150,1.35e-04)
    (300,2.73e-05)
    (600,5.54e-06)
    (1200,1.10e-06)
    (2400,2.19e-07)
    (4800,4.17e-08)
    (9600,8.48e-09)
};

\addplot[mark=pentagon*,color=violet] coordinates {
    (150,3.61e-08)
    (300,2.04e-09)
    (600,1.02e-10)
    (1200,3.35e-12)
};
\end{axis}
\end{tikzpicture}

%% file: house_8LExponentialavgTesterror_vs_n.tex
\begin{tikzpicture}
\begin{axis}[
width=\linewidth,
height=0.85\linewidth,
xlabel={$n$},
ylabel={Test error},
legend style={
    at={(0.5,-0.5)},
    anchor=north,
    legend columns=1,
    font=\scriptsize
}
]
\addplot+[mark=*] coordinates {(150,1634307300.1135116) (300,1460133696.3479493) (600,1353126979.4484568) (1200,1272677686.5200672) (2400,1200758381.6354682) };
\addlegendentry{Tail-averaged SGD}
\addplot+[mark=square*] coordinates {(150,1670417504.1682422) (300,1484690996.691344) (600,1365547151.5068338) (1200,1279486113.5488157) (2400,1207198211.635966) };
\addlegendentry{SGDIR}
\addplot+[mark=triangle*] coordinates {(150,1538200238.2246997) (300,1395105853.0767052) (600,1291204080.2957778) (1200,1206040693.2154422) (2400,1131796811.4581115) };
\addlegendentry{KRR}
\end{axis}
\end{tikzpicture}

%% file: elevatorsExponentialavgTesterror_vs_n.tex
\begin{tikzpicture}
\begin{axis}[
width=\linewidth,
height=0.85\linewidth,
xlabel={$n$},
legend style={
    at={(0.5,-0.5)},
    anchor=north,
   legend columns=1,
    font=\scriptsize
}
]
\addplot+[mark=*] coordinates {(150,2.5955766625508665e-05) (300,2.0184411070971206e-05) (600,1.52453618511356e-05) (1200,1.1343850324135988e-05) (2400,9.274074102920346e-06) };
\addlegendentry{Tail-averaged SGD}
\addplot+[mark=square*] coordinates {(150,2.7240534285034384e-05) (300,2.1674426725187944e-05) (600,1.669255957017858e-05) (1200,1.240697770250107e-05) (2400,9.863008328434305e-06) };
\addlegendentry{SGDIR}
\addplot+[mark=triangle*] coordinates {(150,1.977594875963484e-05) (300,1.4627119936186913e-05) (600,1.1347615418125091e-05) (1200,9.316467216605958e-06) (2400,8.110847237040741e-06) };
\addlegendentry{KRR}
\end{axis}
\end{tikzpicture}

%% file: year_prediction_msdExponentialavgTesterror_vs_n.tex
\begin{tikzpicture}
\begin{axis}[
width=\linewidth,
height=0.85\linewidth,
xlabel={$n$},
legend style={
    at={(0.5,-0.5)},
    anchor=north,
    legend columns=1,
    font=\scriptsize
}
]
\addplot+[mark=*] coordinates {(150,108.89238629285748) (300,102.70428533565718) (600,98.4539694775133) (1200,94.006117786641) (2400,91.18586583850168) };
\addlegendentry{Tail-averaged SGD}
\addplot+[mark=square*] coordinates {(150,109.7912542468842) (300,103.672443141404) (600,99.24559045168432) (1200,94.69691379858233) (2400,91.64892077668286) };
\addlegendentry{SGDIR}
\addplot+[mark=triangle*] coordinates {(150,105.12288458194287) (300,99.10554849894847) (600,94.76951460740118) (1200,90.75837552000296) (2400,88.57538463856828) };
\addlegendentry{KRR}
\end{axis}
\end{tikzpicture}